\documentclass[a4paper,fleqn]{cas-dc}
\usepackage[authoryear,longnamesfirst]{natbib}
\usepackage{amsmath,amssymb,amsthm}
\usepackage{graphicx}
\usepackage{booktabs}
\usepackage{xcolor}
\usepackage{url}
\usepackage{algorithm}
\usepackage{algpseudocode}
\usepackage{graphicx}
\usepackage{placeins}
\usepackage{balance}

\usepackage{tikz}
\usetikzlibrary{positioning}
\usepackage{hyperref}

\ExplSyntaxOn
\RenewDocumentCommand \printorcid { }
{
  \seq_if_empty:NF \g_stm_orcid_seq
  {
    \group_begin:
      \tex_let:D \thefootnote \relax \footnotetext
      {
        \raggedright
        \textsc{orcid}(s): \c_space_token
        \seq_use:Nn \g_stm_orcid_seq { ;~ }
      }
    \group_end:
  }
}
\ExplSyntaxOff

\AtBeginDocument{%
\setcounter{topnumber}{4}%
\setcounter{dbltopnumber}{3}%
\setcounter{totalnumber}{6}%
}

\AtBeginDocument{%
\setlength{\dblfloatsep}{16pt plus 2pt minus 2pt}%
\setlength{\dbltextfloatsep}{18pt plus 2pt minus 3pt}%
\setlength{\belowcaptionskip}{8pt}%
}

\makeatletter
\AtBeginDocument{%
\setlength{\@fptop}{0pt}%
\setlength{\@fpsep}{0pt}%
\setlength{\@fpbot}{0pt}%
}
\makeatother

\theoremstyle{plain}
\newtheorem{theorem}{Theorem}
\newtheorem{lemma}[theorem]{Lemma}
\theoremstyle{definition}
\newtheorem{definition}{Definition}
\newtheorem{remark}{Remark}

\begin{document}
\let\WriteBookmarks\relax
\def\floatpagepagefraction{1}
\def\textpagefraction{.001}
\raggedbottom % avoid huge vertical stretches around floats

% Short title / short authors
\shorttitle{Decoupled Prune-and-Reconstruct with Double-Layer Sparsification}
\shortauthors{Gu et al.}

% =============================================================
% Front matter (CAS: NO \begin{frontmatter} environment -- that is
% elsarticle syntax. In cas-dc these elements are placed directly
% inside \begin{document}, before \maketitle.)
% =============================================================

\title[mode=title]{Decoupled Prune-and-Reconstruct: Training on a Compact Core
with Double-Layer Sparsification}

% First author
\author[2]{Siqi Gu}
\ead{25521840825@nynu.edu.cn}
\orcidauthor{0009-0008-7645-8081}{Siqi Gu}

% Credit authorship (ESWA / CAS requires these)
\credit{Conceptualization of this study, Methodology, Software, Formal analysis (Appendix A.3--A.4), Writing -- original draft}

% Second author
\author[1]{Gaozhen Ma}
\ead{u202411602@hust.edu.cn}
\orcidauthor{0009-0004-1230-2046}{Gaozhen Ma}
\credit{Methodology, Validation, Writing -- review and editing}

% Third author
\author[3]{Liheng Huang}
\ead{u202415012@hust.edu.cn}
\orcidauthor{0009-0009-4024-0109}{Liheng Huang}
\credit{Investigation, Validation}

% Fourth author
\author[1]{Weiqi He}
\ead{u202411845@hust.edu.cn}
\orcidauthor{0009-0001-4647-9681}{Weiqi He}
\credit{Investigation, Validation}

% Fifth author
\author[4]{Jiawen Luo}
\ead{u202615725@hust.edu.cn}
\orcidauthor{0009-0004-6542-5293}{Jiawen Luo}
\credit{Writing -- review and editing}

% Corresponding author
\author[1]{Jiaming Wu}%[<options>]
\cormark[1]
\ead{d202480622@hust.edu.cn}
\orcidauthor{0009-0002-1451-7177}{Jiaming Wu}
\credit{Supervision, Project administration, Writing -- review and editing}

% Second corresponding author (last position)
\author[1]{Lixin Zheng}
\cormark[1]
\ead{zhenglx@hust.edu.cn}
\orcidauthor{0009-0004-6829-1016}{Lixin Zheng}
\credit{Supervision, Project administration, Writing -- review and editing}

% Address / affiliation
\affiliation[1]{organization={School of Integrated Circuits, Huazhong University of Science and Technology},
            addressline={1037 Luoyu Road, Hongshan District},
            city={Wuhan},
            postcode={430074},
            state={Hubei},
            country={China}}

\affiliation[2]{organization={School of Physics and Electronic Engineering, Nanyang Normal University},
            addressline={Wolong District},
            city={Nanyang},
            postcode={473061},
            state={Henan},
            country={China}}

\affiliation[3]{organization={School of Architecture and Urban Planning, Huazhong University of Science and Technology},
            addressline={1037 Luoyu Road, Hongshan District},
            city={Wuhan},
            postcode={430074},
            state={Hubei},
            country={China}}

\affiliation[4]{organization={School of Mechanical Science and Engineering, Huazhong University of Science and Technology},
            addressline={1037 Luoyu Road, Hongshan District},
            city={Wuhan},
            postcode={430074},
            state={Hubei},
            country={China}}

% Corresponding author text
\cortext[1]{Corresponding authors: d202480622@hust.edu.cn (J. Wu); zhenglx@hust.edu.cn (L. Zheng).}

\begin{abstract}
On million-node graphs, the memory wall arrives before the compute wall: a GNN that never fits in VRAM cannot deliver the all-node inference its accuracy promises. The standard remedy, pruning, has always been an accuracy tax --- paid once and never refunded. DPR refuses this framing. Its central insight is that \emph{the mask that shrinks training can also scaffold recovery}: one compact node set bounds both computations of graph learning, serving as the encoder's training graph and, at inference, as the anchor from which a sparse structure-aware decoder rebuilds every pruned node (\emph{double-layer sparsification}). The order is inverted --- \emph{prune for efficiency, then reconstruct for accuracy} --- and the tax is repaid: the deployed $30\%$-budget model trains in $1{,}450$\,MiB of peak memory on one $6$\,GB GPU and recovers $96.9\%$ of its dense-reconstruction reference, beating a same-GPU rerun of the edge-sparsification baseline DSpar by $+1.36$\,pp at $50.1\%$ lower peak training memory; against the same-protocol full-batch no-prune encoder the deployed point pays a measured $2.64$\,pp accuracy tax for a $2.8\times$ smaller training footprint. Pruning without reconstruction leaves the encoder at $0.378$ within the same $40$-epoch window (Fig.~\ref{fig:curves}), and all-node serving runs $46\times$ faster than sampling-based GraphSAGE, giving up $\approx$$5.4$\,pp of test accuracy. The framework scales to a $2.45$M-node production graph at $6{,}285$\,MiB of peak training memory; extended variants (long-range context, external text features, a CAN-bus application) are reported as extensions and do not compete with the deployed method, and the $41.8\%$-budget M4 point, scored on a coarsened super-node graph with its own split, is reported separately rather than compared with the original-node baselines. DPR thus turns aggressive node pruning from an accuracy tax into a safe training accelerator for VRAM-limited all-node inference.
\end{abstract}

% Research highlights (ESWA requires 3--5 concise bullets, <= ~85 chars each)
\begin{highlights}
\item DPR-S retains $96.9\%$ of dense-reference accuracy on ogbn-arxiv at $30\%$ core.
\item Double-layer sparsification cuts main and reconstruction graphs.
\item Matched dense and MLP (extended-variant) controls reduce capacity confounds.
\item Training on a $6$\,GB card: $1450$\,MiB VRAM, $3$\,k samples/s, ${\sim}4.8$\,ms/batch CPU--GPU xfer.
\item Products $2.45$M: extended M4 at $6{,}285$\,MiB; DPR-D $16{,}100$ vs $23{,}079$\,MiB no-prune.
\end{highlights}

% Keywords (separated by \sep)
\begin{keywords}
Graph pruning \sep Graph neural networks \sep Reconstruction \sep Node pruning \sep Double-layer sparsification \sep VRAM-limited inference
\end{keywords}

% cas-dc bugfix: \MaketitleBox displays the collected keywords box via
% \hbox_to_wd:nn{\z@}{\box\g_stm_key_box} (cas-common.sty, Abstract env).
% The keywords box is a \vtop of width .25\textwidth = 123.62721pt, so that
% \hbox-to-0pt ALWAYS reports a spurious "Overfull \hbox (123.62721pt)"
% (\hfuzz does not suppress it). Re-wrap the collected box to width 0pt with
% trailing \hss: visually identical (left edge fixed, content extends right,
% vertical height/depth unchanged), but the display box now sets exactly.
\ExplSyntaxOn
\box_new:N \l_dpr_keyfix_box
\tex_global:D \tex_setbox:D \l_dpr_keyfix_box ~ = ~ \box_use:N \g_stm_key_box
\tex_global:D \tex_setbox:D \g_stm_key_box ~ = ~
  \tex_hbox:D to \c_zero_dim { \box_use:N \l_dpr_keyfix_box \tex_hss:D }
\ExplSyntaxOff

% Float-page pinning (review 2026-09-08: the embedded Graphical Abstract page
% was removed from the main PDF -- ESWA collects the GA as a separate file,
% figs/graphical-abstract.pdf, generated by _gen_ga.py). The pinning below is
% kept because it also governs the appendix float pages: top-aligned floats
% with bottom stretch.
\makeatletter
\setlength{\@fptop}{0pt}
\setlength{\@fpsep}{12pt}
\setlength{\@fpbot}{0pt plus 1fil}
\makeatother

\maketitle

% =============================================================
\section{Introduction}
\label{sec:intro}
% =============================================================
Graph Neural Networks (GNNs) \citep{gcn2017,graphsage2017,gat2018} are the de-facto backbone for learning on graph-structured data, but their cost grows with graph size: full-batch message passing over million-node graphs exceeds the memory of a single commodity GPU. \emph{Graph pruning / sparsification} is the standard remedy: shrink the graph so that training fits. Yet there is a hard trade-off: a standard GNN trained on an \emph{aggressively} pruned graph collapses --- pruned nodes lose their input features, and pruning destroys the connectivity that message passing relies on. On ogbn-arxiv, a plain GNN on such a pruned graph falls far below the fair no-prune baseline (Table~\ref{tab:main}, Fig.~\ref{fig:curves}), and this \emph{zero-fill, no-reconstruction floor} is not a classifier artifact, since the classifier still scores the full node set. Nearly all prior work asks how to prune so that accuracy does not drop, which pushes methods toward conservative selection that leaves most of the graph intact. Our central observation is the opposite: the collapse is not a fate but a \emph{deferred repair problem} --- what aggressive pruning destroys can be reconstructed at inference time, and the repair itself can be made cheap.

DPR therefore inverts the usual order: \emph{prune for efficiency, then reconstruct for accuracy}. A fixed-magnitude core mask compresses the graph into a compact core (a small minority of the nodes, an even smaller share of the edges); a GraphSAGE encoder trains only on this core, at a small fraction of the full-graph aggregation cost; and a sparse structure-aware decoder (DPR-S) reconstructs the pruned nodes' features from the original topology, restricted to the core's 1-hop neighborhood. The design's signature property, formalized in \S\ref{sec:double}, is \emph{double-layer sparsification}: the same core mask that shrinks the training graph also shrinks the reconstruction graph, so the second-layer sparsification is inherited from the pruning already performed rather than paid for separately. The resulting resource--accuracy tradeoff is quantifiable and measured end-to-end on one $6$\,GB GPU: at a $30\%$ node budget the deployed method trains in $1{,}450$\,MiB of peak memory and recovers $96.9\%$ of the dense-reconstruction reference, exceeding a same-card rerun of the edge sparsifier DSpar by $+1.36$\,pp at $50.1\%$ lower peak training memory (Fig.~\ref{fig:teaser}; Table~\ref{tab:main}) --- while pruning without reconstruction leaves the same encoder at $0.378$ over the same $40$-epoch window.

Throughout, \emph{the submitted method} means \textbf{DPR-S}; \textbf{DPR-D} is the dense reconstruction \emph{reference} (the evaluation yardstick); \textbf{DPR-S-M4} is an \emph{extension} that adds a precomputed long-range context row to the same decoder; and every other reported variant (the learnable pruner, the positional encoding, the dual-pruner research variant) is an \emph{optional variant studied in a controlled ablation}, not part of the deployed method. Seeds, hardware, and per-number provenance are tabulated in Appendix~\ref{app:data}.

% Figure roles: Fig. 1 = system view (page-3 top, above Sec. Method),
% Fig. 2 = two-stage mechanism (Method), Fig. 3 = efficiency teaser (Experiments).
% Declared here (inside the p.2 source flow) because a figure* in twocolumn
% mode can land no earlier than the page AFTER its declaration point.
\begin{figure*}[!t]
  \centering
  \includegraphics[width=\textwidth]{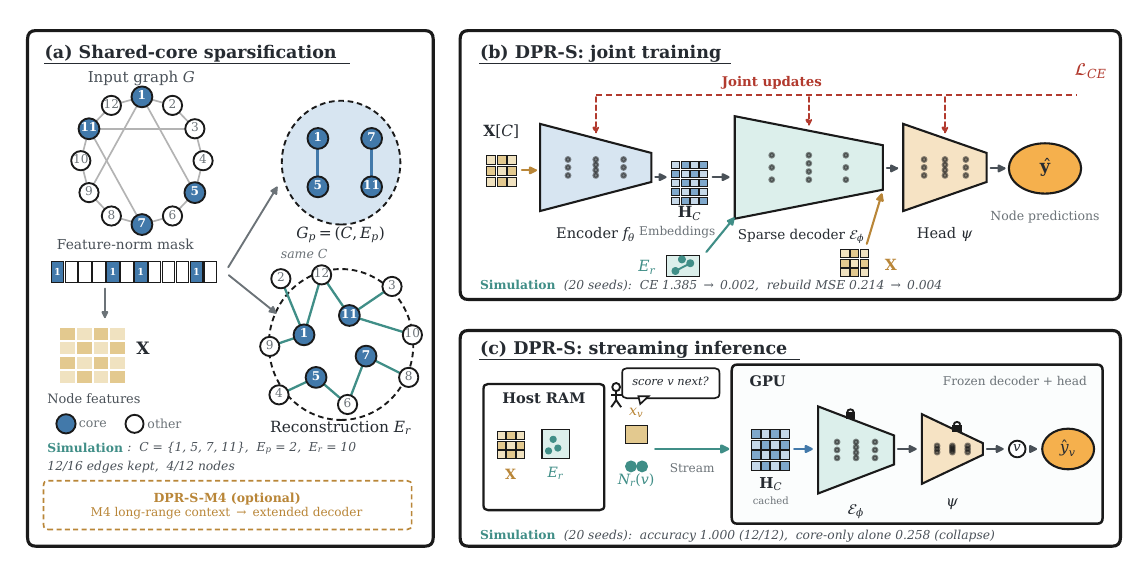}
  \caption{System overview. (a)~One feature-norm mask $C$ induces both
  sparsified graphs: the encoder graph $G_p=(C,E_p)$ keeps only core-internal
  edges, and the reconstruction graph $G_r=(V,E_r)$ with
  $E_r=E_{\text{core-1hop}}$ adds the core-to-1-hop links needed to rebuild
  deleted nodes. (b)~DPR-S joint training: the encoder $f_\theta$ runs on the
  core only ($\mathbf{X}[C]$ with $E_p$; a $30\%$-node, $8\%$-edge core on
  which a 3-layer GraphSAGE aggregates $\approx$$1/12$ of full); the sparse
  decoder $\mathcal{E}_\phi$ reconstructs pruned-node
  features within the core's 1-hop from $\mathbf{X}$ and $E_r$, and the
  reconstructed features feed the head; the CE loss
  over training nodes jointly updates $f_\theta,\mathcal{E}_\phi,\psi$ ---
  ``decoupled'' refers to a division of functional roles within one
  pipeline (\S\ref{sec:exp-abl}). The dashed box in (a) marks the optional extension
  DPR-S-M4, which feeds an $\mathrm{m4}_v$ long-range context row to an
  extended decoder. (c)~Streaming inference: $\mathbf{H}_C$ stays
  GPU-resident (cached after training), $\mathbf{X}$ and $G_r$ stay in host
  RAM, and each pruned node $v$ is scored by streaming its row
  $\mathbf{x}_v$ with its core-1-hop neighbours $N_r(v)$ through the frozen
  decoder and head to produce $\hat{y}_v$; the encoder is never
  re-run on the full graph.}
  \label{fig:architecture}
\end{figure*}

\noindent\textbf{Storyline.} The paper follows a single line --- \emph{aggressive pruning} $\rightarrow$ \emph{shared-core double-layer sparsification} $\rightarrow$ \emph{accurate reconstruction} $\rightarrow$ \emph{memory-efficient inference} --- and the main contributions are:
\begin{itemize}
  \item \textbf{Core method: DPR-S --- sparse structure-aware reconstruction under double-layer sparsification.} One mask sparsifies both layers: the core that compresses the training graph also bounds the reconstruction graph (\S\ref{sec:double}), so the second-layer sparsification needs no extra machinery and the reconstructing decoder runs $1.16$--$1.30\times$ faster. At the pure core-1hop operating point, DPR-S recovers $96.9\%$ of the dense reference while cutting reconstruction edges under 1-hop core localisation ($31.2\%$ on ogbn-arxiv, $39$--$45\%$ on the products probes; \S\ref{sec:exp-main}) and trains $2.7\times$ faster.
  \item \textbf{A decoupled prune-and-reconstruct framework with a pluggable core selector.} Its value is \emph{largely independent of core-selection sophistication}: a fixed-magnitude mask proves essentially equivalent to a learnable hyperbolic one (\S\ref{sec:exp-abl}), so the method inherits the robustness of a rule rather than the fragility of a learned selector.
  \item \textbf{Extended variants and application-scale validation (extensions, not part of the deployed method).} DPR-S-M4 reaches $97.8\%$ recovery on arxiv and serves as the deployment configuration of the Car-Hacking stress study; the framework further scales to a production ogbn-products graph whose coarsened training graph is trainable on a single commodity GPU via the streaming design of Algorithm~\ref{alg:dprs-final}. Every extended configuration is flagged where it appears.
\end{itemize}

\begin{remark}[Scope.]
Claims are separated by training regime. On graphs where the no-prune
encoder is trainable (arxiv-scale), we \emph{approach} same-backbone
full-graph accuracy, with a measured $2.64$\,pp gap rather than parity; on
VRAM-constrained large graphs, where no runnable no-prune baseline exists by
construction, the framework provides one practical route to full-graph-level
accuracy. The deployed method is therefore presented as a resource--accuracy
tradeoff --- a large VRAM reduction bought with a small, per-dataset quantified
gap --- rather than as accuracy parity. Efficiency numbers are reported as controlled, same-card
comparisons against rerun baselines (\S\ref{sec:related},
\S\ref{app:eff-attn}), and the deployed method places no architectural
constraint on the encoder, so the reconstruction layer --- not a
particular backbone --- carries the claim. See \S\ref{sec:conclusion}.
\end{remark}

% =============================================================
\section{Related work}
\label{sec:related}
% =============================================================
% Restored as a standalone section (ESWA convention: Sec. 2 "Related works"
% immediately after the Introduction, cf. published samples e.g.
% ESWA 223 (2023) 119872). \label{sec:related} kept so all
% \ref{sec:related} pointers resolve here.

\noindent\textbf{Edge-level sparsification.}
Industrial graph platforms routinely operate on graphs with millions to
billions of nodes (AliGraph~\citep{aligraph2019}, PinSage~\citep{pinsage2018}),
where exhaustive all-node scoring under a single-GPU memory ceiling is exactly
the regime that motivates pruning --- and in such deployments, edge-level
relief is often the first lever attempted, since edges dominate the
aggregation cost of message passing \citep{gilmer2017mpnn}. One line of work removes \emph{edges}
while keeping all nodes: DropEdge~\citep{dropedge2019} drops random edges as a
regularizer, spectral graph coarsening~\citep{coarsen2020} merges nodes into
super-nodes for cheaper training with an explicit eigenvalue error bound on
what the restriction forgets, and SGS-GNN~\citep{sgsgnn2025} learns an
edge-sampling distribution to shrink the graph. The closest method is
\textbf{DSpar}~\citep{dspar2023}, a degree-based edge sparsifier: it deletes
$27$--$95\%$ of edges ($26.7\%$ on ogbn-arxiv), retains \emph{every} node, and
discards the removed structure with no reconstruction, targeting faster GNN
\emph{training}. None of these methods shrinks the dominant $O(N^2)$
node-count term of an attention backbone, because the node set is untouched;
our node-level coarsening acts on that term directly (\S\ref{app:eff-attn}),
so the two cost regimes are complementary rather than competing.

\noindent\textbf{Node-level pruning without recovery.}
A second line prunes or subsamples \emph{nodes}. MVP~\citep{mvp2025} and
NCP~\citep{ncp2024} prune nodes from multiple views and use
reconstruction-like signals only to \emph{score} which nodes to discard; the
pruned nodes are then permanently dropped and never rebuilt at inference.
The distinction matters: in these pipelines the reconstruction objective is
an optimization device that shapes the pruning mask, and once the mask is
fixed the signal is discarded --- whereas in DPR the mask's own induced
structure is what makes inference-time recovery possible. SubInfer~\citep{subinfer2024}
trains and infers on node neighborhoods rather than the full graph, sharing
our ``small subgraph at inference'' spirit but without reconstructing the
features of explicitly pruned nodes. PruneGNN~\citep{prunegnn2024} compresses
the graph or its activations but stops at the main graph. Decomposition- and
expander-based accelerators are related in spirit: SDGNN~\citep{sdgnn2024}
approximates a trained GNN with a sparse surrogate for efficient inference,
FIT-GNN~\citep{fitgnn2025} trains on coarsened subgraphs, and
ExpanderGNN~\citep{egp2022} propagates messages over dense synthetic expander
substrates --- their ``expander'' is a propagation substrate, whereas our
decoder rebuilds pruned-node features from the original topology. All of
these leave pruning a \emph{lossy discard}: no method in this family recovers
pruned-node representations at inference, and none sparsifies a recovery
graph. The price of that discard is not hypothetical: naive subgraph
training on a pruned graph collapses (\S\ref{sec:exp-main}), which is
precisely the failure mode this family absorbs as permanent accuracy loss.

\noindent\textbf{Condensation and coarsening.}
Graph condensation compresses a large graph into a small \emph{synthetic} one
(GCond~\citep{jin2021gcond}) so that a model
can be trained cheaply; AGCond~\citep{agcond2026} adapts the
condensation ratio with structure-aware initialization. Crucially, the
condensed model is still \emph{evaluated on the full original graph}, so
condensation targets \emph{training} cost and leaves the deployment graph
intact; its bi-level optimization also does not scale to the million-node
graphs we target, since each inner loop back-propagates through a model
refreshed against the full original graph. A condensed core could in
principle replace our selector, but it neither recovers pruned nodes nor
sparsifies a reconstruction graph, so combining the two is a natural
extension. Coarformer~\citep{coarformer2022} runs a Transformer over a
coarsened graph without pruning or reconstructing nodes; structural position
encodings~\citep{psgnn2024} are orthogonal --- our PE is a pruning
by-product that the main method drops (App.~\ref{app:pe-control}).

\noindent\textbf{Feature recovery from structure.} Our premise --- that
pruned-node features can be rebuilt from the surviving topology, a premise
with classical roots in structure-only embeddings
\citep{perozzi2014deepwalk,grover2016node2vec} --- is
supported by~\citep{wang2023recover} (ICML 2023), which establishes that node
features are recoverable from graph connectivity alone, up to an explicit
error bound tying the reconstruction quality to the homophily of the
surrounding topology. That line studies inherently missing features; our
setting adds the \emph{deliberate} pruning of nodes and, crucially, the
sparsification of the reconstruction graph itself, both absent there. The
practical consequence of sparsifying the recovery graph is that recovery
quality becomes controllable: the same knob $c_r$ that sets the training
budget also bounds which neighborhoods the decoder may consult
(\S\ref{sec:double}), a coupling no condensation or missing-feature method
exhibits.

\noindent\textbf{Positioning.}
We instead prune \emph{nodes into a compact core} and \emph{reconstruct}
pruned-node features from the original topology at inference. Within the
literature we surveyed, DPR is the first framework that combines (i) the
train-prune / infer-reconstruct \emph{decoupling} --- a training-time
accelerator and an inference-time restorer sharing one trained model, so the
accuracy repair moves out of the pruning stage entirely; (ii)
\emph{recoverable} pruning --- pruned nodes are rebuilt rather than
discarded; and (iii) \emph{double-layer sparsification} (\S\ref{sec:double})
--- the reconstruction graph is itself sparsified by the same core mask,
whereas methods that reconstruct (EGP~\citep{egp2022},
SDGNN~\citep{sdgnn2024}) diffuse over the full or compressed topology and
methods that sparsify (DSpar~\citep{dspar2023}, PruneGNN~\citep{prunegnn2024})
stop at the main graph. Naive subgraph training on the pruned graph collapses
(\S\ref{sec:exp-main}), which is why recovery, not pruning alone, is the
load-bearing design.

% =============================================================
\section{Method}
\label{sec:method}
% =============================================================
We formalize the decoupled pipeline, then present the two stages, the
double-layer sparsification, and why reconstruction recovers accuracy;
Fig.~\ref{fig:architecture} gives the system view and the stage-by-stage
walkthrough. \emph{Reading guide (the method is one short path).} The
deployed method is a single pipeline --- fixed-magnitude mask $\to$ GNN
encoder on the compact core $\to$ sparse core-1hop decoder (DPR-S) ---
carrying one insight: \emph{prune hard at training time, recover at
inference time, and let the same mask sparsify both graphs}. The learnable
pruner, the positional encoding, and the long-range extension DPR-S-M4 are
\emph{optional variants}, each examined in one controlled ablation
(\S\ref{sec:exp-abl}); none is needed for the headline results.

\subsection{The Core Insight: One Node Set Bounds Both Layers}
\label{sec:insight}
Before the formal setup, we state the insight that organizes everything that
follows. DPR uses the \emph{one} core mask $C\subset V$ (formalized in
\S\ref{sec:setup}) to bound two computations at once. \emph{Encoding}: the
encoder trains on $G_p=(C,E_p)$, the graph sparsified to $C$.
\emph{Reconstruction}: the decoder rebuilds each pruned node only within the
core's 1-hop neighborhood, $E_r=E_{\text{core-1hop}}$ with
$E_p\subseteq E_r\subseteq E$. One mask, two sparsification layers; the
second layer is \emph{defined relative to the first} rather than cut
independently, so a single budget knob $cr$ sets both costs and the recovery
rate. \S\ref{sec:double} develops this into the paper's central claim; the
two stages that realize it follow.

\begin{figure*}[!htbp]
  \centering
  \includegraphics[width=0.95\textwidth]{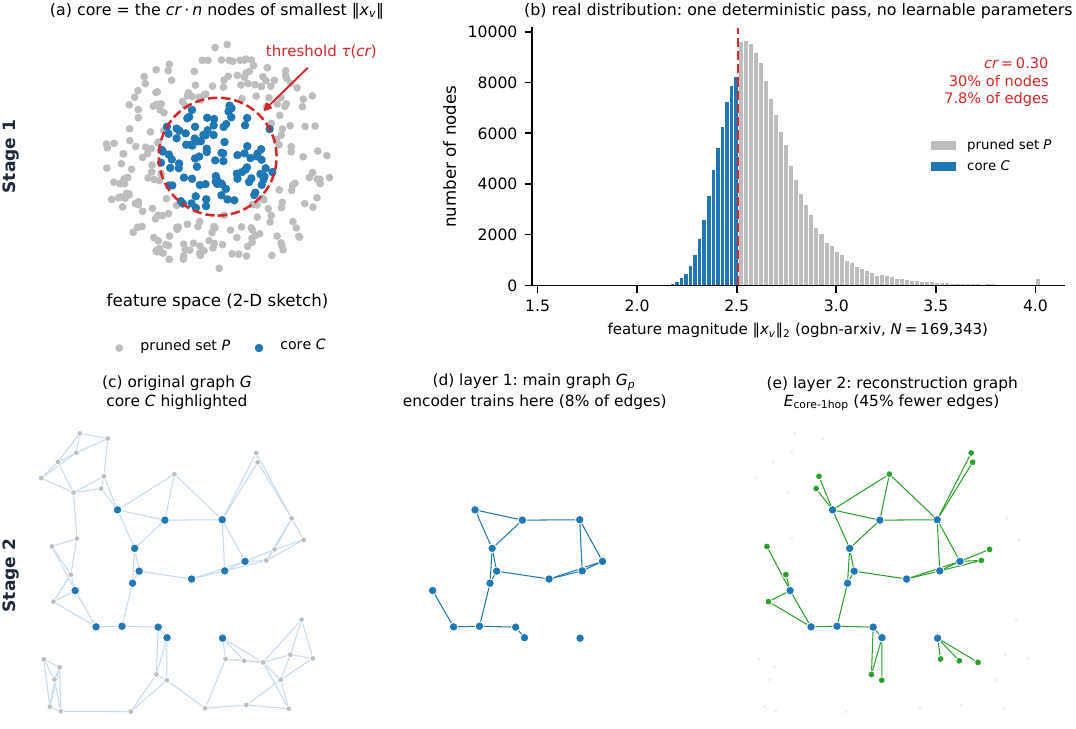}
  \caption{The two pipeline stages. \textbf{Stage 1 (a--b): core selection by
  feature magnitude.} (a) A $cr\cdot n$ fraction of nodes with the smallest
  Euclidean feature norm $\|\mathbf{x}_v\|$ is chosen as the core $C$ (2-D
  sketch); the dashed circle is the deterministic threshold $\tau(cr)$.
  (b) On a real feature distribution the histogram of $\|\mathbf{x}_v\|$
  separates cleanly into the $cr{=}0.30$ core and the pruned set $P$; the cut
  is a single deterministic pass with zero learnable parameters. We retain the
  learnable hyperbolic variant only as an ablation (\S\ref{sec:exp-abl}).
  \textbf{Stage 2 (c--e): double-layer sparsification, induced by the same
  core mask $C$.} (c) The original graph $G$ with core $C$ highlighted in
  blue; (d) the main graph $G_p$ (layer 1) on which the encoder trains,
  keeping the core-internal edges ($\approx$$8\%$ of $E$); (e) the
  reconstruction graph $E_{\text{core-1hop}}$ (layer 2) used by the decoder,
  which adds the core-to-1-hop edges needed to rebuild deleted nodes; it has
  $39$--$45\%$ fewer edges than the full reconstruction graph on the
  products probes (\S\ref{sec:exp-main}). Both layers are
  driven by the same mask $C$, so the second sparsification is endogenous to
  the decoder rather than a post-hoc cut.}
  \label{fig:stage12}
\end{figure*}

\subsection{Problem Setup}
\label{sec:setup}
Let $G=(V,E,\mathbf{X})$ be an attributed graph with $n=|V|$ nodes,
$m=|E|$ edges, and node features $\mathbf{X}\in\mathbb{R}^{n\times d}$. A
\emph{core ratio} $cr\in(0,1]$ selects a core mask $C\subset V$ with
$|C|=cr\cdot n$. The \emph{pruned set} is $P=V\setminus C$. The \emph{pruned
graph} $G_p=(C,E_p)$ keeps only edges whose both endpoints lie in $C$, so
$|E_p|\ll|E|$. The \emph{reconstruction graph} $G_r=(V,E_r)$ is the topology the
decoder diffuses over at inference; we consider two cases: $E_r=E$ (dense upper
bound / control) and $E_r=E_{\text{core-1hop}}$ (the compressed \emph{final}
method). A GNN encoder $f_\theta$ computes core representations
$\mathbf{H}_C=f_\theta(\mathbf{X}[C];E_p)$; the decoder
$\mathcal{E}_\phi$ reconstructs $\hat{\mathbf{X}}_P$ for $P$ and fuses
$\hat{\mathbf{X}}_C$, and a head $\mathrm{Head}_\psi$ predicts from
$\hat{\mathbf{X}}$.

Three properties of this setup carry the design. First, the two sparsified
graphs are \emph{induced by one mask}: $E_p$ is the restriction of $E$ to
$C$, and $E_{\text{core-1hop}}$ is the set of edges that cross from $C$ into
its immediate neighborhood, so neither graph requires a second selection
procedure (\S\ref{sec:double}). Second, the split is \emph{asymmetric by
construction}: training touches only $(\mathbf{X}[C],E_p)$, so the encoder's
cost scales with $cr$, while inference touches $(\mathbf{X},E_r)$ row by row
and never re-runs the encoder on the full graph. Third, the head is
\emph{shared across regimes}: the same $\mathrm{Head}_\psi$ scores core and
reconstructed nodes, so accuracy comparisons against the no-prune encoder
are matched by construction rather than by recalibration. Throughout, we
write $c_r$ for the deployment knob and reserve ``DPR-S'' for the variant
with $E_r=E_{\text{core-1hop}}$; the dense-reconstruction control is DPR-D.

\subsection{Stage 1: Core Selection (Prune)}
\label{sec:prune}
A \textbf{core selector} $\mathcal{S}(G;cr)\to C$ is a pluggable function.
\emph{Motivation for keeping it trivial.} In this framework accuracy is
restored in Stage~2, not at selection time: the decoder, not the mask,
carries the recovery signal. It follows that the framework's value should be
\emph{largely independent of selector sophistication} --- and this is a testable
design principle, not an assumption. \S\ref{sec:exp-abl} verifies it
directly: a fixed-magnitude mask proves essentially equivalent to a learnable
hyperbolic selector; in every configuration we tested, the learnable selector
shows no consistent advantage that would justify its overhead, so a
fixed-rule mask suffices. We therefore deploy the
simplest selector by default and demote the learned one to an ablation:
\begin{itemize}
  \item \textbf{Fixed-magnitude mask} (default): keep the $cr\cdot n$ nodes with
    the \emph{smallest} Euclidean feature norm as the core
    (Fig.~\ref{fig:stage12}a--b). A single
    deterministic pass, zero learnable parameters, excludes the rest (including
    high-degree hubs) to the reconstruct stage --- a deliberately low-overhead,
    reproducible default rule rather than a claim of universal optimality.
  \item \textbf{Learnable hyperbolic pruner} (HSP): a learnable
    magnitude/angle threshold in the Poincar\'e disk selects the core. More
    expressive, but it matches rather than beats the fixed mask
    ($0.7111\pm0.0053$ vs.\ $0.7068\pm0.0022$ at $cr{=}0.3$, 5-seed,
    \S\ref{sec:exp-abl}). We therefore use HSP for the dense reconstruction
    reference (DPR-D) and the fixed mask for the final method.
\end{itemize}

\paragraph{Division of labor (load-bearing claim).} The pruner's
``intelligence'' is \emph{decoupled} from final accuracy: its only job is to
provide a compact, cheaply trainable core; accuracy recovery is carried by
Stage~2's reconstruction rather than by the pruner. The key property we rely
on is simply: \emph{prune aggressively to a small core, then let Stage~2 fix
it.}

\paragraph{Position encoding (dropped in the main method).} The hyperbolic
pruner emits a 16-D structural PE as a pruning by-product. The main method
drops it (\texttt{pe\_dim=0}): on GraphSAGE the PE is not consumed, and the
fixed mask matches the learnable pruner. A controlled study
(App.~\ref{app:pe-control}) confirms the PE's contribution is gated by the
pruning direction, not a generic gain; it is not a load-bearing component.

\subsection{Stage 2: Structure-Aware Reconstruction}
\label{sec:recon}
The reconstruction module is the heart of the method. We describe the proposed
\textbf{sparse decoder DPR-S} and the \textbf{dense reconstruction reference DPR-D}.

\textbf{Proposed decoder -- DPR-S (sparse structure-aware decoder).}
DPR-S is not a post-hoc add-on: encoder and decoder form a \emph{single model}
trained jointly end-to-end on the same cross-entropy
(Algorithm~\ref{alg:dprs-final}); ``decoupled'' refers to functional roles
(prune at train time, reconstruct at inference) within one jointly trained pipeline.
The decoder performs \emph{sparse structure-aware diffusion over a bipartite
core-to-pruned topology}: each pruned node
$v\notin C$ receives gated messages only from its core 1-hop neighbors $N_r(v)\subseteq C$
(message complexity $O(|E_r|d)$, Theorem~\ref{thm:complexity}).
Three mechanisms constitute the decoder:
\textbf{(P1)~Core-confidence residual fusion.} Core nodes retain SAGE-dominant
representations via a learned confidence gate $c=\sigma(\mathrm{Conf}(\mathbf{H}_C))$,
$\hat{\mathbf{H}}[C]\leftarrow c\odot\mathbf{H}_C+(1-c)\odot\hat{\mathbf{H}}[C]$.
\textbf{(P2)~Gated message propagation.} Pruned nodes receive structure-aware
diffusion from their core neighbors through $L$ gated layers:
$\mathrm{AGG}^{(l)}=\mathrm{scatter\_add}(\mathrm{Msg}_l(\hat{\mathbf{H}}^{(l-1)}[\mathrm{src}]),\mathrm{dst})$,
$G^{(l)}=\sigma(\mathrm{Gate}_l([\hat{\mathbf{H}}^{(l-1)};\mathrm{AGG}^{(l)}]))$,
$\hat{\mathbf{H}}^{(l)}=\mathrm{ReLU}(\mathrm{Norm}_l(\hat{\mathbf{H}}^{(l-1)}+G^{(l)}\odot\mathrm{Upd}_l(\mathrm{AGG}^{(l)})))$.
\textbf{(P3)~Joint end-to-end training.} Encoder, decoder, and head are optimized
together; freezing either yields negative gain (\S\ref{sec:exp-abl}).
The decoder's seeding matches DPR-D: $\hat{\mathbf{H}}^{(0)}[C]=\mathbf{H}_C$,
$\hat{\mathbf{H}}^{(0)}[\bar C]=\mathrm{InitProj}(\mathbf{X}[\bar C])$.
The head $\hat y=\mathrm{Head}_\psi(\hat{\mathbf{H}})$ is minimized by\\
$\min_{\theta,\phi,\psi}\mathcal{L}_{\mathrm{CE}}(\hat y_{\text{train}},y_{\text{train}})$.
\emph{The key point: the decoder propagates only through the core's 1-hop
neighborhood, so the reconstruction graph is itself sparse
(Fig.~\ref{fig:stage12}e)} --- the endogenous-sparsification argument is
developed in \S\ref{sec:double}.
The DPR-S forward pass is given as Algorithm~\ref{alg:dprs-final}.
A research variant with dual pruners and positional encoding is studied in the
ablation (App.~\ref{app:pseudo}, Algorithm~\ref{alg:dprs-var}).

\textbf{Dense reconstruction reference -- DPR-D (GGNN Expander).} To calibrate the sparse
decoder against the dense reconstruction reference, under the same pruning we run a dense gated GGNN that
diffuses and reconstructs \emph{all} nodes over the (compressed) original full
topology $E_r$:
\begin{equation}
\begin{aligned}
\mathrm{AGG}^{(l)} &= \mathrm{Dropout}\!\left(\mathrm{scatter\_add}\!\big(
  \mathrm{Msg}_l(\hat{\mathbf{H}}^{(l-1)}[\mathrm{src}]),\ \mathrm{dst}\big)\right),\\
G^{(l)} &= \sigma\!\left(\mathrm{Gate}_l\big([\hat{\mathbf{H}}^{(l-1)};\
  \mathrm{AGG}^{(l)}]\big)\right),\\
\hat{\mathbf{H}}^{(l)} &= \mathrm{ReLU}\!\left(\mathrm{Norm}_l\!\big(
  \hat{\mathbf{H}}^{(l-1)} + G^{(l)}\odot
  \mathrm{Upd}_l(\mathrm{AGG}^{(l)})\big)\right).
\end{aligned}
\end{equation}
Its seeding and core-confidence residual fusion are identical to DPR-S's
($\hat{\mathbf{H}}^{(0)}[C]=\mathbf{H}_C$,
$\hat{\mathbf{H}}^{(0)}[\bar C]=\mathrm{InitProj}(\mathbf{X}[\bar C])$;
$c=\sigma(\mathrm{Conf}(\mathbf{H}_C))$,
$\hat{\mathbf{H}}[C]\leftarrow c\odot\mathbf{H}_C+(1-c)\odot\hat{\mathbf{H}}[C]$).
The head acts on $\hat{\mathbf{H}}=\mathrm{FinalNorm}(\hat{\mathbf{H}}^{(L)})$:
$\hat y=\mathrm{Head}_\psi(\hat{\mathbf{H}})$, minimized by
$\min_{\theta,\phi,\psi}\mathcal{L}_{\mathrm{CE}}(\hat y_{\text{train}},y_{\text{train}})$.
DPR-D reaches \textbf{$0.7111\pm0.0053$} (arxiv, 5-seed, h256, HSP) as the dense reference.
DPR-S (the final method) runs at the \emph{core-1hop} (pure
double-layer) operating point, giving \textbf{$0.6892\pm0.0020$} (5-seed, $\approx$$96.9\%$ of
DPR-D); the same GGNN
decoder at core-1hop reaches $0.6885\pm0.0013$ (5-seed, $96.8\%$ of DPR-D full
$0.7111$, App.~\ref{app:data}). The extended M4-gated variant DPR-S-M4 reaches
\textbf{$0.6953\pm0.0019$} (arxiv, 5-seed, h128, near-full reconstruction,
$97.8\%$ of DPR-D).
The $\sim$$1$pp gap buys DPR-S local reconstruction and half the parameters.

\subsection{Double-Layer Sparsification}
\label{sec:double}
Standard pruning stops at $G_p$. Our reconstruction module DPR-S attends only to
the core's 1-hop neighborhood, so the \emph{reconstruction graph} is itself
sparse: \emph{both} the main graph and the reconstruction graph are sparsified.
Concretely, setting $E_r=E_{\text{core-1hop}}$ (edges within 1 hop of $C$,
including the deleted nodes it must rebuild) removes $\approx$$39$--$45\%$ of
reconstruction edges versus the full graph on the products probes
(\S\ref{sec:exp-main}) while retaining a $\approx$$96.8\%$
recovery rate \emph{for the gated-GGNN core-1hop topology control operating point ($0.6885\pm0.0013$,
5-seed; DPR-S core-1hop $0.6892\pm0.0020$, 5-seed)} (\S\ref{sec:exp-eff}). The
dense variant DPR-D runs on $E_r=E$, retained as
the control / dense reconstruction reference.

\paragraph{Cost accounting.} The two layers amortize differently, and the
asymmetry is the point. The \emph{encoder layer} is priced once per training
run: its cost scales with the core ($|C|$ nodes, $|E_p|$ edges), so a
$30\%$-node core reduces the per-aggregation message flow (encoder on $E_p$ $+$
decoder on $E_r$) to $\approx$$25\%$ of a full-graph aggregation (Table~\ref{tab:edge-audit}). The \emph{decoder layer} is
priced once per deployment: $\mathbf{H}_C$ is computed a single time after
training and stays GPU-resident, so the recurring inference cost is the
streamed decoder pass over $E_{\text{core-1hop}}$ --- no encoder re-run, no
full-graph pass, and memory that scales with the 1-hop shell of the core
rather than with $N$. This is why the compression ratio $c_r$ acts as one
knob on two ledgers: lowering it discounts the training bill linearly
while widening, not narrowing, the inference-time savings
(\S\ref{sec:exp-eff}).

\paragraph{Endogenous sparsification.}
The decisive point is \emph{why} the reconstruction graph is sparse. In DPR-S the
decoder's message passing is \emph{defined} to reach only core-1hop
neighbors; there is \emph{no dense reconstruction graph that is later cut}.
Sparsity is part of the decoder's \emph{definition} rather than a post-processing step: a
deleted node with at least one core neighbor receives gated messages from
$N_r(v)$, and a node with \emph{no} core neighbor simply keeps its projected
own feature (no messages arrive) --- the fallback arises from graph
connectivity rather than from a routing rule (Algorithm~\ref{alg:dprs-final},
Table~\ref{tab:coverage}); both paths are what the architecture computes, not
a separate sparsifier applied afterwards.
This is fundamentally different from a \emph{post-hoc} recipe --- take a dense
reconstruction graph (e.g.\ DPR-D over the full topology $E$) and then apply an
\emph{independent} sparsifier (degree pruning, DropEdge, $k$-NN) to shrink it. A
post-hoc cut (a)~introduces a \emph{second}, decoupled sparsification mechanism
with its own hyper-parameters; (b)~is blind to the prune decision, so it may keep
edges the reconstructor does not need and drop edges it does; and (c)~can degrade
recovery because the cut is agnostic to which nodes were pruned. By contrast, our
two sparsification layers are driven by the \emph{same} core mask $C$: the main
graph is pruned to $C$, and the reconstruction graph's 1-hop is defined
\emph{relative to that same $C$}. The two layers are therefore two facets of one
core-selection decision rather than two composed sparsifiers --- which is exactly what
makes the sparse reconstruction graph an \emph{endogenous property of the
method} rather than an afterthought optimization. The payoff is not merely fewer
edges: because the reconstructor never materializes the dense graph, the
efficiency gain is structural and grows with graph size, whereas a post-hoc cut
still pays the cost of building (and then discarding) the dense graph. To our
knowledge, no prior work sparsifies the reconstruction graph \emph{endogenously}
in this sense; methods that reconstruct (EGP, SDGNN, expander propagation) do so
over the full / compressed topology, and methods that sparsify (DSpar, PruneGNN)
stop at the main graph.

\noindent\textbf{Why reconstruction recovers accuracy, in brief.} (i)~Two-stage
approximation: cheap core encoding plus light fixed-$L$ feature diffusion, with no
deep full-graph backprop; (ii)~spectral: the decoder re-introduces the
low-frequency connectivity signal that pruning removed; (iii)~fragmentation
tolerance: no intra-core message passing is needed ($57.6\%$ of core nodes are
degree-0 at $cr{=}0.3$), and an explicit fallback keeps a projection of the own
raw feature for pruned nodes with no core neighbor (Table~\ref{tab:coverage});
(iv)~ceiling: the sparse decoder is bounded by the dense variant DPR-D, not by
plain SAGE (Theorem~\ref{thm:ceiling}); measured tax $\approx$$2.2\%$ relative
in the tightest (near-full, DPR-S-M4) configuration.
The full mechanism discussion and Fig.~\ref{fig:mechanism} are in
Appendix~\ref{app:why}.

% =============================================================

% [Theoretical Analysis + Accuracy Tax moved to the appendix (page budget)]
\medskip
\noindent\textbf{Theoretical analysis in brief.} Structure-aware recovery works because
the recoverable signal concentrates in the low-frequency spectrum of the pruned core
graph: a conditional bound keeps the accuracy tax small when the reconstruction signal
is low-frequency, and the empirical tax decomposition on arxiv and products follows the
same ordering. The formal statements, proofs, and the full tax analysis are deferred to
Appendix~\ref{app:proof-thm3} and Appendix~\ref{sec:tax}.

\section{Experiments}
\label{sec:exp}
% =============================================================

The experiments are organized as a question--evidence map
(Table~\ref{tab:rq-map}): each block answers one stated question about the
method, so that every claim can be checked one-to-one against the evidence.
Efficiency is summarized in Fig.~\ref{fig:teaser}.

% Figure roles: Fig. 3 = efficiency teaser (this section).
\begin{figure*}[!t]
  \centering
  \includegraphics[width=\textwidth]{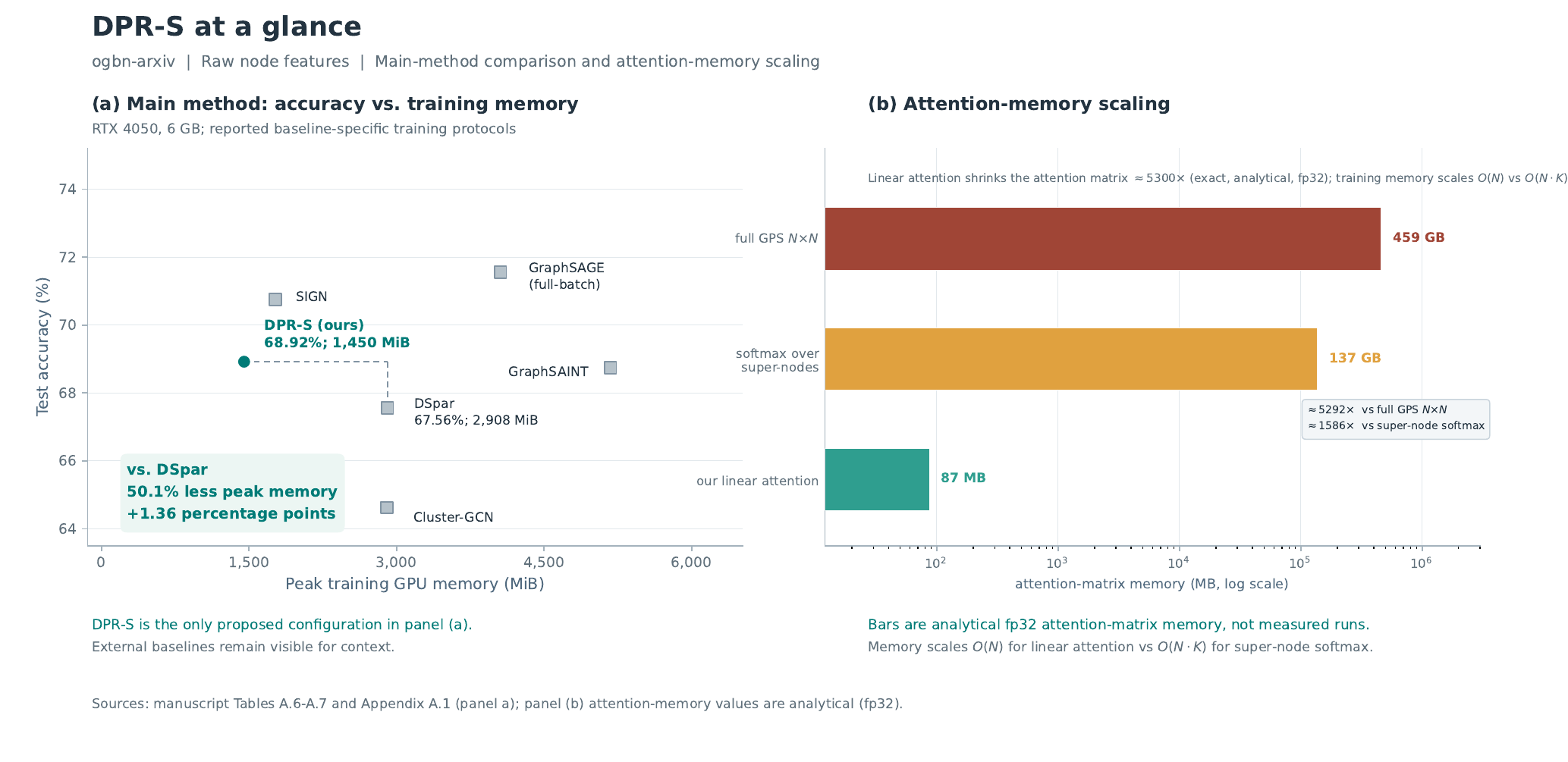}
  \caption{DPR-S efficiency at a glance (ogbn-arxiv;
  raw node features unless stated).
  (a)~Accuracy vs.\ peak training GPU memory on the same RTX\,4050 card:
  external baselines (gray; same-card rerun protocols,
  Table~\ref{tab:external}) cluster in the high-memory region,
  while DPR-S (teal) holds a comparable accuracy band at $1{,}450$\,MiB ---
  $50.1\%$ less peak memory than the strongest same-card pruning baseline
  DSpar at $+1.36$\,pp accuracy.
  (b)~Analytical attention-matrix memory at arxiv scale (fp32): full-graph
  GPS attention~\citep{rampasek2022gps} ($N{\times}N$) needs $\approx$$459$\,GB (fp32 attention logits: $N^{2}\times4$\,B $\approx$$114.7$\,GB per head, four heads) and softmax over
  $50{,}743$ super-nodes still needs $\approx$$137$\,GB, while our linear
  attention holds it at $\approx$$87$\,MB ($\approx$$5{,}300\times$ below
  full attention) --- the collapse that keeps coarsened training on a
  single commodity GPU.}
  \label{fig:teaser}
\end{figure*}

\begin{table*}[!t]
\caption{Roadmap of the experiments: block $\to$ question $\to$ one-line
answer. Appendix blocks are marked (App.).}
\label{tab:rq-map}
\centering\small
\begin{tabular}{@{}p{0.24\textwidth}p{0.29\textwidth}p{0.39\textwidth}@{}}
\toprule
\textbf{Block (where)} & \textbf{Question} & \textbf{One-line answer}\\
\midrule
Main result: deployed-DPR-S recovery rate on ogbn-arxiv and ogbn-products;
extended variants (SBERT external features, Car-Hacking) flagged per row
(\S\ref{sec:exp-main}) &
RQ1: does reconstruction restore accuracy after aggressive pruning, across
backbones and scales? &
DPR-S retains $96.9\%$ (arxiv, pure double-layer) and $96.5\%$ (products) of
the dense-reference accuracy; the extended SBERT external-feature variant
lands within
$\sim$$0.6$\,pp of the leaderboard peak while training on the same
$6$\,GB card (Table~\ref{tab:config-master}).\\
Ablations \& analyses (\S\ref{sec:abl-main}; App.~\ref{sec:exp-abl}) &
RQ2: which component carries the gain? &
Joint encoder--decoder training and the sparse decoder carry it; selector
sophistication does not (fixed mask $\approx$ learnable hyperbolic), and
DPR-D quantifies the dense-reference ceiling.\\
Scale, stress \& failure modes (App.: efficiency
\S\ref{app:eff-attn}; Car-Hacking \S\ref{sec:exp-carhacking}; PPI study) &
RQ3: does it survive production scale, and when does it break? &
Training at 2.45M nodes peaks at $6{,}285$\,MiB (fits an 8\,GB card); full-scale coarsened
Car-Hacking training on one commodity GPU; the PPI study delimits backbone
dependence.\\
\bottomrule
\end{tabular}
\end{table*}

\subsection{Setup}
\label{sec:exp-setup}
This subsection documents the datasets, protocol, and baselines used to
answer RQ1--RQ3 (Table~\ref{tab:rq-map}).
\begin{itemize}
  \item \textbf{Datasets.} ogbn-arxiv (169,343 nodes, 2.3M edges, 40 classes,
    node classification) \citep{ogb2021}; ogbn-products (2.45M nodes, 47 classes, and we report a
    200k/500k-node subgraph estimate, see note); Car-Hacking (frame-level
    CAN-bus intrusion detection, accuracy / macro-F1).
  \item \textbf{Protocol.} Core ratio $cr\in\{0.7,0.3\}$ for the ogbn-arxiv
    sweep (the wider $\{0.9,0.7,0.5,0.3\}$ grid is the PPI sensitivity study,
    App.~\ref{app:ppi}); GraphSAGE
    encoder (3 layers, BatchNorm) + decoder (3 gated propagation layers,
    dropout $0.3$); node-classification learning rate \textbf{$0.001$} ($0.01$
    collapses the decoder, \S\ref{sec:conclusion}); the arxiv full-rec
    robustness statements use \textbf{5 seeds}, and the pruning-severity
    trend is the single-seed $40$-epoch scan of Fig.~\ref{fig:curves};
    the
    Car-Hacking (frame-level) result is reported at \textbf{3 seeds}. The deployed
    core-1hop variant is reported at \textbf{5 seeds} (\S\ref{sec:exp-main}). All
    multi-seed figures in the tables are reported as the mean and the sample
    standard deviation (denominator $n{-}1$). \emph{Two reduction
    operators instantiate the same decoupled framework}: node-merge coarsening
    ($kr$; the Transformer/SBERT external-feature configuration) and core-mask pruning
    ($cr$; the GraphSAGE recovery protocol). DPR-S/D denote the decoder pair
    under either operator; all recovery-rate claims refer to the core-mask
    protocol unless stated otherwise.
  \item \textbf{Baselines.} \emph{External, same-card} (Table~\ref{tab:frontier};
    full protocol in App.~\ref{app:external}): full-batch
    GraphSAGE~\cite{graphsage2017}, SIGN~\cite{sign2020},
    GraphSAINT~\cite{graphsaint2020}, Cluster-GCN~\cite{clustergcn2019}
    (all operate on the full node set) and the
    edge-sparsification method \emph{DSpar}~\cite{dspar2023}, each rerun by us
    on the same 6\,GB RTX 4050 card, dataset, and data split (architectures
    annotated per row in Table~\ref{tab:frontier}; DSpar shares DPR's
    GraphSAGE $h{=}128$ encoder and training budget, 3 seeds).
    \emph{In-framework controls}: \emph{no-prune} (same backbone, full graph)
    as the accuracy ceiling; \emph{HSP-alone} (pruned, GNN trained on $G_p$, no
    decoder) as the ``pruning collapses'' control. DSpar deletes only
    $\approx$$26\%$ of edges and retains all nodes, so it probes the
    \emph{accuracy-per-memory} tradeoff rather than the aggressive
    node-pruning regime; the regime-delimited reading is given in
    App.~\ref{app:external} and \S\ref{sec:related}. \emph{DPR-D} (dense
    reconstruction) and \emph{DPR-S-M4} (extended variant) are a
    protocol-defined reference and an extension, respectively, not competing
    methods (Table~\ref{tab:main}).
\end{itemize}

\subsection{Main Result: The Budget--Accuracy Frontier}
\label{sec:exp-main}

\noindent The headline comparison is external and same-card:
Table~\ref{tab:frontier} places DPR against five full-node references that we
rerun on the same 6\,GB RTX 4050 card, dataset, and data split (architectures
annotated per row in Table~\ref{tab:frontier}). DPR is
the only method that trains on a minority of the nodes, and the front is
quantifiable: the deployed $30\%$-budget point ($50{,}803$ core nodes) keeps
$0.6892$ at $1450$\,MiB peak VRAM ($2.0\times$ less than DSpar,
$2.8\times$ less than the full-graph encoder, $3.6\times$ less than
GraphSAINT) --- above DSpar and on par with the single-run GraphSAINT
reference ($+0.18$\,pp). An extended configuration (DPR-S-M4, not part of
the deployed method) reaches $0.7257$ at a $41.8\%$ super-node budget on the
coarsened graph; that point is scored on its own super-node split, so it is
reported separately rather than against the original-node references. Here ``training nodes'' is the fraction of
$N{=}169{,}343$ nodes used for encoder training ($50{,}803$ core nodes at
$cr{=}0.3$, versus $70{,}744$ super-nodes at $kr{=}0.5$); the extended rows
were not separately VRAM-profiled, so that column is left empty. The remainder
of this subsection reports the in-framework recovery ladder
(Table~\ref{tab:main}) that explains where this accuracy comes from.

\begin{table*}[!t]
\centering
\footnotesize
\caption{Budget--accuracy comparison on ogbn-arxiv: DPR against
contemporaneous full-node baselines, each rerun by us on one 6\,GB RTX\,4050
card, the same dataset and data split (protocol in \S\ref{sec:exp-setup};
per-row architectures in App.~\ref{app:external}). \emph{Seeds:} the four
full-node references marked \textsuperscript{(s)} are single-run point
estimates; every remaining row is a multi-seed mean, reported as
mean\,$\pm$\,std. \textsuperscript{\,$\dagger$}Scored on the coarsened
super-node graph with its own split; it documents budget scaling only and is
not comparable to the original-node rows.}
\label{tab:frontier}
\begin{tabular*}{\textwidth}{@{\extracolsep{\fill}} l r r r}
  \toprule
  Method & Training nodes & Test acc & Peak VRAM (MiB) \\
  \midrule
  \multicolumn{4}{@{}l}{\emph{Full-node references (rerun, same card)}} \\
  GraphSAGE~\cite{graphsage2017} (full-batch, no prune) & 100\% & 0.7156\textsuperscript{\,(s)} & 4057 \\
  SIGN~\cite{sign2020} ($A^5X$ precomputed features) & 100\% & 0.7075\textsuperscript{\,(s)} & 1769 \\
  GraphSAINT~\cite{graphsaint2020} (RW sampler) & 100\% & 0.6874\textsuperscript{\,(s)} & 5175 \\
  Cluster-GCN~\cite{clustergcn2019} (random partition) & 100\% & 0.6462\textsuperscript{\,(s)} & 2902 \\
  DSpar~\cite{dspar2023} (degree-based edge sparsification) & 100\% & $0.6756\pm0.0012$ & 2908 \\
  \midrule
  \multicolumn{4}{@{}l}{\emph{DPR (node-level pruning + reconstruction)}} \\
  DPR-S (deployed, $cr{=}0.3$, core-1hop) & $30.0\%$ (core mask) & $0.6892\pm0.0020$ & \textbf{1450} \\
  DPR-S-M4 (extended, $cr{=}0.3$, near-full) & $30.0\%$ (core mask) & $0.6953\pm0.0019$ & --- \\
  DPR-S-M4 (extended, $kr{=}0.5$)\textsuperscript{\,$\dagger$} & $41.8\%$ (super-node) & $0.7257\pm0.003$ & --- \\
  \bottomrule
\end{tabular*}
\end{table*}

\begin{table*}[width=\FullWidth,cols=7,pos=!htbp]
  \caption{Main results: the in-framework recovery ladder on ogbn-arxiv,
  ogbn-products (500k, 2.45M) and Car-Hacking (protocol and definitions in
  \S\ref{sec:exp-setup}--\S\ref{sec:exp-main}); the external same-card
  comparison is Table~\ref{tab:frontier}. (s)/(3s)/(5s) = 1/3/5 seeds;
  --- = not defined under the row's protocol.}
  \label{tab:main}
  \footnotesize
  \renewcommand{\arraystretch}{1.25}
  \setlength{\tabcolsep}{2.2pt}
  \begin{tabular*}{\tblwidth}{@{} L @{\extracolsep{\fill}} r r r r r r r @{}}
    \toprule
    Dataset / task & no-prune & \makecell{no-prune\\+Exp} & \makecell{DSpar\\(same-card)} & \makecell{HSP-\\alone} & \makecell{DPR-S\\(final)} & \makecell{DPR-D\\(dense)} & \makecell{DPR-S-M4\\(ext.)} \\
    \midrule
    \multicolumn{8}{@{}l}{\emph{Core protocol (GraphSAGE encoder, raw 128-dim): deployed DPR-S vs.\ dense reference}}\\
    ogbn-arxiv (acc) & $0.7156$ & $\sim$$0.70$ & $0.6756\pm0.0012$\textsuperscript{\,c} & $0.378$\textsuperscript{\,d} & $0.6892\pm0.0020$\textsuperscript{\,e} & \makecell[r]{$0.7111\pm0.0053$\textsuperscript{\,f}\\$0.7068\pm0.0022$\textsuperscript{\,g}} & $0.6953\pm0.0019$\textsuperscript{\,h} \\
    ogbn-products 500k (acc) & $0.7157\pm0.0035$ & --- & --- & --- & \makecell[r]{$0.6635\pm0.0122$\textsuperscript{\,e}\\(core-1hop)} & \makecell[r]{$0.6874\pm0.0015$\textsuperscript{\,f}\\$0.6876\pm0.0007$\textsuperscript{\,g}} & --- \\
    ogbn-products 2.45M (official split) & --- & --- & --- & --- & \textbf{$0.7814\pm0.0003$}\textsuperscript{\,e} & --- & --- \\
    \midrule
    \multicolumn{8}{@{}l}{\emph{Extended variants (not part of the deployed method)}}\\
    ogbn-arxiv external features (acc) & $0.7803$\textsuperscript{\,a} & --- & --- & --- & --- & --- & \textbf{$0.7744\pm0.0003$}\textsuperscript{\,b} \\
    ogbn-products 2.45M (super-node) & --- & --- & --- & --- & --- & --- & \textbf{$0.7612\pm0.0006$}\textsuperscript{\,i} \\
    Car-Hacking (acc / macro-F1) & --- & --- & --- & --- & --- & --- & \makecell[r]{\textbf{$0.9977\pm0.0009$}\textsuperscript{\,j}\\$0.9931\pm0.0026$} \\
    \bottomrule
  \end{tabular*}
  \raggedright\footnotesize
  \medskip
  \begin{list}{}{\leftmargin=1.6em\itemindent=-1.6em}
    \item[]\textsuperscript{\,a}External full-graph reference (SimTeG+TAPE+RevGAT, OGB leaderboard peak); not a same-backbone no-prune run.
    \item[]\textsuperscript{\,b}External-feature compatibility row: the M4-gated variant (m4v2 recovery path; TransformerConv encoder + SBERT features), 3 seeds; $\sim$$0.6$\,pp below \textsuperscript{\,a}.
    \item[]\textsuperscript{\,c}DSpar \citep{dspar2023} same-card rerun (RTX\,4050 6\,GB, GraphSAGE $h{=}128$, 200 epochs, ogbn-arxiv, 3 seeds): degree-based \emph{edge-only} sparsification ($\epsilon{=}0.82$, $\approx$$26\%$ of undirected edges removed, \emph{all} nodes retained), peak VRAM $2908$\,MiB; protocol, seeds, and per-method ratios in Table~\ref{tab:external}. DPR-S holds $+1.36$\,pp at $50.1\%$ lower peak memory (Fig.~\ref{fig:teaser}); the extended DPR-S-M4 is $+1.97$\,pp.
    \item[]\textsuperscript{\,d}Pruned, no reconstruction; the $0.378$ floor is the seed-$0$ $cr{=}0.3$ endpoint of the $40$-epoch sweep of Fig.~\ref{fig:curves}, whereas the $0.7156$ ceiling is a $200$-epoch no-prune reference, so the two are not a same-budget pair and are not quoted as a retention ratio. Scoring: the core classifier is applied over the \emph{full} node set --- core nodes keep encoded features, pruned nodes are zero-filled as in training (no expander, no fallback).
    \item[]\textsuperscript{\,e}DPR-S (final method), core-1hop reconstruction, hidden 128.
    \item[]\textsuperscript{\,f}DPR-D / HSP mask (core pruning + \emph{full} reconstruction). \ \textsuperscript{\,g}DPR-D / fixed-magnitude (simple) mask, same scope.
    \item[]\textsuperscript{\,h}DPR-S-M4 (extended variant), near-full reconstruction, hidden 128.
    \item[]\textsuperscript{\,i}DPR-S-M4 (extended scope) on the node-merge \emph{super-node}
graph ($684{,}233$ super-nodes after $kr{=}0.3$ coarsening), 3 seeds
(hidden 128, batch 1024); its test is scored on the coarsened super-node set,
so it is not comparable to the official-split row below (scope details in
App.~\ref{app:data}).
  \item[]\textsuperscript{\,j}DPR-S-M4 deployment configuration (near-full
reconstruction); the Car-Hacking stress study deploys the extended variant
(App.~\ref{sec:exp-carhacking}).
  \end{list}
\end{table*}

\begin{figure*}[!htbp]
  \centering
  \includegraphics[width=\textwidth]{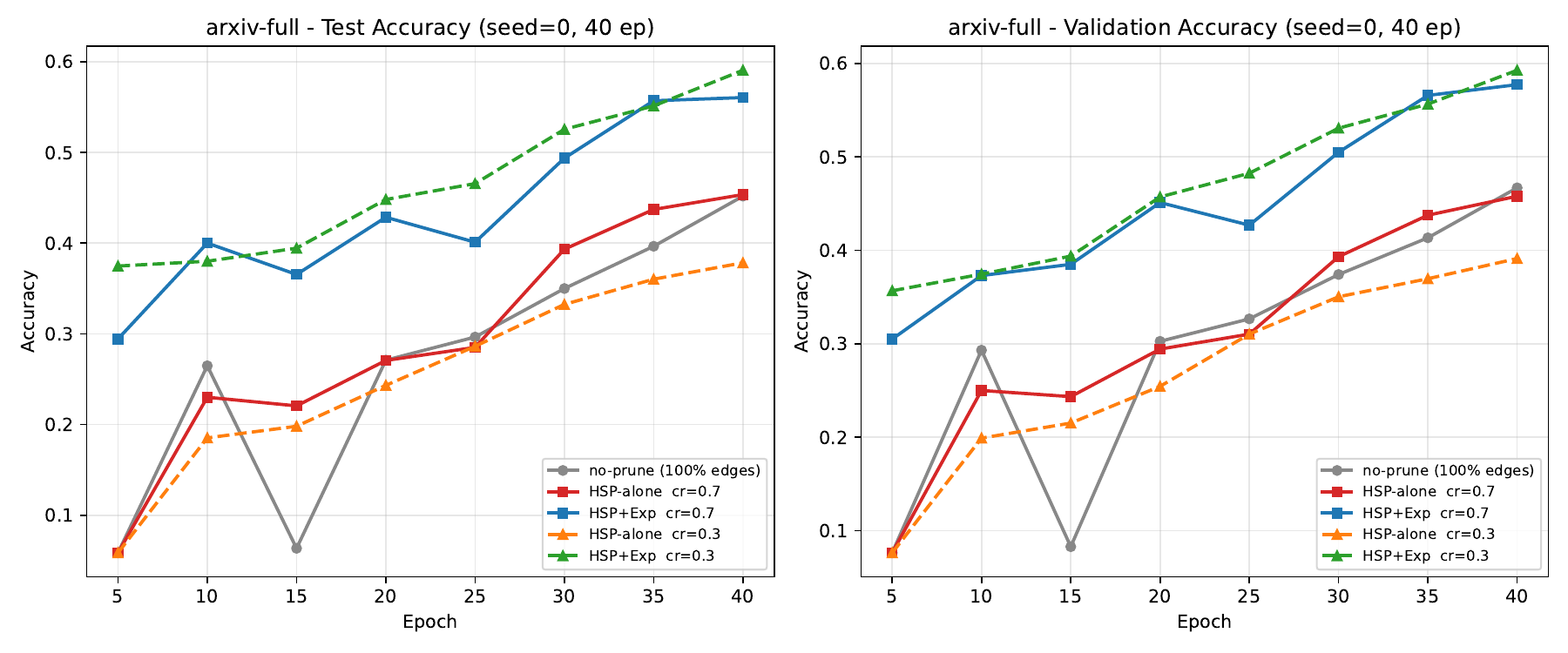}
  \caption{Convergence comparison on the full ogbn-arxiv graph (seed~0, 40
  epochs): the full DPR pipeline (HSP+Expander: prune $+$ reconstruct) against
  the pruning-only baseline (HSP-alone: prune, train on the pruned graph, no
  reconstruction) and a no-prune companion run over the same window.
  Reconstruction lifts accuracy above the pruning-only baseline at both core
  ratios ($0.591$ vs.\ $0.378$ at $cr{=}0.3$; $0.561$ vs.\ $0.454$ at
  $cr{=}0.7$ at epoch~40), so the recovery is not a $cr{=}0.3$-only artifact.
  Pruning alone stays below the no-prune companion ($0.452$) at $cr{=}0.3$, but
  at $cr{=}0.7$ it already tracks it rather than collapsing. All curves are
  still rising at epoch~40; this window shows early-training dynamics rather
  than a converged ceiling (the fair no-prune ceiling, $0.7156$, is reported in
  \S\ref{sec:exp-main}).}
  \label{fig:curves}
\end{figure*}

\noindent This subsection answers RQ1: does reconstruction restore accuracy
after aggressive pruning, across backbones and scales? Table~\ref{tab:main}
reports the main results. Its core block is the recovery-rate protocol: one
backbone (GraphSAGE, raw 128-dim features), one recipe, and the deployed
DPR-S against its dense reference DPR-D. The extended block adds the
external-feature compatibility configuration --- the M4-gated variant
(TransformerConv backbone, the same M4 recovery path as DPR-S-M4) with SBERT
text features under light coarsening ($cr{=}0.7$): \textbf{$0.7744\pm0.0003$}
(3 seeds), within $\sim$$0.6$\,pp of the full-graph OGB external-feature peak
$0.7803$ --- reported as an extension, not part of the deployed method.

\noindent\textbf{Validity anchors (matched controls).} Before reading
scale/compatibility/application results, we fix two matched controls (the two
summarized in the abstract) plus one resource profile. \textbf{(i)~Same-family
same-width dense-vs-sparse calibration.} The headline gap to DPR-D mixes
width, decoder family, and reconstruction topology, so it does not isolate
sparsity; at matched $h{=}128$ and decoder family (arxiv $cr{=}0.3$, 3 seeds,
200 epochs), the sparse decoder reaches $0.689\pm0.002$ vs.\ dense $[K_e,C']$
$0.682\pm0.001$ --- sparse is $0.76$\,pp \emph{higher}, with $32\%$ lower peak
VRAM and $19\%$ faster per epoch (\S\ref{app:eff-attn}), so sparse
reconstruction is not intrinsically inferior. \textbf{(ii)~Parameter-matched
MLP-only decoder.} Recovery cannot be explained by ``adding a stronger
decoder'': an MLP-only decoder of exactly matched parameter count (ratio
$1.001$) with no edges, no attention, no message passing drops $-2.51$\,pp
($0.6702\pm0.0025$ vs.\ $0.6953\pm0.0019$ for the extended DPR-S-M4
decoder) under the same recipe,
consistently across all five paired seeds (full table in
\S\ref{sec:exp-abl}), supporting a structural contribution from
topology-aware diffusion rather than extra capacity. \textbf{(iii)~Deployment
profile.} On a 6\,GB RTX 4050 (arxiv $cr{=}0.3$, h128, batch 512), DPR-S uses
1450\,MiB peak VRAM (24\% of budget) with CPU$\to$GPU transfer under 3\% of
per-batch compute (\S\ref{sec:exp-eff}); the VRAM claim is stated as
\emph{deployment-oriented feasibility under limited VRAM}, not a production
guarantee. A configuration--outcome master table binding every headline
number to its seed count, reconstruction scope, and hardware is given in
App.~\ref{app:config-master} (Table~\ref{tab:config-master}).

\textbf{Recovery rate vs.\ the dense reference.} DPR-D is the dense
reference: core-mask selection + full-graph reconstruction ($E_r{=}E$); the
deployed DPR-S pairs the fixed-magnitude mask with core-1hop sparse
reconstruction ($E_r{=}E_{\text{core-1hop}}$), so the recovery rate
$\mathrm{Acc}(\mathrm{DPR\text{-}S})/\mathrm{Acc}(\mathrm{DPR\text{-}D})$
is a system-level retention between two complete configurations rather than an isolated sparsity effect; the no-prune GNN is reported
separately as ``no-prune'' (arxiv $0.7156$, products 500k $0.7157$). On
arxiv, the extended DPR-S-M4 (5 seeds) reaches $0.6953\pm0.0019$, i.e.\
$97.8\%$ of the dense reference $0.7111$; the deployed core-1hop DPR-S
reaches $0.6892\pm0.0020$ ($96.9\%$; App.~\ref{app:data} tabulates the
topology-control variants). HSP-alone (prune, no reconstruction) collapses to
$0.378$ within the $40$-epoch window; we therefore attribute the recovery
to reconstruction rather than to pruning, and the per-epoch convergence
dynamics against the
pruning-only baseline are shown in Fig.~\ref{fig:curves}. The collapse is
mechanistic rather than incidental: under zero-fill, every pruned node
presents an all-zero feature to a head that must still score the full node
set, so (i) the training signal for non-core rows is a constant zero
embedding regardless of label, and (ii) the decision boundary is calibrated
on a representation space in which half the mass carries no information.
Reconstruction removes both pathologies at once --- pruned rows receive
features whose distribution matches the core's, and the head never sees a
degenerate input regime --- which is why the repair must happen
\emph{before} the head, not as a post-hoc re-scoring. The tax to the
same-protocol full-batch no-prune
SAGE is $-2.6$\,pp at the deployed point ($-2.0$ for the extended
configuration), paid while training on $30\%$ of the nodes with the
\emph{per-aggregation message proxy} (encoder on $E_p$ + core-1hop decoder on
$E_r$) at $\approx$$25\%$ of a full-graph aggregation (structural audit:
$E_p\cup E_r$ retains $42.3\%$ of unique undirected edges;
Table~\ref{tab:edge-audit}).

\textbf{External calibration.} On the same OGB split, full-batch GraphSAGE
reaches $0.7156$; well-tuned classic GNNs reach $\approx$$0.735$~\citep{luo2024}
and the OGB leaderboard peaks at $\approx$$0.78$ with external text features.
The framework therefore sits within $2$--$4$\,pp of \emph{full-graph}
references at $\approx$$25\%$ of the message traffic; under this raw-feature
protocol the contribution is the \emph{recovery} rate, and raw accuracy
becomes the headline under deep feature enhancement (next).

\textbf{Extended variant: external feature enhancement (leaderboard
comparison).} Swapping raw
features for SBERT text embeddings (768-dim, the NeurIPS'24 re-evaluation
protocol of arXiv\,2406.08993) and running the M4-gated recovery path
(TransformerConv backbone) with node-merge at $cr{=}0.7$
($\approx$$1/3$ of nodes merged, $\approx$$39\%$ of edges retained) attains
the \emph{best} test accuracy of the whole framework
(Table~\ref{tab:main}): \textbf{$0.7744\pm0.0003$} (3 seeds) on a real
held-out test --- \emph{SOTA-comparable} with the official OGB leaderboard
under external text features, within $\sim$$0.6$\,pp of the peak $0.7803$
(SimTeG+TAPE+RevGAT) and $0.7750$ (TAPE+RevGAT), i.e.\ below the peak but in
the same range while training on a compact core subgraph on a single 6\,GB
GPU --- exactly the deployment setting the framework targets. Lower keep
ratios recover less ($cr{=}0.5$: $\approx$$0.76$; $cr{=}0.3$: $0.72$), and the
coarsen-and-reconstruct design thus
retains accuracy under deep feature enhancement.

\noindent\textbf{In brief.} The two-axis decomposition of the headline
recovery rate (decoder type $\times$ reconstruction scope), together with the
full design-space summary (Table~\ref{tab:2x2}), is deferred to
Appendix~\ref{app:mot-figs} (Fig.~\ref{fig:motivation}).

\textbf{ogbn-products evaluation protocol.}
\emph{Split construction rule.} ogbn-products follows the \textbf{official OGB v1
sales-ranked split} \citep{ogb2021}, provided with the dataset and \emph{not}
re-engineered by us: products sorted by Amazon sales volume, top $8\%$ Train /
next $2\%$ Val / remaining $\sim$$90\%$ (long tail) Test --- the standard
leaderboard configuration. For contrast, ogbn-arxiv ships a genuine per-node
year; Fig.~\ref{fig:arxiv-time} visualizes its official chronological split and
the resulting label-prior drift.

\emph{Popularity and label shift (a consequence of the ranked split, not a
separate design choice).} Train concentrates on best-sellers while Test covers
the long tail, so the two pools differ in sales-volume distribution and class
prevalence (Fig.~\ref{fig:products-prior}). A stratified-label probe diagnoses
this: expanding the labeled training pool from $16\,000$ to $120\,000$ nodes
(class proportions fixed) raises Val accuracy from $\sim$$0.81$ to $0.86$ yet
held-out Test accuracy stays at $\sim$$0.61$ on the same backbone --- the
signature of a \emph{split-induced popularity / category-prior mismatch}, not of
under-training. Because this gap is a property of the official split rather than
of any method, we do \emph{not} compare absolute Test accuracy against
literature SOTA on products; instead we report four metrics:

% [drift figures fig:products-prior / fig:arxiv-time moved to App.~\ref{app:data}]

(1) \textbf{peak VRAM} (enabling: DPR-D full graph $16100$\,MiB vs.\ no-prune
$23079$\,MiB; on a 24\,GB card the latter approaches the memory limit with no headroom); (2) \textbf{same-graph
recovery rate = DPR-D} (computed on the same graph, hence not exposed to the split-induced prior shift; the main metric; $96.1\%\pm0.4\%$ over 3 seeds;
we adopt the simpler fixed-magnitude mask, mask-family audit in
App.~\ref{app:data}); (3) \textbf{Val accuracy} (drawn from the higher-popularity 8--10\% bracket,
auxiliary, less affected by prior shift);
(4) \textbf{Test accuracy} as a long-tail shift stress test only. On the same 500k
official-OGB-split protocol as DPR-D, DPR-S (the method) reaches $0.6635\pm0.0122$
(3 seeds), i.e.\ $96.5\%$ of the dense reconstruction reference DPR-D ($0.6874\pm0.0015$),
under identical subgraphs and backbone, with the sparse decoder and the
coarsened encoder trained jointly end-to-end exactly as the method is defined
(Table~\ref{tab:main}). A separate 500k \emph{class-stratified random-split}
transfer point ($0.8485\pm0.0004$, 3 seeds) is free of the popularity shift
but not comparable to official-split entries (App.~\ref{app:data},
Table~\ref{tab:supp}). At full
products scale the VRAM-enabling claim is established by the already-measured
DPR-D peak ($16100$ vs.\ $23079$\,MiB, $1.43\times$); the
final core-1hop DPR-S attains $\mathbf{0.7814\pm0.0003}$ on the original
$2.45$M-node graph (official OGB sales-ranked split; Table~\ref{tab:main}),
exceeding the in-pipeline HSP+Expander (SAGE) reference ($0.7059$: same mask
and core, dense expander instead of the sparse decoder, reproduced unchanged
across the five-seed run) by $+7.55$\,pp, and
the extended DPR-S-M4 on the $684{,}233$-super-node graph attains
$0.7612\pm0.0006$ (super-node protocol; not ratio-comparable with the
original-node dense reference $0.8032$) with $\approx$$6{,}285$\,MiB peak training VRAM
(Fig.~\ref{fig:pareto}b).

\textbf{Deletion-side coverage in brief.} Under the masks actually applied
($cr{=}0.3$, original nodes), deletion-side coverage is dataset-dependent:
$74.4\%/25.6\%$ covered/fallback on arxiv vs.\ $42.4\%/57.6\%$ on
products-200k; the fallback keeps a projection of the node's own raw feature,
never a zero or shared vector (Table~\ref{tab:coverage},
Appendix~\ref{app:audits}).

\textbf{Edge-count audit in brief.} The audited combined structural retention
($E_p\cup E_r$) is $42.3\%$ (arxiv) / $46.4\%$ (products-200k) of unique
undirected edges, and the per-aggregation message proxy (encoder on $E_p$ +
decoder on $E_r$) is $25.0\%$ / $28.5\%$ of a full-graph aggregation
(Table~\ref{tab:edge-audit}, Appendix~\ref{app:audits}).

\textbf{Fallback stratification in brief.} Scoring every arxiv test node
through its \emph{actual} reconstruction path (extended variant DPR-S-M4,
seed-0 matched-capacity checkpoint): the covered path ($46.8\%$) reaches
$0.7518$, \emph{above} the kept-core encoder path ($0.6835$) --- sparse
reconstruction is the strongest link; the fallback ($20.4\%$) is honestly
weaker ($0.5980$); zeroing its raw-feature input costs $-6.8$\,pp and removing
the M4 channel another $1.3$\,pp, so the fallback is feature-grounded with a
long-range correction (Table~\ref{tab:c11-stratify},
Appendix~\ref{app:audits}).

\textbf{Seed robustness.} The core-1hop variant is now validated over
\textbf{5 seeds} (ogbn-arxiv): DPR-S core-1hop reaches \textbf{$0.6892\pm0.0020$},
and the dense gated-GGNN core-1hop topology control reaches $0.6885\pm0.0013$ (5 seeds) under the same
recipe, so the $\sim$$1.8$pp gap between core-1hop and full-rec is a stable
method difference, not seed noise.

\textbf{Expander gain is monotonic and robust in pruning severity.}
A $5$-seed sweep on PPI over pruning strength confirms the recovery margin
grows monotonically as pruning intensifies, and a backbone-dependence control
separates the capacity and prune-recovery components of the gain
(Appendix~\ref{app:ppi}); the same monotonic trend holds on arxiv
($+0.13\to+0.21$ across $cr$; single-seed $40$-epoch scan, Fig.~\ref{fig:curves}).

\noindent\textbf{Limitation study (PPI).} To expose the method's limits, we additionally stress-test on PPI; its weak multi-label backbone reveals a backbone/capacity confound, so we report it in the appendix as a limitation study rather than a core result.

% [Application-scale validation moved to the appendix (page budget)]
\noindent\textbf{Application-scale validation in brief.} On a real automotive
intrusion-detection benchmark (Car-Hacking), the same pipeline --- deployed
with the extended variant, not the core configuration --- reaches frame-level
accuracy comparable to the literature while training on coarsened graphs on a single
commodity GPU; the full protocol and per-class results are in
Appendix~\ref{sec:exp-carhacking}.

% [Efficiency, ablations, controlled studies moved to the appendix (page budget)]
\noindent\textbf{In brief.} Per-step \emph{training} is $1.57\times$ faster with
$1.39\times$ lower peak memory on the products 500k probe (DPR-D core-1hop
vs.\ no-prune, cloud GPU); end-to-end training is $2.70\times$ faster on
ogbn-arxiv (DPR-S core-1hop vs.\ its own dense reconstruction, RTX\,4050); and
the sparse bipartite attention reduces $O(N^2)$ attention to $O(|E_r|d)$ in an
attention-dominated regime (Appendix~\ref{sec:exp-eff}). Component ablations
attribute the gains to structure-aware reconstruction and double-layer
sparsification rather than raw parameter count (Appendix~\ref{sec:exp-abl});
two controlled studies---the pruning-direction-coupled position-encoding
control and the same-width dense-vs-sparse decoder control---support the design
choices (Appendix~\ref{sec:controlled}).

\paragraph{Reconstruction-edge audit.} The second sparsification layer is
quantified by structural probes rather than by a single figure, because the
denominator differs by construction. On ogbn-arxiv at the operating point used
throughout this paper (cr$=$0.3, $50{,}803$-node core), restricting the decoder
to the closed 1-hop core neighbourhood shrinks the reconstruction graph from
$2{,}315{,}598$ to $1{,}594{,}232$ edges (ratio $0.688$, a $-31.2\%$ cut;
2-hop $0.942$, incident-only $0.369$). On the products subgraphs the
corresponding cut is larger: on the products 500k probe (DPR-D, dense
core-1hop, simple core mask) the reconstruction graph shrinks from $392{,}787$
to $239{,}360$ edges ($-39.1\%$), and on the products 200k attention probe
from $56{,}570$ to $31{,}394$ ($-44.5\%$; App.~\ref{sec:exp-eff}). Every probe
uses a \emph{different denominator} from the original-graph structural audit
above (Table~\ref{tab:edge-audit}, where the core-1hop $E_r$ retains $34.5\%$
of arxiv edges and $E_p\cup E_r$ $42.3\%$), so each scope is reported
separately and never combined: the $39$--$45\%$ range quoted in this paper
refers to the products probes, while the arxiv locality probe is quoted at
$31.2\%$.

\subsection{Component Ablation: Where the Gain Comes From}
\label{sec:abl-main}
Having established the \emph{value} of the operating points against an
external method (Table~\ref{tab:main}, DSpar column; Fig.~\ref{fig:teaser}),
this subsection explains the \emph{cause}. It answers RQ2
(Table~\ref{tab:rq-map}): which component carries
the accuracy recovery, and does any optional component earn its cost? Each
study is stated as claim $\to$ answer; full protocols, seeds, and per-study
detail are in Appendix~\ref{sec:exp-abl}.

\paragraph{Pruner pluggability (the selector is deliberately simple).}
At the same compression rate and recipe, the learnable hyperbolic pruner
($0.7111\pm0.0053$, 5 seeds) matches the fixed-magnitude mask
($0.7068\pm0.0022$, 5 seeds) within the seed spread: in the configuration we
tested, the learnable selector shows no consistent advantage that would
justify its overhead. We therefore keep the fixed mask, which forces the
framework's value to come from the reconstruction stage.

\paragraph{Decoder mechanisms.}
Within the extended decoder family, the mechanism gains are quantified by
sequential addition (single seed): core-confidence residual fusion
($+1.76$\,pp), then a small learnable modulation ($+0.25$\,pp), then gated
core-1-hop message passing ($+1.07$\,pp). Removing the gated stage
degenerates the decoder to static-MLP reconstruction. Together they lift
accuracy from $0.6649$ to $0.6957$, though the per-step split depends on
the addition order (Fig.~\ref{fig:sens-opt}b).

\paragraph{Capacity control.}
Replacing the structure-aware decoder with a parameter-matched MLP-only
decoder (both within the extended DPR-S-M4 family; parameter ratio $1.001$,
same core mask, same encoder, five paired seeds) costs $2.51$\,pp. We attribute the recovery to structure-aware
diffusion rather than to additional capacity.

\paragraph{Efficiency rules out the learnable pruner.}
On the 500k products subgraph, swapping the fixed mask for the learnable
pruner keeps recovery nearly unchanged ($96.0\%$ vs.\ $96.1\%$) but erases
the training speedup ($1.57\times \to 0.91\times$): its per-step overhead
cancels the core savings.

\paragraph{Joint training is necessary; PE is dropped.}
Freezing either the encoder or the decoder (two-stage training) yields
negative gain, so both are optimized end-to-end. A controlled
link-prediction study shows the hyperbolic positional encoding's
contribution is gated by the pruning direction (harmful under the default
peripheral core), so the deployed method does not use it.

Overall, the ablations locate the gain in the sparse structure-aware decoder
trained jointly with the encoder; the selector, the positional encoding, and
the learnable pruner are optional variants, none of which the headline
results require (Appendix~\ref{sec:exp-abl}).

% =============================================================
\section{Conclusion}
\label{sec:conclusion}
% =============================================================
This paper proposes DPR-S, a decoupled prune-and-reconstruct framework that
reframes graph pruning from ``prune to preserve accuracy'' to ``prune for
efficiency, then reconstruct for accuracy''. Its organizing insight is that
one core node set bounds both computations: the same mask compresses the
training graph and restricts the decoder to the core's 1-hop
(double-layer sparsification), so aggressive node pruning becomes a
quantifiable resource--accuracy tradeoff rather than an accuracy tax. The
tradeoff is validated externally: under a matched same-card rerun, the
deployed DPR-S exceeds the edge sparsifier DSpar by $+1.36$\,pp at
$50.1\%$ lower peak training memory. Under the same-backbone recovery
protocol, the deployed DPR-S (5-seed) recovers $96.9\%$ of the
dense reconstruction reference DPR-D at the pure core-1hop operating point,
cutting reconstruction edges by $31.2\%$ on ogbn-arxiv and $39$--$45\%$ on the
products probes, and training $2.7\times$ faster.
On full ogbn-products the
framework cuts peak memory from $23{,}079$ to $16{,}100$\,MiB ($1.43\times$); DPR-D
recovers $99.3\%$ at full scale (3 seeds). The extended variants --- the
long-range descriptor DPR-S-M4 ($97.8\%$ recovery at near-full
reconstruction) and the M4-gated SBERT configuration ($0.7744$ on
ogbn-arxiv, within $0.6$\,pp of the OGB peak) --- are reported as extensions.
The all-node
serving benchmark quantifies the deployment cost: the Algorithm~\ref{alg:dprs-final}
streaming design scores all $169{,}343$ arXiv nodes in $1.55\pm0.01$\,s
($46\times$ faster than sampling-based inference) at a measured $3.3$\,pp
protocol cost relative to its own training-time evaluation
(Table~\ref{tab:routea}).

\paragraph{Limitations.} Four boundaries qualify the scope of our claims.
First, training is end-to-end and joint, requiring a decoder-specific
learning rate ($0.001$), and on already-trainable graphs DPR matches rather
than exceeds the full-graph accuracy ceiling --- its value lies in
substantially lowering the VRAM requirement for large graphs. Second, the
controlled comparison is anchored on ogbn-arxiv with the Car-Hacking
application graph as an external check; cross-domain transfer beyond these
two regimes is untested. Third, the $96.9\%$ recovery rate is an empirical
quantity --- we do not provide a theoretical bound relating decoder
capacity and core ratio to reconstruction error, and the
recovery-versus-budget curve may degrade on graphs whose homophily profile
differs from arxiv. Fourth, streaming inference removes the training-memory
ceiling but introduces per-node decoder calls, so its throughput advantage
is workload-dependent: read-heavy all-node scoring benefits most, whereas
sparse single-node queries amortize the cached core less. We view adaptive
core-ratio selection (replacing the validation sweep over $c_r$) as the
main open engineering problem.

Future work includes graph condensation as a core-selector replacement,
large-graph VRAM limits on ogbn-papers100M, and temporal-aware pruning with
edge-timestamp decay for streaming and dynamic graphs. Beyond core
selection, the reconstruction decoder itself is pluggable: replacing the
gated-GGNN core-1hop decoder with a lighter or a more expressive estimator
changes the recovery-cost trade-off but not the contract, and we expect the
optimal operating point to shift with the deployment's read/write profile
rather than with the backbone family.

\paragraph{Target systems.} The framework targets the settings that
motivate the entire study: industrial and expert-system platforms where
the host machine holds far more RAM than any attached GPU, and where
production graphs routinely exceed what a single commodity card can hold
at full fidelity --- the regime exemplified by deployed industrial GNNs
such as GeniePath at Alipay scale \citep{geniepath2018}. The streaming
serving design of Algorithm~\ref{alg:dprs-final} extends the same
prune-and-reconstruct contract to graphs whose coarsened training graph
alone would not fit on one device, and the reconstruction layer accepts
any message-passing encoder \citep{gilmer2017mpnn,gcn2017,graphsage2017,gat2018}, so deployment decisions reduce to a single
budget knob (the compression ratio $c_r$) rather than an architectural
redesign. Both regimes share one requirement: the serving stack must
tolerate a training graph far smaller than the deployment graph, which is
precisely the contract Algorithm~\ref{alg:dprs-final} formalizes --- and
the contract, not any particular backbone, is what transfers across
deployment sites.

\section*{Declaration of generative AI and AI-assisted technologies in the writing process}
During the preparation of this work the authors used Trae (GLM models) to polish
the language of the manuscript after the authors had written the initial draft,
and to obtain suggestions on the composition of the figures, which the authors
then drew themselves. After using this tool, the authors reviewed and edited the
content as needed and take full responsibility for the content of the
publication.

% =============================================================
%% Loading bibliography style file
\bibliographystyle{cas-model2-names}%
% Class default refs leading (8/10) leaves a 1-entry orphan page; tighten to 8/9.
\renewcommand{\bibfont}{\fontsize{8pt}{9pt}\selectfont}

%% Loading bibliography database
\bibliography{refs}

% ============ Appendix ============

\clearpage
\appendix
% Appendix body set one step smaller for the page budget (ends before the
% Reproducibility Statement, which returns to \normalsize).
\small
\setcounter{figure}{0}
\setcounter{table}{0}
\setcounter{equation}{0}
\renewcommand{\thefigure}{A.\arabic{figure}}
\renewcommand{\thetable}{A.\arabic{table}}
\renewcommand{\theequation}{A.\arabic{equation}}
\renewcommand{\theHfigure}{A.\arabic{figure}}
\renewcommand{\theHtable}{A.\arabic{table}}
\renewcommand{\theHequation}{A.\arabic{equation}}
\section{Appendix}

\subsection{Theoretical Analysis}
\label{sec:bounds}
The analysis below is \emph{explanatory in nature}: it characterizes why structure-aware reconstruction recovers accuracy (the tax is bounded and small when the signal is low-frequency), rather than proving the method correct. The bounds stay in the ``why it works'' role and are not presented as the paper's central contribution.

The central claim of the framework is that \emph{double-layer sparsification}
(pruning the main graph to a core, then sparsifying the reconstruction graph to
that core's 1-hop) recovers near-full accuracy while reducing both training and
reconstruction cost. We make this precise. Let the core mask
$C\subseteq V$ ($|C|=\lceil cr\cdot|V|\rceil$) define two induced subgraphs:
the \emph{training graph} $G_p=(C,\,E_p)$ with
$E_p=\{(u,v)\in E:u,v\in C\}$, and the \emph{reconstruction graph}
$G_r=(V,\,R_{1\text{-hop}}(C))$ whose edges are exactly the core-1hop incidence
(Definition~\ref{def:double}). Both are functions of the \emph{same} $C$, so the
two sparsification layers constitute one coupled decision rather than two composed operators.

\begin{definition}[Core-induced double sparsification]\label{def:double}
Given a graph $G=(V,E)$ and a core mask $C$, the decoder reconstructs every
pruned node $v\notin C$ from the core-1hop incidence
$R_{1\text{-hop}}(C)=\{(u,v):u\in C,\,(u,v)\in E\}$, i.e.\ the reconstruction
neighborhood of each node is a subset of the core's 1-hop. The sparse decoder
DPR-S uses $G_r$; the dense variant DPR-D uses the full topology $G$ as its
reconstruction graph.
\end{definition}

\begin{theorem}[Intra-framework reconstruction ceiling]\label{thm:ceiling}
Fix an encoder, and let $\mathcal{F}$ be a decoder function class that is
\emph{closed under edge masking}: for every decoder $f\in\mathcal{F}$ defined on
an edge set $R$, and every edge superset $R''\supseteq R$, the extension of $f$
to $R''$ obtained by zero-weighting all edges of $R''\setminus R$ is again a
member of $\mathcal{F}$. (This holds for attention-weighted or per-edge-gated
message passing, where zero weights reproduce dropping, and for linear message
passing whose aggregator is invariant to zero-contribution messages.) Under this
closure condition, the sparse decoder cannot exceed the dense variant in the
\emph{optimal-decoder} sense:
$\sup_{f\in\mathcal{F}}\mathrm{Acc}(f;R_{S})\le \sup_{f\in\mathcal{F}}\mathrm{Acc}(f;R_{D})$.
\end{theorem}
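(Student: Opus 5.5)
The plan is the standard ``a larger feasible set has a larger supremum'' argument: every decoder available on the sparse reconstruction graph $R_S$ will be realized, with identical predictions, by some decoder on the dense graph $R_D$. The first step is to fix the containment. By Definition~\ref{def:double}, $R_S=R_{1\text{-hop}}(C)=\{(u,v):u\in C,(u,v)\in E\}\subseteq E=R_D$. This uses only that the core-1hop incidence is a subset of the original edge set. The same containment holds for the DPR-S topology $E_r=E_{\text{core-1hop}}$ of \S\ref{sec:setup}, since $E_p\subseteq E_r\subseteq E$. So $R_D$ is an edge superset of $R_S$ for the same core mask $C$ and the same fixed encoder.

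Next comes the embedding step. Take any $f\in\mathcal{F}$ defined on $R_S$, and let $\tilde f$ be its extension to $R_D$ obtained by zero-weighting every edge of $R_D\setminus R_S$. The closure hypothesis gives $\tilde f\in\mathcal{F}$ as a decoder on $R_D$. I then argue that $\tilde f$ and $f$ produce identical reconstructed representations $\hat{\mathbf{H}}$ for every node, by induction over the $L$ propagation layers. The seeding $\hat{\mathbf{H}}^{(0)}$ depends only on $\mathbf{H}_C$ and $\mathbf{X}$, not on the edges. At layer $l$, each zero-weighted edge contributes a zero message, so $\mathrm{AGG}^{(l)}$ computed with scatter\_add over $R_D$ equals the one computed over $R_S$. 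Because the gate, update, norm and ReLU are applied node-wise to $[\hat{\mathbf{H}}^{(l-1)};\mathrm{AGG}^{(l)}]$, equal inputs give equal outputs. The confidence fusion and the head $\mathrm{Head}_\psi$ are edge-independent, so the predictions $\hat y$ coincide on every evaluated node. Hence $\mathrm{Acc}(f;R_S)=\mathrm{Acc}(\tilde f;R_D)\le\sup_{g\in\mathcal{F}}\mathrm{Acc}(g;R_D)$, and taking the supremum over $f$ proves the claim. If the supremum is $+\infty$ or not attained, the inequality still holds because only pointwise domination is used.

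The main obstacle is the claim that zero-weighted edges leave the aggregation unchanged. That holds for sum aggregation, and for attention or gating whose weights are applied after normalization. It can fail for degree-normalized means or for softmax normalizations computed over the full neighbor set, because there the extra edges change the denominators. I would handle this by reading the closure condition as the statement that the extension reproduces edge dropping exactly. This is the parenthetical in the theorem: ``zero weights reproduce dropping'' and ``aggregator invariant to zero-contribution messages''. I would also remark that for mean or softmax aggregators one must let $\tilde f$ renormalize over its nonzero-weight edges for the closure assumption to hold. I would note explicitly that the result concerns optimal decoders only. It does not compare trained decoders, which is consistent with the empirical observation in \S\ref{sec:exp-main} that a matched-width sparse decoder can outperform a dense one.
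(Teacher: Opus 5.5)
Your proposal is correct and follows the paper's argument. It takes the containment $R_S\subseteq R_D$ from Definition~\ref{def:double}, uses closure under edge masking to realize every sparse-graph decoder as a dense-graph member of $\mathcal{F}$ with identical outputs, and concludes the supremum inequality by pointwise domination; your layer-by-layer induction and your caveat about degree- or softmax-normalized aggregators only make explicit what the paper's proof takes for granted, namely that zero-weighting the extra edges of $R_D\setminus R_S$ reproduces dropping them.
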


\noindent\textit{Proof.} Both decoders are instances of the same function class
$f_{\theta}$; DPR-S computes $f_{\theta}$ on $(X,R_{S})$ and DPR-D on
$(X,R_{D})$, where $R_{S}(v)\subseteq R_{D}(v)$ for every $v$ by
Definition~\ref{def:double}. Because $\mathcal{F}$ is closed under edge masking,
for any $f\in\mathcal{F}$ the sparse-graph output $f(X,R_{S})$ can be reproduced
on the dense graph by some $f'\in\mathcal{F}$ that zero-weights all edges of
$R_{D}\setminus R_{S}$. Hence any accuracy realizable from $R_{S}$ is also
realizable from $R_{D}$, and
$\sup_{f\in\mathcal{F}}\mathrm{Acc}(f;R_{S})\le \sup_{f\in\mathcal{F}}\mathrm{Acc}(f;R_{D})$
-- a data-processing / information-monotonicity principle that holds uniformly
over $f$ once closure is granted. In our implementation both decoders use
feature-conditioned per-edge gating / attention weights, so the required
edge-masking mechanism is already present in the architecture.
\textit{Topology-isolating empirical calibration (same decoder, only $E_r$
varies).}
The theorem is calibrated by holding the decoder \emph{fixed} and changing only
the reconstruction graph, which isolates topology information from capacity
effects. On ogbn-arxiv:
(i)~for the DPR-D decoder alone (h256 gated GGNN), at a fixed (simple) pruner,
expanding the reconstruction graph from core-1hop to full lifts accuracy from
$0.6885\pm0.0013$ to $0.7068\pm0.0022$ ($+1.8$\,pp, 5 seeds), so the
topology-added information is small;
(ii)~on the sparse side, expanding from the core-1hop operating point
($0.6892\pm0.0020$, DPR-S) to the extended near-full configuration
($0.6953\pm0.0019$, DPR-S-M4) adds only $+0.6$\,pp (5 seeds each); this
sweep additionally engages the M4-gated decoder extension, so it
upper-bounds rather than isolates the topology-only effect, which is
therefore no larger than the dense-side estimate. Sweep (i) holds the
decoder fixed and directly measures the topology content bounded by the
theorem; sweep (ii) bounds it from above on the sparse side.\hfill$\square$

\noindent\textit{Decoder-family robustness (different decoder, same $E_r$).}
Theorem~\ref{thm:ceiling} fixes one function class and is \emph{not} a direct
comparison of the implemented DPR-S (h128 gated-MP sparse decoder) and DPR-D (h256
gated GGNN), which additionally differ in width and decoder family. As a
consistency check rather than a theorem instance, at the fixed core-1hop
reconstruction topology the two measured decoders agree within one standard
deviation (DPR-S $0.6892\pm0.0020$, DPR-D $0.6885\pm0.0013$, 5 seeds each),
confirming that the core-1hop reconstruction graph supplies enough information
for both decoder families to converge to a similar operating point, even though
they are not formally instances of the same $\mathcal{F}$.

\noindent\textit{Overall ceiling calibration.} The meaningful intra-framework
ceiling comparison is the dense variant \emph{within our framework} (DPR-D, same
pruning + full reconstruction), \emph{not} plain full-graph SAGE: the relevant
quantity is what reconstruction recovers \emph{given the pruning}. Against the
fair no-prune SAGE ($0.7156$), DPR-D near-full ($0.7111$) is within $\sim$$0.5$\,pp
and DPR-S-M4 near-full ($0.6953$) is $\sim$$2$\,pp below --- a $2.2\%$ relative tax
against its dense ceiling DPR-D, paid for training on only
$30\%$ of the nodes. The deployed core-1hop point of the final method
(DPR-S) sits a further $\sim$$0.6$\,pp below, at $0.6892$.

\noindent\textit{Budget scaling.} The $cr{=}0.3$ point is not the ceiling of
the framework: rerunning the same raw-feature extended configuration at the
intermediate $kr{=}0.5$ (coarsened graph of $70{,}744$ nodes, $41.8\%$ of the
original) reaches $0.7257\pm0.003$ --- $+3.0$\,pp over the $cr{=}0.3$ point.
This point is scored on the coarsened super-node graph with its own split, so
it documents budget scaling inside the extended configuration and is not
placed against the original-node no-prune reference ($0.7156$).

\begin{theorem}[Decode complexity]\label{thm:complexity}
Let $|E_r|$ be the number of reconstruction edges and $d$ the feature dimension.
One decode step costs $O(|E_r|\,d)$; under a bounded core-degree
($|E_r|=O(N)$, i.e.\ each node has $O(1)$ reconstruction neighbors) this
simplifies to $O(Nd)$.
\end{theorem}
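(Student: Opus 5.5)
The proof is a direct operation count for one decode step. I take the layer defined in (P2), $\mathrm{AGG}^{(l)}=\mathrm{scatter\_add}(\mathrm{Msg}_l(\hat{\mathbf{H}}^{(l-1)}[\mathrm{src}]),\mathrm{dst})$, followed by the gate, update, normalization and nonlinearity. I split its cost into an \emph{edge-wise} part and a \emph{node-wise} part.

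\textbf{Edge-wise part.} Edges enter only through the gather $\hat{\mathbf{H}}^{(l-1)}[\mathrm{src}]$ and the $\mathrm{scatter\_add}$ onto $\mathrm{dst}$. Both touch one $d$-dimensional row per edge of $E_r$, so they cost $O(|E_r|\,d)$. The message transform $\mathrm{Msg}_l$ needs a choice of how it is applied.
\begin{itemize}
\item If it is applied per node before the gather, which is legitimate because it depends only on the source row, it costs one $d\times d$ product per node, i.e.\ $O(Nd^2)$. That term belongs to the node-wise part.
\item If it is applied per edge, it costs $O(|E_r|d^2)$.
\end{itemize}
I adopt the convention, implicit in the statement, that $d$ is a fixed architectural constant (hidden width). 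Weight-matrix costs then count as constant per row, and the per-step cost reduces to $O(|E_r|d)$ for messages. The same convention covers the per-edge scalar gate or attention weights used in the sparse variant: each is computed from $O(d)$ features and applied by an $O(d)$ scaling.

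\textbf{Node-wise part.} $\mathrm{Gate}_l$, $\mathrm{Upd}_l$, $\mathrm{Norm}_l$, ReLU, and the residual sum act row-wise on at most $N$ rows. The confidence fusion (P1) acts on $|C|\le N$ rows. Together these give $O(Nd)$ under the same convention. Adding both parts, one decode step costs $O((|E_r|+N)d)$.

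\textbf{Absorbing the node term.} This is the main obstacle. Isolated pruned nodes (the fallback path) contribute no edges but still need their $\mathrm{InitProj}$ row, so $N$ cannot be absorbed into $|E_r|$ in general. I would handle this in one of two ways. Either restrict the count to the rows the decoder actually updates, i.e.\ endpoints of $E_r$, whose number is at most $2|E_r|$; in that case the fallback projections are a one-time $O(Nd)$ seeding cost, not a per-step cost. Or state the bound as $O((|E_r|+N)d)$ and note that it equals $O(|E_r|d)$ whenever $|E_r|=\Omega(N)$. I would take the first route, since it matches the phrase ``one decode step.''

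\textbf{Bounded-degree corollary.} If every node has $O(1)$ reconstruction neighbours, then $|E_r|=\sum_v |N_r(v)|=O(N)$. Substituting gives $O(Nd)$, and in that regime the node term is absorbed automatically.
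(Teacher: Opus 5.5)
Your argument is the same operation count as the paper's, organized more carefully. The paper charges each node $O(d\,|R_S(v)|)$ for one aggregation over its reconstruction neighbours and sums over the $N$ queries to get $O(|E_r|\,d)$. That line implicitly gives zero cost to a node with $R_S(v)=\varnothing$, and it charges nothing for the dense transforms. Your edge-wise/node-wise split exposes both omissions and gives the honest bound $O((|E_r|+N)d)$. The bounded-degree case absorbs the node term, so both arguments reach $O(Nd)$. Yours, however, shows that the unconditional $O(|E_r|d)$ claim silently drops an $O(Nd)$ term.

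Two adjustments are worth making.

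First, the constant-$d$ convention is heavier than you need, and it makes the $d$ in the bound cosmetic. The convention implicit in the paper's proof is to charge only the sparse aggregation. Since $\mathrm{Msg}_l$ depends only on the source row, every $d\times d$ product can be done per node. The only $E_r$-dependent work is then the $O(|E_r|d)$ gather/scatter, and the remainder ($\mathrm{Msg}_l$, $\mathrm{Gate}_l$, $\mathrm{Upd}_l$) is an $E_r$-independent $O(Nd^2)$ term.

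Second, of your two ways of handling the node term, only the second holds for the layer as written. $\mathrm{Norm}_l$ and the ReLU act on every row, so a row with $\mathrm{AGG}^{(l)}_v=0$ is still re-normalized and rectified at each layer. The set of rows the decoder actually updates is therefore all of $V$, not just the endpoints of $E_r$. Your preferred first route relies on the paper's informal remark that such nodes ``keep'' their projected feature, i.e.\ on an implementation that skips them.

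Lead with $O((|E_r|+N)d)$, which equals $O(|E_r|d)$ whenever $|E_r|=\Omega(N)$, and keep the endpoint-restricted count as a remark about a particular implementation.
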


\noindent\textit{Proof.} The decoder performs one aggregation/attention over the
reconstruction neighbors of each node: $O(d\cdot|R_S(v)|)$ per query, summed over
all $N$ queries gives $O(d\sum_v|R_S(v)|)=O(|E_r|\,d)$. Under a bounded-degree
(bounded core-1hop) graph, $|R_S(v)|=O(1)$, so $|E_r|=O(N)$ and the total is
$O(Nd)$.\hfill$\square$

\noindent\textit{Calibration \& caveat.} On the products 200k subgraph the reconstruction graph has
$31\,394$ core-1hop edges vs $56\,570$ for full reconstruction ($-44.5\%$), yielding
a measured $1.30\times$ decoder speedup; the average reconstruction degree
is locally bounded on the tested graphs. The $O(Nd)$ form is \emph{not}
unconditional: in heavy-tailed /
scale-free graphs where maximum degree grows with $N$, $|E_r|$ grows
super-linearly and the
cost is $O(|E_r|\,d)$. Our $O(Nd)$ claim therefore holds under the stated bounded-degree
assumption, which is satisfied by the evaluated datasets.

\begin{lemma}[Edge retention under independent core selection]\label{lem:edge}
\emph{If the core set $C$ is drawn by i.i.d.\ Bernoulli trials with
$\mathbb{P}(v\in C)=cr$ independently of the edge endpoints, then
$\mathbb{E}[\text{edges kept}]=cr^{2}|E|$.}

\noindent\textit{Proof.} An edge $(u,v)$ is kept iff $u,v\in C$; by independence
$\mathbb{P}(u,v\in C)=cr^{2}$, and linearity of expectation gives
$\mathbb{E}[\text{kept}]=cr^{2}|E|$.\hfill$\square$
\end{lemma}

\noindent\textit{Calibration.} Measured edge retention at $cr{=}0.3$ is $7.8\%$
($90805/1166243$), \emph{below} $cr^{2}=9\%$: the actual selectors
(fixed-magnitude or learned hyperbolic) are correlated with node norm, violating
the independence assumption, so the realized rate is below the i.i.d.\
expectation. In other words, the real core selectors keep \emph{fewer} edges than
the i.i.d.\ model predicts --- a more compact core --- and $cr^{2}$ serves as an
upper envelope for the realized edge retention.

\begin{theorem}[Accuracy tax: a two-term representation--confidence bound]\label{thm:tax}
Let $y\in\mathbb{R}^{N}$ be the label signal and let
$L=\Phi\Lambda\Phi^{\top}$ be the eigendecomposition of the normalized Laplacian,
$0=\lambda_{1}\le\cdots\le\lambda_{N}\le 2$, with orthonormal eigenvectors
$\phi_{1},\dots,\phi_{N}$. Let $H^{*}\in\mathbb{R}^{N\times d}$ be the
representation of the full-graph reference decoder, $\hat{H}$ that of the method,
and let $\psi:\mathbb{R}^{d}\to\mathbb{R}^{C}$ map a node representation to class
logits. Let $Y\in\mathbb{R}^{N\times C}$ be the one-hot label matrix
($Y_{v,c}=1$ iff node $v$ has class $c$). Assume:
(A1)~\emph{Approximate low-pass reconstruction.} Write $\hat{H}=P_{\le w_{\max}}H^{*}+\varepsilon_{\mathrm{model}}$,
where $P_{\le w_{\max}}=\sum_{\lambda_{i}\le w_{\max}}\phi_{i}\phi_{i}^{\top}$ is
the spectral projector onto modes below cutoff $w_{\max}$, and
$\varepsilon_{\mathrm{model}}$ accounts for the deviation of the real DPR-S decoder
(gated message passing, feature-dependent gating, MLP, residual connections) from a pure spectral projector.
Empirically, on ogbn-arxiv the reconstruction decoder's response is biased toward
low frequencies (\S\ref{sec:tax}), consistent with a low-pass spectral response under
gated message passing / diffusion \citep{nt2019,shuman2013}; (A1) is therefore presented as an
\emph{explanatory structural assumption}, not as an automatically guaranteed design property.
(A2)~\emph{Lipschitz head}: $\|\psi(h)-\psi(h')\|_{\infty}\le G\|h-h'\|_{2}$ for
all $h,h'$.
No margin assumption is made. Instead, let
$\delta_{v}:=\max_{c}\psi(H^{*}_{v})_{c}-\max_{c\neq c^{*}(v)}\psi(H^{*}_{v})_{c}$
be the reference's top-2 logit gap (decision confidence) at node $v$, with
$c^{*}(v)$ its argmax class, and for any threshold $\bar{\gamma}>0$ let
$n(\bar{\gamma}):=\bigl|\{v:\delta_{v}<\bar{\gamma}\}\bigr|$ be the
low-confidence mass (a measured quantity, defined whether or not $v$ is
classified correctly). Write
$\Delta:=\hat{H}-H^{*}=(P_{\le w_{\max}}-I)H^{*}+\varepsilon_{\mathrm{model}}$ with
rows $\Delta_{v}$; with $P_{>w_{\max}}:=I-P_{\le w_{\max}}$ an orthogonal projector, Parseval gives $\|P_{>w_{\max}}H^{*}\|_{F}^{2}=\sum_{\lambda_{i}>w_{\max}}\|H^{*\top}\phi_{i}\|_{2}^{2}$, and Young's inequality applied to $\Delta=-P_{>w_{\max}}H^{*}+\varepsilon_{\mathrm{model}}$ yields
$\|a+b\|_{F}^{2}\le 2\|a\|_{F}^{2}+2\|b\|_{F}^{2}$, so
\begin{equation}
\|\Delta\|_{F}^{2}
\;\le\;
2\sum_{\lambda_{i}>w_{\max}}\bigl\|H^{*\top}\phi_{i}\bigr\|_{2}^{2}
+2\,\|\varepsilon_{\mathrm{model}}\|_{F}^{2}.
\label{eq:tax-rep}
\end{equation}
Then for every $\bar{\gamma}>0$, with $\bar L:=2G/\bar{\gamma}$, the
\emph{absolute} accuracy drop $\Delta\mathrm{Acc}:=\mathrm{Acc}(H^{*})-\mathrm{Acc}(\hat{H})$
obeys the bound
\begin{equation}
\Delta\mathrm{Acc}
\;\le\;
\frac{n(\bar{\gamma})}{N}
\;+\;
\bar L\,\sqrt{\frac{\|\Delta\|_{F}^{2}}{N}},
\label{eq:tax-main}
\end{equation}
which requires only~(A1)--(A2). The cross term of the
two-term decomposition ($\Delta={-}P_{>w_{\max}}H^{*}+\varepsilon_{\mathrm{model}}$)
is dropped in favor of an \emph{upper} bound via Young's inequality, so that
no orthogonality between the spectral tail and the model residual is assumed
(Appendix~\ref{app:proof-thm3}); the resulting factor of $2$ makes the bound
conservative rather than tight, which is appropriate for an explanatory
decomposition rather than a forward certificate.
Two corollaries illustrate its use.
\begin{itemize}
  \item[\textbf{(C1)}]~\emph{Label-energy reading (optional A3).}
  If one further assumes \emph{(A3) representation--label alignment on discarded
  modes}: $\|H^{*\top}\phi_{i}\|_{2}^{2}\le C_{\mathrm{rep}}\|Y^{\top}\phi_{i}\|_{2}^{2}$
  for all $\lambda_{i}>w_{\max}$ (the extreme high-frequency tail is small in a
  per-mode sense, yet $E_{\mathrm{out}}(0.20)$ is not small in aggregate; this loose
  upper bound is compatible with the measured small tax but does not predict or
  explain its numerical value),
  then $\|\Delta\|_{F}^{2}\le 2\,C_{\mathrm{rep}}E_{\mathrm{out}}(w_{\max})+2\,\|\varepsilon_{\mathrm{model}}\|_{F}^{2}$
  with $E_{\mathrm{out}}(w_{\max})=\sum_{\lambda_{i}>w_{\max}}\|Y^{\top}\phi_{i}\|_{2}^{2}$,
  and \eqref{eq:tax-main} reduces to
  $\Delta\mathrm{Acc}\le n(\bar\gamma)/N+\bar L\\\sqrt{2\,(C_{\mathrm{rep}}E_{\mathrm{out}}+\|\varepsilon_{\mathrm{model}}\|_{F}^{2})/N}$.
  Because (C1) is an \emph{upper} bound on $\|\Delta\|_{F}^{2}$, bound~\eqref{eq:tax-main} is everywhere at least as tight as this label-energy form;
  The label-spectrum evidence on ogbn-arxiv (labels are $18\times$
  low-frequency-concentrated, \S\ref{sec:tax}) supports the plausibility of A3,
  but the per-mode representation--label alignment is \emph{not} directly
  verified; A3 is \emph{not} a load-bearing assumption of the theorem.
  \item[\textbf{(C2)}]~\emph{Explanatory calibration (loose, qualitative).}
  The two terms of \eqref{eq:tax-main} have a concrete origin: the first is the
  reference's low-confidence mass $n(\bar{\gamma})/N$, measured directly from the
  trained reference logits; the second scales with the discarded spectral-tail
  energy plus the model residual. On ogbn-arxiv a pure-CPU Lanczos-quadrature
  measurement shows the labels are strongly low-frequency biased --- the bottom
  $2.5\%$ of eigenmodes carry $44.8\%$ of the label energy ($18\times$ the uniform
  concentration), while the highest-frequency modes are localized bumps carrying
  only $\sim$$0.014\%$ label energy. Concretely, for an illustrative cutoff at $w_{\max}{=}0.20$,
$55.2\%$ of raw label energy is nominally out of band, yet the measured near-full tax is a small $\sim$$2.2\%$
  --- because that out-of-band energy is almost entirely non-discriminative, not
  signal the reconstruction must pay for. The bound is therefore
  \emph{loose, qualitative} --- it explains the \emph{origin} of the tax and its
  empirical smallness, which is consistent with its role as an
  \emph{explanatory decomposition} of an already-measured tax
  (Appendix~\ref{app:proof-thm3}) rather than a tight forward prediction.
\end{itemize}
\textit{Monotonicity of the spectral-tail term.}
The tail energy $\sum_{\lambda_{i}>w_{\max}}\|H^{*\top}\phi_{i}\|_{2}^{2}$ is
non-increasing in $w_{\max}$ (as the cutoff grows, fewer modes remain in the
discarded tail, more modes are retained, and the discarded energy shrinks). If the residual $\|\varepsilon_{\mathrm{model}}\|_{F}$
is held approximately constant across operating points, the composite term
$2\sum\|H^{*\top}\phi_{i}\|_{2}^{2}+2\|\varepsilon_{\mathrm{model}}\|_{F}^{2}$
shares this monotonicity; empirically, $w_{\max}(cr)$ grows with the core ratio
(weaker pruning admits a larger effective receptive field), so the representation
upper bound shrinks as pruning is relaxed.
\end{theorem}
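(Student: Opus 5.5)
The bound \eqref{eq:tax-main} reduces to a per-node statement: reconstruction can change a node's prediction only if its representation error is large relative to its reference confidence gap. We then count such nodes, using Markov's inequality and Cauchy--Schwarz. Accuracy is an average of indicators, so
\[
\Delta\mathrm{Acc}\le \frac{1}{N}\bigl|\{v:\arg\max\psi(\hat H_v)\neq\arg\max\psi(H^*_v)\}\bigr|.
\]
This holds because a node can only lose correctness if its predicted class changes; nodes that change from wrong to right only help.

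\textbf{Per-node flip condition.} Fix $v$ and write $c^*=c^*(v)$. Suppose the prediction flips to some $c\neq c^*$. Then
\[
\psi(\hat H_v)_c-\psi(\hat H_v)_{c^*}\ge 0,
\qquad
\psi(H^*_v)_{c^*}-\psi(H^*_v)_c\ge\delta_v .
\]
Adding these gives
\[
\delta_v\le \bigl[\psi(\hat H_v)_c-\psi(H^*_v)_c\bigr]+\bigl[\psi(H^*_v)_{c^*}-\psi(\hat H_v)_{c^*}\bigr]\le 2G\|\Delta_v\|_2,
\]
where the last step uses (A2) on each bracket. So a flip requires $\|\Delta_v\|_2\ge\delta_v/(2G)$.

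\textbf{Splitting the flipped nodes.} For a given $\bar\gamma$, partition the flipped nodes into two groups.
\begin{itemize}
\item Nodes with $\delta_v<\bar\gamma$: there are at most $n(\bar\gamma)$ of them by definition.
\item Nodes with $\delta_v\ge\bar\gamma$: each satisfies $\|\Delta_v\|_2\ge\bar\gamma/(2G)=1/\bar L$.
\end{itemize}
For the second group, use a first-moment Markov count, $\bar L\|\Delta_v\|_2\ge1$, to get
\[
|\{v:\|\Delta_v\|_2\ge 1/\bar L\}|\le \bar L\sum_v\|\Delta_v\|_2\le \bar L\sqrt{N}\,\|\Delta\|_F ,
\]
with Cauchy--Schwarz in the second step. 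Dividing by $N$ gives exactly $\bar L\sqrt{\|\Delta\|_F^2/N}$, the second term of \eqref{eq:tax-main}. A second-moment Markov count would instead give $\bar L^2\|\Delta\|_F^2/N$, which has the wrong form. So the first-moment route plus Cauchy--Schwarz is the intended one.

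\textbf{The spectral inequality \eqref{eq:tax-rep}.} This is a separate and routine step. By (A1), $\Delta=-P_{>w_{\max}}H^*+\varepsilon_{\mathrm{model}}$. Apply $\|a+b\|_F^2\le2\|a\|_F^2+2\|b\|_F^2$. Then expand $\|P_{>w_{\max}}H^*\|_F^2=\sum_{\lambda_i>w_{\max}}\|H^{*\top}\phi_i\|_2^2$ using the orthonormality of the $\phi_i$. Corollary (C1) follows by substituting (A3) mode by mode. The monotonicity remark holds because raising $w_{\max}$ removes nonnegative terms from the tail sum.

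\textbf{Expected obstacle.} The only delicate point is bookkeeping. $\delta_v$ is measured against the reference's own argmax $c^*(v)$, not the true label. The reduction to prediction flips must therefore account for nodes the reference already misclassifies. These can only contribute nonpositively to $\Delta\mathrm{Acc}$, since a wrong reference prediction cannot lose correctness. Ties in the argmax are handled by taking $\ge$ in the flip condition, so the per-node inequality stays valid without any margin assumption.
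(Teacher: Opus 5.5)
Your proposal is correct and follows the paper's proof essentially step for step. The shared steps are: the reduction of $\Delta\mathrm{Acc}$ to the flip rate $|F|/N$; the per-node bound $\delta_v\le 2G\|\Delta_v\|_2$ obtained from (A2); the split of flipped nodes at the threshold $\bar\gamma$, with a first-moment count plus Cauchy--Schwarz; and the Parseval-plus-Young derivation of \eqref{eq:tax-rep}. The only difference is cosmetic: you add two inequalities, whereas the paper writes the same bound as a three-term telescoping decomposition of $\psi(H^*_v)_{c^*}-\psi(H^*_v)_{\hat c}$.
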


\noindent\textit{Proof sketch.} By~(A1), Parseval for the orthogonal spectral projector gives $\|P_{>w_{\max}}H^{*}\|_{F}^{2}=\sum_{\lambda_{i}>w_{\max}}\|H^{*\top}\phi_{i}\|_{2}^{2}$, and Young's inequality applied to $\Delta=-P_{>w_{\max}}H^{*}+\varepsilon_{\mathrm{model}}$ yields the upper bound~\eqref{eq:tax-rep} (Step~1). A flipped prediction is coupled to the reference's decision confidence by $\delta_{v}\le 2G\|\Delta_{v}\|$ (A2, Step~2); Steps~3--4 combine via Cauchy--Schwarz to yield bound~\eqref{eq:tax-main} using only~(A1)--(A2). The optional A3 then upper-bounds $\|\Delta\|_{F}^{2}$ by $2C_{\mathrm{rep}}E_{\mathrm{out}}+2\|\varepsilon_{\mathrm{model}}\|_{F}^{2}$, giving the label-energy reading (C1); the loose, qualitative calibration (C2) estimates $\|\Delta\|_{F}^{2}$ numerically by Lanczos quadrature. The complete step-by-step derivation and empirical calibration are in Appendix~\ref{app:proof-thm3}.\hfill$\square$

\paragraph{Corollary (conditional topology gap -- spectral reading of
  Theorem~\ref{thm:ceiling}).}
Consider a \emph{single fixed decoder function class} $\mathcal{F}$ and a fixed
representation space $\mathbb{R}^{d}$. Take the dense-reconstruction reference
$\hat{H}_{D}$ (trained on the full reconstruction graph $R_{D}$) and the
sparse-reconstruction variant $\hat{H}_{S}$ (trained on the core-1hop
reconstruction graph $R_{S}$), both drawn from the same $\mathcal{F}$ and the
same hidden dimension. The \emph{topology-only} absolute accuracy gap
$\Delta\mathrm{Acc}:=\mathrm{Acc}(\hat{H}_{D})-\mathrm{Acc}(\hat{H}_{S})$
(reference $\hat{H}_{D}$, method $\hat{H}_{S}$) is exactly the quantity that
Theorem~\ref{thm:tax} bounds, so that theorem applies: with low-confidence mass $n_{D}(\bar\gamma)$ of
the dense reference,
\begin{align}
\mathrm{Acc}(\hat{H}_{D})-\mathrm{Acc}(\hat{H}_{S})
\;\le\;
\frac{n_{D}(\bar\gamma)}{N}
+
\bar L\,\sqrt{\frac{\|\Delta_{S|D}\|_{F}^{2}}{N}},
\label{eq:topo-gap}
\end{align}
where
$\|\Delta_{S|D}\|_{F}^{2}
 \le 2\sum_{\lambda_{i}\in(w_{S},\,w_{D}]}\|(\hat{H}_{D})^{\top}\phi_{i}\|_{2}^{2}
 +2\|\varepsilon_{\mathrm{model}}\|_{F}^{2}$
is the spectral-band difference plus model residual, and $w_{S}<w_{D}$ are the
effective cutoffs of the sparse and dense reconstruction graphs. The index set
$(w_{S},w_{D}]$ contains only the Laplacian modes the 1-hop restriction
discards; on ogbn-arxiv these are high-frequency and empirically low-energy,
so the bound is structurally small.

\noindent
\textbf{Empirical counterpart (avoiding dimension / family mismatch).}
In our implementation, DPR-D (h=256 gated GGNN) and
DPR-S (h=128 gated-message-passing sparse decoder) differ
in both hidden dimension and function family, so the formal
bound above does \emph{not} map directly onto the headline reported numbers.
Instead, the topology gap is measured \emph{empirically}: (i)~for the DPR-D
decoder alone, cutting the reconstruction graph from full to core-1hop at a
fixed pruner costs $-1.8$\,pp ($0.7068\to0.6885$) --- a same-decoder sweep that
directly measures the topology content bounded by the theorem; (ii)~on the sparse side, moving
from the core-1hop operating point ($0.6892\pm0.0020$, DPR-S) to the extended
near-full configuration ($0.6953\pm0.0019$, DPR-S-M4) adds only $+0.6$\,pp
(5 seeds each); this comparison additionally engages the M4-gated decoder
extension, so it upper-bounds rather than isolates the topology-only effect.
Both observations confirm that topology beyond core-1hop carries little
recoverable signal, consistent with
the spectral reading of the conditional bound.

% =============================================================

% =============================================================
\subsection{Why Reconstruction Recovers Accuracy}
\label{app:why}
\begin{enumerate}
  \item \textbf{Two-stage approximation.} Full-graph SAGE computes
    $\mathbf{H}_{\text{full}}=\mathrm{SAGE}^*(X,E)$. We approximate it by (i) cheap
    encoding on $E_p$ plus (ii) fixed-$L$ feature diffusion on $E_r$, a light
    propagation that need not backprop through a deep network over the full graph.
  \item \textbf{Spectral view.} Pruning changes the Laplacian $L\to L_p$, shrinking
    the effective receptive field. The decoder re-introduces full-connectivity
    diffusion at the \emph{feature} level, recovering the structural signal $L_p$
    lost, without retraining on $L$.
  \item \textbf{Fragmentation-tolerant reconstruction.} Aggressive core selection
    produces a highly disconnected induced subgraph: at $cr{=}0.3$ on ogbn-arxiv,
    $57.6\%$ of core nodes have degree $0$ inside the core (Fig.~\ref{fig:mechanism}A).
    This is deliberately not treated as a failure --- the decoder does not depend
    on message passing \emph{within} the core. Instead, core embeddings serve as
    \emph{anchor seeds} for the reconstruction graph, and reconstruction accuracy
    actually \emph{improves} as pruning becomes more aggressive (the decoder
    recovery margin grows from $+0.13$ at $cr{=}0.7$ to $+0.21$ at $cr{=}0.3$ on
    ogbn-arxiv, \S\ref{sec:tax}). The non-trivial finding is that the method
    works reliably even when the training core is topologically fragmented:
    empirically, under the mask actually applied at $cr{=}0.3$ on the
    \emph{original} node set (Table~\ref{tab:coverage}), $74.4\%$ of pruned
    nodes on ogbn-arxiv have at least one core neighbor and receive gated
    core-1hop messages, while the remaining $25.6\%$ have no core neighbor and
    keep their projected own feature (the connectivity-induced fallback; on products-200k the split is
    $42.4\%/57.6\%$); reliability therefore comes from this explicit
    per-node fallback path rather than from an assumption that every pruned node
    sits next to a core anchor.
  \item \textbf{Ceiling (formal).} Within our framework the sparse decoder is
    information-bounded by the \emph{dense} variant DPR-D (same pruning, full-graph
    reconstruction), \emph{not} by plain full-graph SAGE: the relevant ceiling is
    what reconstruction can recover \emph{given the pruning}, not the no-prune
    baseline. DPR-D's reconstructive decoder is a strictly richer function class
    than SAGE's fixed message passing, so whether DPR-D reaches or exceeds plain SAGE
    depends on pruning severity; at $cr{=}0.3$ DPR-D ($0.7111$) is within $\sim$$0.5$\,pp
    of the fair no-prune SAGE ($0.7156$), and DPR-S-M4 ($0.6953$, near-full
    reconstruction) sits $\sim$$1.6$\,pp below DPR-D---a $2.2\%$ relative tax of sparse
    reconstruction---and $\sim$$2$\,pp below the fair SAGE; the deployed
    core-1hop point of the final method (DPR-S) sits a further $\sim$$0.6$\,pp below, at $0.6892$. The precise
    statement and proof are Theorem~\ref{thm:ceiling} (\S\ref{sec:bounds});
    it matches our measurements (DPR-S-M4 $0.6953<0.7111$ DPR-D near-full, and
    $0.6892\approx0.6885$ core-1hop).
\end{enumerate}

\begin{figure*}[!htbp]
  \centering
  \includegraphics[width=0.88\textwidth]{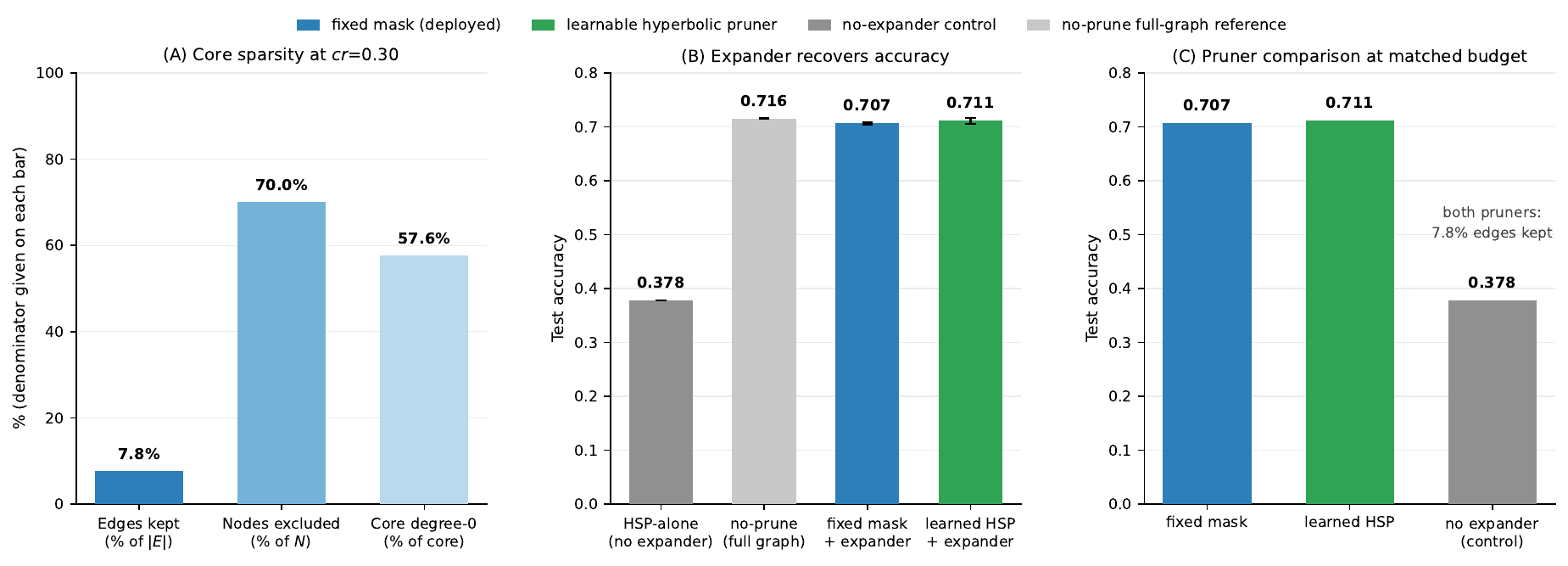}
  \caption{Method mechanism. (A) Core sparsity at $cr{=}0.30$: keeping $30\%$
  nodes retains only $7.8\%$ of edges
  ($|E_{p}|/|E|=90\,805/1\,166\,243$; unique undirected edges with \emph{both}
  endpoints in the core), and $57.6\%$ of the core nodes are degree-0
  \emph{within} the core-induced subgraph --- each bar is plotted with its own
  denominator ($|E|$, $N$, core-node count, respectively); the training-cost
  saving comes from this extreme edge reduction on the encoder side. The
  core-1hop reconstruction graph $E_r$ (cut edges between core and deleted
  nodes) retains $34.5\%$ of edges, so the \emph{combined structural} retention
  $E_p\cup E_r$ is $42.3\%$ of unique undirected edges, and the
  \emph{per-aggregation message proxy} (encoder on $E_p$ $+$ decoder on $E_r$) is
  $\approx$$25\%$ of a full-graph aggregation
  (Table~\ref{tab:edge-audit}). (B) The expander
  recovers accuracy after pruning: HSP-alone (no expander) collapses to
  $0.378$ --- its classifier scores the \emph{full} node set, with core nodes
  keeping their encoded features and pruned nodes zero-filled exactly as in
  training (no expander, no fallback) --- while fixed-mask + Expander reaches
  $0.7068$ and learned HSP +
  Expander $0.7111$, near the fair no-prune SAGE $0.7156$. (C) The simpler
  fixed-magnitude mask is no worse than the learnable hyperbolic pruner on
  \emph{both} accuracy and edges kept, so we adopt the simpler fixed mask
  (observation at matched budget). \textbf{Train/deploy split:} the
  \emph{encoder} runs only on the core graph
  $G_p=(C,E_p)$; the \emph{decoder} is trained jointly on the reconstruction
  graph $E_r$ over \emph{all} nodes and is then frozen. At deployment only the
  frozen core embedding $\mathbf{H}_C$ resides in accelerator memory, while
  $E_r$ stays in host RAM; the decoder reconstructs pruned nodes from
  $E_r$ plus streamed per-node raw-feature rows $x_v$ (the extended variant
  additionally streams precomputed $\mathrm{m4}_v$ rows) -- the encoder is
  \emph{not} re-run on the reconstruction graph.}
  \label{fig:mechanism}
\end{figure*}

\subsection{Understanding the Accuracy Tax}
\label{sec:tax}
We call the accuracy lost to pruning--reconstruction relative to the full graph the
\emph{accuracy tax} (tax for short), defined as the relative loss:
$\text{tax} = 1 - \text{Acc}(\text{method}) / \text{Acc}(\text{full graph} + \text{decoder})$.
We report this \emph{relative} loss in percent and keep it distinct from the
\emph{absolute} accuracy drop
$\Delta\mathrm{Acc}:=\mathrm{Acc}(\mathrm{ref})-\mathrm{Acc}(\mathrm{method})$
bounded by Theorem~\ref{thm:tax}; the two are related by
$\Delta\mathrm{Acc}\approx\text{tax}\times\mathrm{Acc}(\mathrm{ref})$.
Throughout, Theorem~\ref{thm:tax} and Appendix~\ref{app:proof-thm3} bound the
absolute drop $\Delta\mathrm{Acc}$, whereas all reported percentages in this
section are the relative tax.
On ogbn-arxiv, DPR-S-M4 near-full reconstruction recovers $97.8\%$ (tax $2.2\%$) and
gated-GGNN core-1hop topology control recovers $96.8\%$ (tax $3.2\%$); on products the \emph{dense reconstruction reference} DPR-D
shows a near-zero tax at full scale ($99.3\%$ recovery over 3 seeds,
range $99.3$--$99.5\%$; $96.1\%\pm0.4\%$ on a 500k subgraph, 3 seeds). The sparse
DPR-S recovers $96.5\%$ of DPR-D on the same 500k subgraph ($0.6635$ vs.\ $0.6874$,
3 seeds, tax $\approx$$3.5\%$), which is an independent consistency check against the measured near-full tax of
$2.2\%$ on arxiv rather than a calibration of the bound. At full scale, the extended variant DPR-S-M4 reaches $0.7612$ (3 seeds) at $6285\pm14$\,MiB peak training VRAM (Fig.~\ref{fig:pareto}b); its super-node evaluation is not protocol-matched with the original-node references, so no same-protocol tax is available at this scale --- the tax spectrum above is measured only where both sides share a protocol. The strictly core-1hop final configuration (DPR-S, $0.36$M parameters, GraphSAGE $h{=}128$ on the original-node graph) reaches $0.7814\pm0.0003$ (3 seeds) on the official OGB sales-ranked split --- $2.0$\,pp above the extended variant's super-node score --- closing the full-scale accuracy loop for the final method (Table~\ref{tab:main}). Here the extended variant is scored on the node-merge super-node graph ($684{,}233$ super-nodes) and the final on the original $2.45$M-node graph; both share the same temporal source.
The closed-form bound of Theorem~\ref{thm:tax} is given an explanatory reading
below by the empirical spectrum measurement.

\begin{itemize}
  \item \textbf{Design motivation (compression--recovery trade-off).} Aggressive
  pruning compresses $N$ nodes into a $\text{cr}\cdot N$ core; in our experiments,
  more aggressive pruning (smaller $\text{cr}$) leaves a larger accuracy floor,
  consistent with the expectation that recovery decreases as $\text{cr}$ decreases.
  This trade-off motivates our \emph{decoupled} architecture,
  which separates compression (core selection) from recovery (the sparse decoder) so
  the model better approximates the accuracy ceiling at a given compression rate.
  The trend is empirically confirmed in~\S\ref{sec:exp}: on ogbn-arxiv the decoder
  gain grows from $+0.13$ (cr$=$0.7) to $+0.21$ (cr$=$0.3), and the DPR
  family recovers $96.9\%$ (final DPR-S) to $97.8\%$ (extended DPR-S-M4) of
  the dense reconstruction reference.

  \item \textbf{Spectral view (qualitative reading of Theorem~\ref{thm:tax}).} The label is a
 signal $y$ on the normalized Laplacian $L$. On ogbn-arxiv, $k$-hop message passing,
 coarsening, and gated diffusion reconstructions are \emph{empirically biased toward
 low frequencies} (labels are $18\times$ more concentrated in the bottom $2.5\%$ of
 eigenmodes than in the tail), so the method's effective receptive field is well
 described by a spectral cutoff $w_{\max}$ on the normalized-Laplacian spectrum.
 Theorem~\ref{thm:tax} then gives the closed-form two-term bound
 $\Delta\mathrm{Acc}\le n(\bar{\gamma})/N+\bar L\sqrt{\|\Delta\|_{F}^{2}/N}$ with
 $\|\Delta\|_{F}^{2}\le 2\sum_{\lambda_{i}>w_{\max}}\|H^{*\top}\phi_{i}\|_{2}^{2}+2\|\varepsilon_{\mathrm{model}}\|_{F}^{2}$
 the Young-inequality representation reconstruction error (measurable via
 Lanczos quadrature); under the optional representation--label alignment (A3,
 Corollary~C1) this becomes
 $\bar L\sqrt{(2C_{\mathrm{rep}}E_{\mathrm{out}}+2\|\varepsilon_{\mathrm{model}}\|_{F}^{2})/N}$
 with
 $E_{\mathrm{out}}(w_{\max})=\sum_{\lambda_{i}>w_{\max}}\|Y^{\top}\phi_{i}\|_{2}^{2}$
 the label energy beyond the cutoff. The constants are structural and measured
 (the head's Lipschitz constant $G$ and the measured low-confidence mass
 $n(\bar{\gamma})$); A3 enters only the optional label-energy reading~(C1), not the
 core bound. The empirically observed low-frequency bias of arxiv makes the
 discarded spectral-tail energy (and hence the tax) small.

\end{itemize}

\paragraph{Empirical: arxiv labels are strongly low-frequency (why the tax is small).}
A pure-CPU spectrum computation (Lanczos quadrature) shows arxiv labels are
strongly low-frequency biased -- the bottom $2.5\%$ eigenmodes carry $44.8\%$ of
label energy ($18\times$ the uniform concentration), while the highest-frequency
modes (IPR-localized) are extreme local bumps carrying $\sim$$0.014\%$ label energy
and are non-discriminative. Because pruning--reconstruction empirically behaves as an approximate low-pass
operator (A1), it matches the discriminative (low-frequency) part of the signal;
the high-frequency energy lost by low-pass filtering is mostly non-discriminative.
Hence the measured tax (arxiv near-full $\sim$$2.2\%$) is far smaller than the
raw energy tax ($55\%$ at $w=0.20$): the reconstruction does not pay the full
discarded-energy price, which is exactly what the Lipschitz gap of
Theorem~\ref{thm:tax} \emph{explains} when the head is well-behaved and the discarded
high-frequency energy is non-discriminative.

\paragraph{Scope of Theorem~\ref{thm:tax}.} (i)~The \emph{core} bound on
$\Delta\mathrm{Acc}$ is
\emph{derived} from only two structural assumptions --- A1 (low-pass
reconstruction) and A2 (Lipschitz head) --- ; A3 (representation--label
alignment on discarded modes) is an \emph{optional} refinement (Corollary~C1),
\emph{not} a load-bearing assumption --- by Parseval's identity, a
decision-confidence flip argument, and Cauchy--Schwarz; it is not itself
assumed, so the earlier circularity
(``assume $\Delta\mathrm{Acc}\le C\cdot E_{\mathrm{out}}$, then prove it'') is removed.
No margin assumption is made: the confidence threshold $\bar{\gamma}$ and the
low-confidence mass $n(\bar{\gamma})$ are measured from the trained reference,
and are well defined at misclassified nodes.
(ii)~$\|\Delta\|_{F}^{2}$ and $E_{\mathrm{out}}$ are estimated numerically (Lanczos quadrature, an approximation of the spectral tail) and the remaining constants
are structural or measured (head weight norms,
measured confidence mass; the alignment ratio only enters the optional C1 reading), so the theorem establishes both the
\emph{form} of the bound and the \emph{origin} of its constant. (iii)~The
low-frequency bias is arxiv-specific; on datasets with high-frequency label bias
A1 still holds but $E_{\mathrm{out}}$ is larger, so Theorem~\ref{thm:tax}
\emph{indicates} a larger tax on such datasets --- a cross-dataset consistency
reading, not a stand-alone prediction.

\noindent\textbf{Status of Theorem~\ref{thm:tax} (explanatory analysis).}
The tax bound is derived from the model structure rather than assumed: its constant combines a structural Lipschitz/alignment term with a measured confidence term, so it is not fitted to the observed tax. The core bound rests only on A1--A2; A3 is an optional, dataset-derived upper bound on the representation--label energy and does not enter the core result.
The bound is best read as an \emph{explanatory decomposition} of the \emph{already-measured} accuracy tax rather than a forward predictive certificate; the $O(Nd)$ scaling of Theorem~\ref{thm:complexity} holds
only while the maximum core degree stays bounded (fixed $cr$) and may degrade on
heavy-tailed graphs.

\noindent\textbf{Cross-dataset consistency.} The structural interpretation of the bound ---
larger $E_{\mathrm{out}}$ implies larger tax --- is consistent empirically because
$E_{\mathrm{out}}$ depends only on the graph spectrum and the labels. Under the
same protocol, the 500k products dense reference shows a tax of
$\approx$$3.9\%$ ($0.7157\to0.6874$) vs.\ $2.2\%$ for arxiv near-full,
directionally consistent with products labels being less low-frequency-biased;
and on arxiv the core-1hop point ($3.2\%$) exceeds the near-full point
($2.2\%$), directionally consistent with the bound when $w_{\max}$ decreases. These are
consistency checks of the mechanism rather than as fits.
\subsection{Proof of Theorem 4 (accuracy-tax bound)}\label{app:proof-thm3}
\noindent\textit{Proof.} Write $\Delta:=\hat{H}-H^{*}$ with rows
$\Delta_{v}=\hat{H}_{v}-H^{*}_{v}$, and
$P_{>w_{\max}}:=\sum_{\lambda_{i}>w_{\max}}\phi_{i}\phi_{i}^{\top}$.

\textbf{Step 1 (representation gap; Parseval $+$ Young's inequality).} By A1,
$\Delta=(P_{\le w_{\max}}-I)H^{*}+\varepsilon_{\mathrm{model}}
=-P_{>w_{\max}}H^{*}+\varepsilon_{\mathrm{model}}$.
Since $P_{>w_{\max}}$ is an orthogonal projector (symmetric and idempotent),
Parseval gives the pure spectral-tail energy exactly:
\begin{align*}
\|P_{>w_{\max}}H^{*}\|_{F}^{2}
&=\mathrm{tr}\bigl(H^{*\top}P_{>w_{\max}}H^{*}\bigr)\\
&=\sum_{\lambda_{i}>w_{\max}}
   \mathrm{tr}\bigl(H^{*\top}\phi_{i}\phi_{i}^{\top}H^{*}\bigr)\\
&=\sum_{\lambda_{i}>w_{\max}}
   \bigl\|H^{*\top}\phi_{i}\bigr\|_{2}^{2}.
\end{align*}
Applying Young's inequality $\|a+b\|_{F}^{2}\le 2\|a\|_{F}^{2}+2\|b\|_{F}^{2}$
to $\Delta=-P_{>w_{\max}}H^{*}+\varepsilon_{\mathrm{model}}$ yields the
\emph{upper} bound
\begin{align}
\|\Delta\|_{F}^{2}
&\;\le\;
2\|P_{>w_{\max}}H^{*}\|_{F}^{2}
+2\|\varepsilon_{\mathrm{model}}\|_{F}^{2}
\notag\\
&\;=\;
2\sum_{\lambda_{i}>w_{\max}}
   \bigl\|H^{*\top}\phi_{i}\bigr\|_{2}^{2}
+2\|\varepsilon_{\mathrm{model}}\|_{F}^{2}.
\label{eq:app-step1}
\end{align}
The cross term
$-2\langle P_{>w_{\max}}H^{*},\varepsilon_{\mathrm{model}}\rangle_{F}$
is absorbed by the elementary two-factor square bound
$\|a+b\|_{F}^{2}\le 2\|a\|_{F}^{2}+2\|b\|_{F}^{2}$
(Young's inequality), so no orthogonality assumption between the spectral tail
and the model residual is required.

\textbf{Step 2 (optional A3: alignment bounds the gap by discarded label energy).}
Under the optional A3 (Corollary~C1 of Theorem~\ref{thm:tax}) each
discarded-mode coefficient satisfies
$\|H^{*\top}\phi_{i}\|_{2}^{2}\le C_{\mathrm{rep}}\|Y^{\top}\phi_{i}\|_{2}^{2}$
for $\lambda_{i}>w_{\max}$; summing over the same index set and
plugging into~\eqref{eq:app-step1},
\begin{align*}
\|\Delta\|_{F}^{2}
&\;\le\;2\,C_{\mathrm{rep}}\sum_{\lambda_{i}>w_{\max}}\|Y^{\top}\phi_{i}\|_{2}^{2}
     +2\,\|\varepsilon_{\mathrm{model}}\|_{F}^{2}\\
&\;=\;2\,C_{\mathrm{rep}}\,E_{\mathrm{out}}(w_{\max})
     +2\,\|\varepsilon_{\mathrm{model}}\|_{F}^{2}.
\end{align*}

\textbf{Step 3 (flip control via decision confidence).} Let
$F\subseteq[N]$ be the set of nodes whose argmax prediction changes between
$H^{*}$ and $\hat{H}$, and write $s=\psi(H^{*}_{v})$, $\hat{s}=\psi(\hat{H}_{v})$.
Fix $v\in F$ with reference class $c^{*}$ and method class $\hat{c}\neq c^{*}$.
By A2 every logit moves by at most $G\|\Delta_{v}\|$. Since $\hat{c}$ is a
non-top class of the reference,
\begin{align*}
\delta_{v}
&\;\le\;s_{c^{*}}-s_{\hat{c}}\\
&:=\underbrace{(s_{c^{*}}-\hat{s}_{c^{*}})}_{\le\,G\|\Delta_{v}\|}
  +\underbrace{(\hat{s}_{c^{*}}-\hat{s}_{\hat{c}})}_{\le\,0}\\
&\quad+\underbrace{(\hat{s}_{\hat{c}}-s_{\hat{c}})}_{\le\,G\|\Delta_{v}\|}
  \;\le\;2G\|\Delta_{v}\|.
\end{align*}
No correctness assumption is used: $\delta_{v}$ is defined even where the
reference is wrong. Split $F=F_{0}\cup F_{1}$ with
$F_{0}=\{v\in F:\delta_{v}<\bar{\gamma}\}$, so $|F_{0}|\le n(\bar{\gamma})$; for
$v\in F_{1}$ we have $\bar{\gamma}\le\delta_{v}\le 2G\|\Delta_{v}\|$, hence
\begin{align*}
\bar{\gamma}\,|F_{1}|
&\;\le\;2G\sum_{v\in F_{1}}\|\Delta_{v}\|\\
&\;\le\;2G\sum_{v=1}^{N}\|\Delta_{v}\|.
\end{align*}

\textbf{Step 4 (Cauchy--Schwarz over nodes).}
\begin{align*}
\sum_{v=1}^{N}\|\Delta_{v}\|
&\;\le\;\sqrt{N}\,\Bigl(\sum_{v=1}^{N}\|\Delta_{v}\|^{2}\Bigr)^{1/2}\\
&:=\sqrt{N}\,\|\Delta\|_{F}.
\end{align*}

\textbf{Step 5 (combine).} The absolute accuracy drop is a \emph{net} flip rate:
$\Delta\mathrm{Acc}=\mathrm{Acc}(H^{*})-\mathrm{Acc}(\hat{H})
=\mathbb{P}(\text{ref.\ correct, method wrong})
-\mathbb{P}(\text{ref.\ wrong, method correct})
\le\mathbb{P}(\text{prediction changed})=|F|/N$, since beneficial flips only
cancel. Combining $|F|\le|F_{0}|+|F_{1}|\le n(\bar{\gamma})+(2G/\bar{\gamma})\sum_{v}\|\Delta_{v}\|$ with Steps 2 and 4,
\begin{align*}
\Delta\mathrm{Acc}
&\;\le\;\frac{n(\bar{\gamma})}{N}
  +\frac{2G}{\bar{\gamma}}\,\sqrt{\frac{\|\Delta\|_{F}^{2}}{N}}\\
&\;\le\;\frac{n(\bar{\gamma})}{N}\\
&{}+\bar{L}\sqrt{\frac{2(C_{\mathrm{rep}}E_{\mathrm{out}}(w_{\max})
                  +\|\varepsilon_{\mathrm{model}}\|_{F}^{2})}{N}},
\\ \bar{L}:=2G/\bar{\gamma}.
\end{align*}
When A3 is not invoked, the first inequality (with $\|\Delta\|_{F}^{2}$ bounded
by~\eqref{eq:app-step1}) is the general bound corresponding to~\eqref{eq:tax-main}.

\textit{Monotonicity of the spectral-tail term.}
The tail sums
$\sum_{\lambda_{i}>w}\|H^{*\top}\phi_{i}\|_{2}^{2}$
and
$E_{\mathrm{out}}(w)=\sum_{\lambda_{i}>w}\|Y^{\top}\phi_{i}\|_{2}^{2}$
are non-increasing in $w$ (fewer high-frequency modes survive the cutoff as $w$
grows). The model-residual term $\|\varepsilon_{\mathrm{model}}\|_{F}$
can vary with the operating point; holding it approximately constant, the
composite representation upper bound~\eqref{eq:app-step1} shrinks as $w$ grows.
Empirically, $w_{\max}(cr)$ grows with the core ratio (weaker pruning admits a
larger effective receptive field), so the bound shrinks as pruning is relaxed,
provided the model residual does not counteract the spectral contraction.\hfill$\square$

\smallskip
\noindent\textit{Corollary (surrogate-risk form).} If instead of the flip
argument (Step 3) one assumes the cross-entropy loss $\ell(\psi(\cdot),y_{v})$ to
be $\tilde{L}$-Lipschitz in the representation at every node, the same Steps 1, 2,
and 4 give the excess-surrogate-risk bound
\begin{align*}
|R(\hat{H})-R(H^{*})|
&\;\le\;\tilde{L}\,N^{-1/2}\,\bigl(2C_{\mathrm{rep}}E_{\mathrm{out}}(w_{\max})\\
&\quad\;+2\|\varepsilon_{\mathrm{model}}\|_{F}^{2}\bigr)^{1/2},
\end{align*}
with $R(H)=(1/N)\sum_{v}\ell(\psi(H_{v}),y_{v})$; the decision-confidence
argument (Step 3) is what converts the surrogate gap into the accuracy statement.

\noindent\textit{Calibration.} $E_{\mathrm{out}}$ is estimated numerically (Lanczos
quadrature, an approximation of the spectral tail; \S\ref{sec:tax}). On ogbn-arxiv the bottom $2.5\%$ eigenmodes carry $44.8\%$ of
label energy while the highest-frequency modes carry $\sim$$0.014\%$ and are
non-discriminative, so the discarded high-frequency energy is tiny at any
reasonable cutoff and the bound is loose -- consistent with the measured
near-full tax ($\sim$$2.2\%$). The constants are
\emph{structural} and \emph{measured} rather than fitted: $G$ is upper-bounded by the
product of the head's weight-matrix spectral norms; $\bar{\gamma}$ is a free
confidence threshold whose companion $n(\bar{\gamma})/N$ (the reference's
low-confidence mass) is computed directly from the trained reference logits ---
no assumption that margins are positive at misclassified nodes is needed; and
$C_{\mathrm{rep}}$ is
the representation--label alignment ratio on discarded modes. Because
$E_{\mathrm{out}}$ depends only on the
graph spectrum and the labels, the bound yields a \emph{testable} prediction ---
when the model-residual, low-confidence-mass and Lipschitz terms are comparable
across the compared datasets, those whose labels carry more high-frequency
energy should show a larger tax --- which is checked across datasets at the
end of \S\ref{sec:tax}.

% =============================================================

% The whole paper is typeset in the two-column layout; wide appendix
% figures/tables/algorithms use the starred full-width float variants, so
% no single-column switch is needed and no half-empty page is produced.
\begin{figure*}[!ht]
  \centering
  \includegraphics[width=0.8\textwidth]{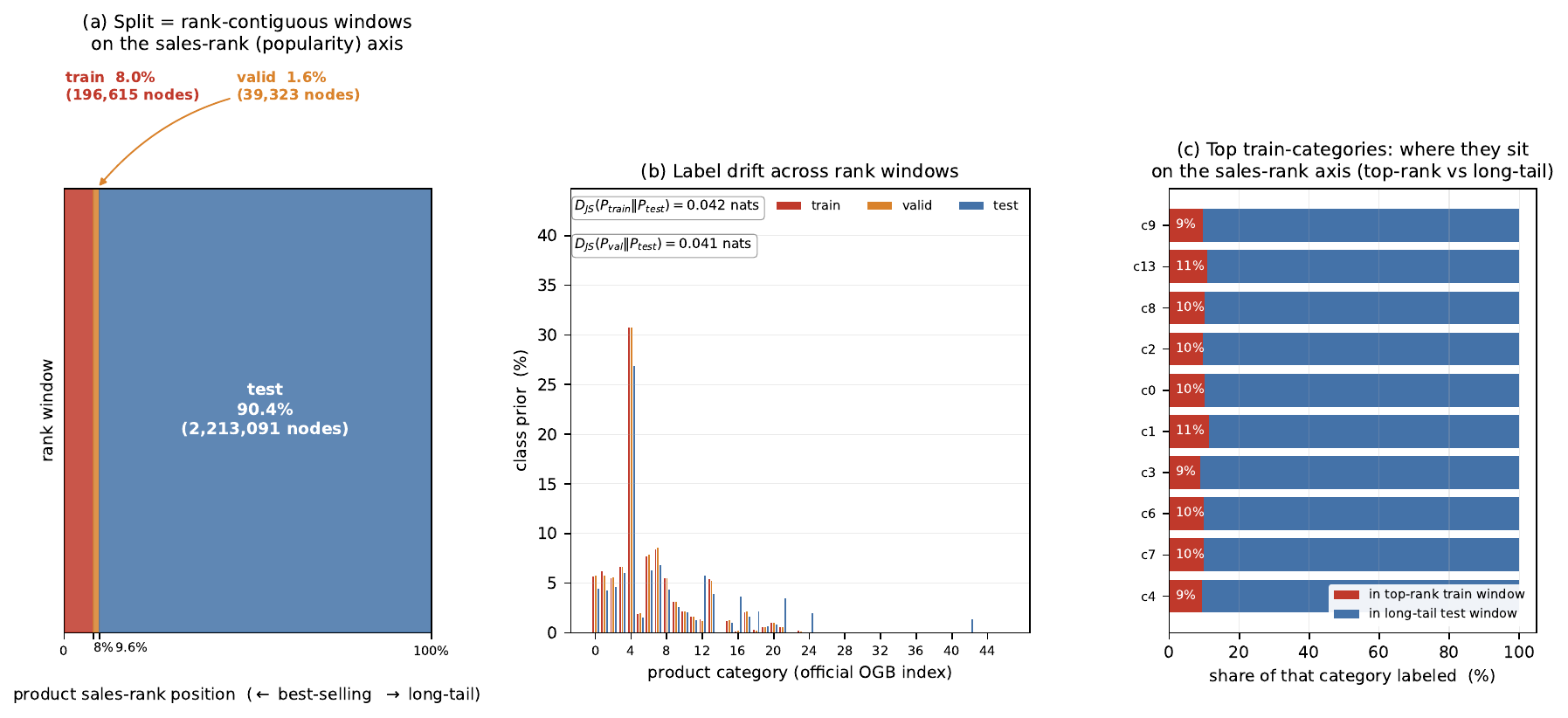}
  \caption{Data Snapshots (supporting the body numbers): ogbn-products (2.45M nodes) under the official OGB \emph{sales-ranked}
  split. ogbn-products hosts no per-node timestamp, so the sales ranking is the
  only ordering signal. (a) The split is a \emph{rank-contiguous partition} of
  the ranking axis: train occupies the top-ranked/best-selling window
  $[0,8\%)$, valid $[8\%,9.6\%)$, and test the remaining long-tail
  $[9.6\%,100\%)$ --- so train and test differ in sales-volume (popularity)
  distribution by construction. (b) Label drift follows: normalized class
  priors per window, with $D_{JS}(P_{\mathrm{train}}\|P_{\mathrm{test}}){=}0.042$
  nats while $D_{JS}(P_{\mathrm{train}}\|P_{\mathrm{valid}}){=}0.0003$ --- the
  rank-adjacent validation split matches train, and the long-tail test is a
  distinct label distribution (log-likelihood $\chi^2{=}5.0{\times}10^{4}$,
  $p{\approx}0$; class-prior concentration: top-5 share $60\%{\to}52\%$, class
  entropy $3.62{\to}3.98$ bits). (c) Mechanism: the top train categories are
  far more prevalent among the best-selling nodes, while long-tail
  categories concentrate in the low-ranked test pool --- i.e.\ the popularity/ranking difference drives the
  category-prior drift. This is a \emph{consequence of the official split}, not
  a separate design choice; we therefore keep the same-graph
  recovery rate (DPR-D), which compares two methods on one graph and is
  therefore not exposed to the split-induced prior shift, as the primary metric and treat Test accuracy as a
  long-tail stress test.}
  \label{fig:products-prior}
  \label{app:data}
\end{figure*}
\FloatBarrier
\noindent\textbf{Takeaway.} \emph{Both datasets confirm that their official
ordering-based splits induce measurable distribution shift: chronological for
ogbn-arxiv and sales-ranked for ogbn-products; recovery rates, which compare
methods on the same graph, are therefore the primary metric.}
\begin{figure*}[!htbp]
  \centering
  \includegraphics[width=0.65\textwidth]{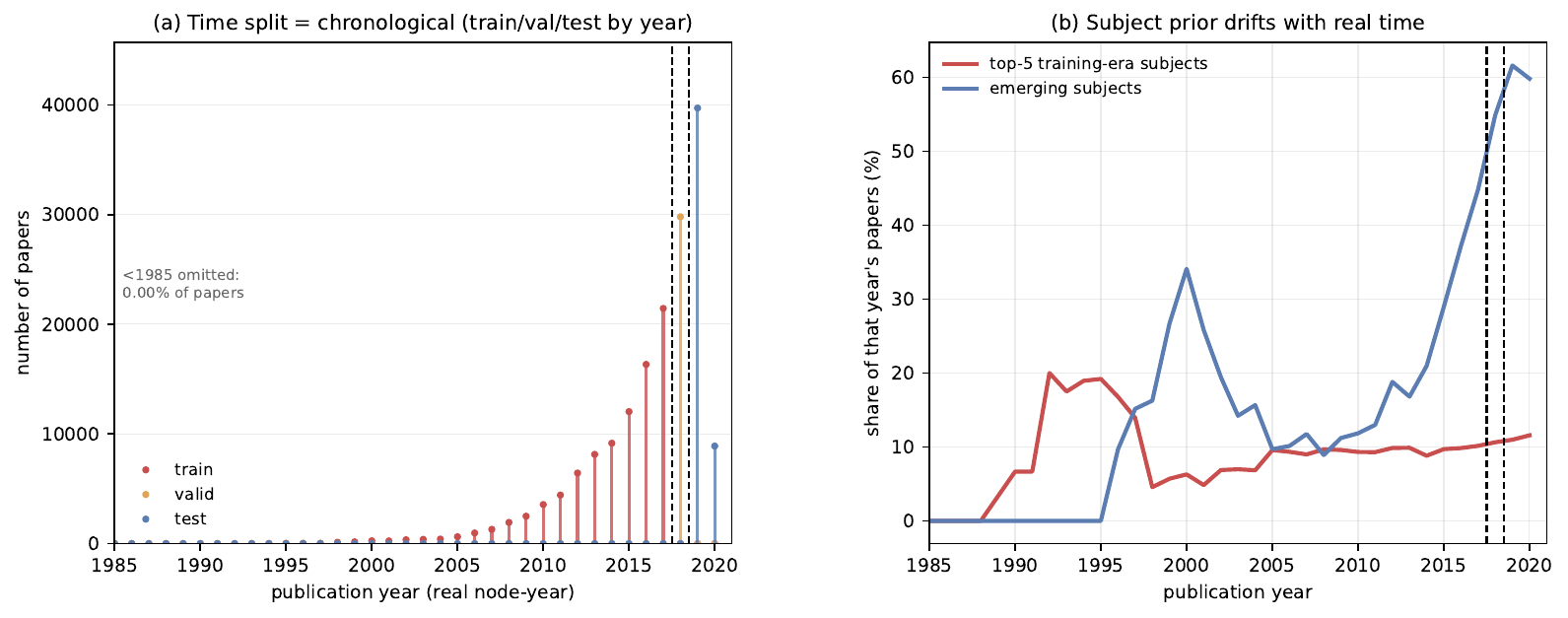}
  \caption{ogbn-arxiv ships a genuine per-node year; its official \emph{time}
  split is a real chronological partition (train $1971$--$2017$, valid $2018$,
  test $2019$--$2020$). (a) Publication-year distribution per split. (b) The
  label prior drifts with true time: the share of the top training-era subjects
  falls and the emerging-subject share rises across years
  ($D_{JS}(P_{\mathrm{train}}\|P_{\mathrm{test}}){=}0.072$ nats on labels, larger
  than the products figure of $0.042$). This confirms on the one dataset that
  ships timestamps that a chronological train/test partition induces a
  measurable label-prior drift; products lacks timestamps, hence its drift is
  described as a sales-ranking (popularity) proxy rather than a literal time
  split.}
  \label{fig:arxiv-time}
\end{figure*}

The following tables report the measured configurations underlying the body
results. Naming: \textbf{DPR-D = dense gated-GGNN reconstruction reference
(old HSP+Exp)}; \textbf{DPR-S (final) = the gated message-passing sparse
decoder, $h{=}128$, core-1hop reconstruction, positional encoding disabled
($pe\_dim{=}0$)}; \textbf{DPR-S-M4 (extended variant) = the sparse decoder
under the AMP cross-attention + M4 configuration at near-full reconstruction
scope}. The code names \texttt{amp} / \texttt{m4v2} /
\texttt{DualPEAmplifier} belong to the \emph{extended / research}
configuration and are \emph{not} the final method; the dual-PE variant is a
separate research configuration, \S\ref{app:pseudo}.

\textbf{Transfer and different-protocol check} (supplementary; \emph{not}
directly comparable to Table~\ref{tab:main}: different split partition):
\begin{table*}[!htbp]
  \caption{Transfer and different-protocol check (supplementary; \emph{not}
  directly comparable to Table~\ref{tab:main}: different split partition).}
  \label{tab:supp}
  \footnotesize
  \renewcommand{\arraystretch}{1.2}
  \begin{tabular*}{\textwidth}{@{} l @{\extracolsep{\fill}} r l @{}}
    \toprule
    Configuration & DPR-S test-acc & note \\
    \midrule
    ogbn-products, 500k class-stratified random split & $0.8485\pm0.0004$ (3s) & no sales-ranked popularity shift \\
    \bottomrule
  \end{tabular*}
\end{table*}

\textbf{ogbn-arxiv (node classification, acc):}
\begin{table*}[!t]
\centering
\footnotesize
\begin{tabular*}{\textwidth}{@{\extracolsep{\fill}}>{\raggedright\arraybackslash}p{7.5cm}cp{3.5cm}@{}}
  \toprule
  Configuration & test-acc & note \\
  \midrule
  Fair no-prune SAGE (full-batch, same OGB split) & $0.7156$ & fair no-prune ceiling \\
  HSP-alone cr=0.3 & 0.378 & 40-epoch endpoint, no recovery \\
  DPR-D cr=0.3, full reconstruction (HSP, 5s) & $0.7111\pm0.0053$ & dense reference, full graph \\
  DPR-D cr=0.3, core-1hop recon (final, simple mask, 5s) & $0.6885\pm0.0013$ & $-1.8$pp vs same-pruner full ($0.7068$), double sparsification \\
  DPR-S cr=0.3, core-1hop recon (final, s, 5s) & $0.6892\pm0.0020$ & 5-seed, double sparsification \\
  \bottomrule
\end{tabular*}
\end{table*}

\textbf{ogbn-products (200k subgraph, acc, 10 ep fresh):}
\begin{table*}[!t]
\centering
\footnotesize
\begin{tabular*}{\textwidth}{@{\extracolsep{\fill}}>{\raggedright\arraybackslash}p{7.5cm}c@{}}
  \toprule
  Configuration & best test-acc \\
  \midrule
  no-prune (+Exp full reconstruction) & 0.6334 \\
  DPR-D (core + core-1hop) & 0.6119 \\
  $\to$ recovery rate & 96.6\% (gap $\approx$2.15pp) \\
  \bottomrule
\end{tabular*}
\end{table*}
This 200k / 10-epoch probe is under-trained and is superseded by the 500k
3-seed run below, which gives the products recovery rate of
$96.1\%\pm0.4\%$.

\textbf{ogbn-products (subgraph, $\sim$500k, 3 seeds, core+core\_1hop):}
\begin{table*}[!t]
\centering
\footnotesize
\begin{tabular*}{\textwidth}{@{\extracolsep{\fill}}>{\raggedright\arraybackslash}p{7.5cm}>{\raggedright\arraybackslash}p{8.3cm}@{}}
  \toprule
  Configuration & best test-acc (mean$\pm$std, 3 seeds) \\
  \midrule
  no-prune (+Exp full reconstruction) & 0.7157$\pm$0.0035 \\
  DPR-D / HSP pruner (core + full reconstruction) & 0.6874$\pm$0.0015 \\
  DPR-D / simple mask pruner (core + full reconstruction) & 0.6876$\pm$0.0007 \\
  DPR-S (sparse decoder, core-1hop) & 0.6635$\pm$0.0122 \\
  $\to$ recovery rate & 96.1\%$\pm$0.4\% of no-prune (gap $\approx$2.83pp; no measurable selector difference between the two DPR-D pruner rows); DPR-S = 96.5\% of DPR-D \\
  \bottomrule
\end{tabular*}
\end{table*}

DPR-S (sparse decoder) on products is additionally validated on the same 500k
\emph{sales-ranked split} subgraphs as DPR-D: $0.6635\pm0.0122$ (3 seeds,
Table~\ref{tab:main}), i.e.\ $96.5\%$ of DPR-D under the identical protocol. A
further 500k \emph{class-stratified random-split} DPR-S point ($0.8485\pm0.0004$, 3 seeds) is
free of the sales-ranked popularity shift; it is \emph{not} directly comparable to the 500k
official-split rows above (different split). As large-scale evidence, the full 2.45M-node
products graph gives full-graph DPR-D $0.8032$ \emph{mean over $3$ seeds}
(per-seed recovery $99.3$/$99.4$/$99.5\%$, i.e.\ per-seed acc
$\approx$$0.8028$/$0.8032$/$0.8035$; no-prune best $0.8085$), dense variant,
$15$ epochs, run on a $>$16\,GB cloud GPU. This is the \emph{dense} DPR-D variant,
not the final sparse method DPR-S; hardware in App.~\ref{app:hw}.

\textbf{Large-graph efficiency probe (products 500k, DPR-D dense core-1hop, simple\_core\_mask):}
per-step training no-prune $176.2$\,ms $\to$ DPR-D $112.4$\,ms ($1.57\times$
faster); peak memory no-prune $9166$\,MiB $\to$ DPR-D $6588$\,MiB ($1.39\times$
leaner); inference core-1hop vs.\ full $\to$ $1.16\times$ faster, reconstruction
edges $392787\to239360$ ($-39.1\%$). (Control with learnable hyperbolic pruner:
$0.91\times$ training / $1.28\times$ memory, acc $0.6957$, recovery $96.0\%$, on
par with simple but slower.)

500k is a subgraph estimate (not full 2.4M); single
seed, single config; $1.57\times/0.91\times$ are 20-step per-step means (epoch
wall-clock slightly lower but direction stable); learned HSP recovery $96.0\%$
($0.6957$ vs.\ $0.7245$) is on par with simple.

\subsection{Pseudocode}
\label{app:pseudo}
The final method uses a fixed-magnitude core selector ($pe\_dim{=}0$)
and the sparse DPR-S decoder on the core's 1-hop neighborhood (Algorithm~\ref{alg:dprs-final});
its per-node fallback is the projected own-feature path that arises when
$N_r(v)=\varnothing$ (Table~\ref{tab:coverage}). The \emph{extended variant}
\textbf{DPR-S-M4} replaces the uniform gated-MP decoder with the dual-PE
amplifier (degree-important deleted nodes rebuilt by core-1hop cross-attention;
the remainder by a feature+M4 fallback with a gated keep-to-drop message step)
and is the configuration used by the capacity controls, the path-stratified
analysis, and the Car-Hacking deployment study (\S\ref{sec:exp-eff});
its pseudocode is Algorithm~\ref{alg:dprs-var}. DPR-D replaces either decoder
with a dense gated GGNN over the full topology (\S\ref{sec:recon}).

\FloatBarrier
\begin{algorithm*}[!ht]
\caption{DPR-S (final method): fixed-magnitude core selection
and sparse reconstruction on the core's 1-hop neighborhood ($pe\_dim{=}0$)}
\label{alg:dprs-final}
\begin{algorithmic}[1]
\footnotesize
\Require graph $G=(V,E)$, features $X$; core ratio $cr$; encoder $f_\theta$
  (GraphSAGE, 3L); sparse decoder $\mathcal{E}_\phi$ (feature projection +
  $L{=}3$ gated message-passing layers + confidence fusion, defined below);
  head $\psi$
\State core $C \gets$ the $cr\cdot|V|$ nodes of smallest $\|x_v\|$ \Comment{fixed-magnitude mask: deterministic, no learnable parameters, $pe\_dim{=}0$}
\State $G_p \gets (C, E_p)$, $E_p \gets \{(u,v)\in E : u,v\in C\}$ \Comment{main graph: $\sim$$30\%$ nodes, $\sim$$8\%$ edges}
\State $\mathbf{H}_C \gets f_\theta(X[C], E_p)$ \Comment{encoder trains only on the core graph}
\State $E_r \gets E_{\mathrm{core}\text{-}1\mathrm{hop}}$ \Comment{reconstruction graph; every deleted node $v\notin C$ links only to its core neighbours $N_r(v)\!\!=\!\{u\!\in\!C:(u,v)\!\in\!E\}$}
\State \textbf{Joint training (server-side).} With full labels $y$ and train mask $\mathcal{T}$: minimize the \emph{labeled} loss
  $\mathcal{L}=\mathcal{L}_{CE}\!\big(\psi(\hat{\mathbf{X}})[\mathcal{T}],\,y[\mathcal{T}]\big)$,
  where $\hat{\mathbf{X}}\gets\mathcal{E}_\phi(\mathbf{H}_C,E_r,X)$ reconstructs \emph{all} nodes
  (decoder, encoder and head train jointly, then freeze).
\State \textbf{DPR-S decoder on all nodes.} \emph{Initialize:} for $v\!\in\!C$,
  $\mathbf{h}_v\!\gets\!\mathbf{H}_C[v]$; for $v\!\notin\!C$,
  $\mathbf{h}_v\!\gets\!\mathbf{W}_2\,\mathrm{ReLU}(\mathbf{W}_1 x_v)$
  (a per-node projection of $v$'s own raw feature row --- never a zero or shared vector).
  \emph{Propagate ($\ell{=}1,\dots,L$):}
  $\mathbf{m}_v\!\gets\!\sum_{u\in N_r(v)}\!\mathbf{W}_m^{(\ell)}\mathbf{h}_u$,
  $\mathbf{h}_v\!\gets\!\mathrm{LN}\big(\mathrm{ReLU}(\mathbf{h}_v + \mathbf{g}_v\!\odot\!\mathbf{W}_o^{(\ell)}\mathbf{m}_v)\big)$,
  $\mathbf{g}_v\!\gets\!\sigma\big(\mathbf{W}_g^{(\ell)}[\mathbf{h}_v;\mathbf{m}_v]\big)$.
  \emph{Fuse (core only):}
  $\mathbf{h}_v\!\gets\!c_v\!\odot\!\mathbf{H}_C[v] + (1{-}c_v)\!\odot\!\mathbf{h}_v$,
  $c_v\!=\!\sigma(\mathbf{W}_c\,\mathbf{H}_C[v])$.
  \emph{Output:} $\hat{x}_v\!\gets\!\mathrm{LN}(\mathbf{h}_v)$.
  \Comment{one uniform path over $E_r$; a deleted node with $N_r(v)=\varnothing$ receives no messages and keeps its projected own feature (the fallback of Table~\ref{tab:coverage} arises naturally from graph connectivity)}
\State \textbf{Deployment (frozen).} Optionally re-estimate encoder BatchNorm statistics over $C$ once, then freeze; load $\mathbf{H}_C$ into accelerator memory; keep $E_r$ in host RAM.
  For each $v\!\notin\!C$ run the decoder above; $\hat{y}_v\!\gets\!\psi(\hat{x}_v)$. \Comment{per-node scoring: no full-graph forward, no full-$X$ materialisation; each scored node streams only its own raw-feature row $x_v$}
\Ensure predictions $\hat{y}$ over all nodes: encoder runs on $G_p$, decoder on $E_r$; training needs $\{G_p,E_r,X\}$, deployment needs $\{\mathbf{H}_C,E_r\}$ plus streamed per-node raw-feature rows $x_v$ (no full-graph pass at inference).
\end{algorithmic}
\end{algorithm*}
\begin{algorithm*}[!tb]
\caption{Research variant (ablation study, not part of the final method):
dual-pruner coarsening and
dual-PE reconstruction}
\label{alg:dprs-var}
\label{alg:coarsen}
\begin{algorithmic}[1]
\footnotesize
\Require graph $G=(V,E)$, features $X$; pruners $\mathcal{P}_H$ (hyperbolic,
  low-norm core) and $\mathcal{P}_G$ (degree, hub core)
\State $\text{core}_H \gets \mathcal{P}_H(X,E)$ \Comment{hyperbolic core nodes}
\State $\text{core}_G \gets \mathcal{P}_G(X,E)$ \Comment{degree/hub core nodes}
\State $\text{both} \gets \text{core}_H \wedge \text{core}_G$ \Comment{reliable core, encoded by backbone}
\State $\text{emph} \gets \text{core}_G \wedge \neg\text{core}_H$ \Comment{M1: degree-important, HSP-missed}
\State $\text{fuzzy} \gets \text{core}_H \wedge \neg\text{core}_G$ \Comment{M2: pruner-disagreement, weak}
\State $\text{drop} \gets \neg\text{core}_H \wedge \neg\text{core}_G$ \Comment{both discard $\to$ physically deleted ($\sim$47.5\%)}
\State $\text{deg}_{\text{imp}} \gets \text{normalize}(\text{degree})$ \Comment{M3 degree importance $\to$ gating}
\State $m4 \gets \text{LongRangeMGF}(X,E)$ \Comment{precomputed long-range context (defined below)}
\State $\text{dense}_E \gets$ reconnect edges with both ends not in drop (k-NN densify) \Comment{training-phase dense graph; deployed core-1hop uses sparse $E$}
\State $\mathbf{H}_C \gets \text{Encoder}(X[\text{both}],\ E[\text{both}])$
\Ensure $\mathbf{H}_C$, masks $\{\text{both,emph,fuzzy,drop}\}$, $\text{deg}_{\text{imp}}$, $m4$, $\text{dense}_E$
\end{algorithmic}
\end{algorithm*}
\begin{algorithm*}[!tb]
\caption{Research variant (ablation study, not part of the final method):
sparse dual-PE decoder}
\label{alg:dprs-var2}
\begin{algorithmic}[1]
\footnotesize
\Require $\mathbf{H}_C$, masks, dual PE $(\mathbf{p}_h,\mathbf{p}_g)$, $X$,
  $\text{deg}_{\text{imp}}$, $m4$, $E$ \Comment{$\text{dense}_E$ (training-only) used by Stage 2}
\State $\mathbf{F} \gets \mathbf{0}^{N\times H}$; $\mathbf{F}[\text{core}] \gets \mathbf{H}_C$
\For{each emph node $i$} \Comment{Stage 1: dual-PE cross-attention reconstruction}
  \State $\mathbf{q}_i \gets W_q[\mathbf{p}_g[i];\mathbf{p}_h[i]]$
  \State $\mathbf{c}_i \gets \text{CrossAttn}(\mathbf{q}_i,\ K{=}W_k[\text{core-1hop nbrs};\mathbf{p}],\ V{=}W_v[\cdots])$ \Comment{sparse bipartite (L1)}
  \State $\mathbf{b}_i \gets \text{MLP}([X[i];\mathbf{p}_g[i];\mathbf{p}_h[i];\text{deg}_{\text{imp}}[i];m4[i]])$
  \State $s_i \gets 1 + \sigma(m1_{\text{boost}})$ \Comment{learnable modulation (init 1.5)}
  \State $\mathbf{F}[i] \gets \text{LayerNorm}(\mathbf{b}_i + \text{Dropout}(W_o\mathbf{c}_i)\cdot s_i)$ \Comment{M1}
\EndFor
\For{$t=1$ \textbf{to} $T$} \Comment{Stage 2: gated MP on dense graph (keep nodes)}
  \State $\mathbf{F} \gets \text{GatedMP}(\mathbf{F},\ \text{dense}_E;\ \text{gate}{=}\sigma(W_g[\mathbf{F};\text{neigh};\text{deg}_{\text{imp}}]))$ \Comment{M3 gating}
\EndFor
\State $\mathbf{F}_C^{\text{mp}} \gets \mathbf{F}[\text{core}]$; $\alpha \gets \sigma(\text{core\_alpha})$ \Comment{P1}
\State $\mathbf{F}[\text{core}] \gets (1-\alpha)\mathbf{H}_C + \alpha\,\mathbf{F}_C^{\text{mp}}$ \Comment{core residual fusion}
\State $\mathbf{F}[\text{fuzzy}] \gets \mathbf{F}[\text{fuzzy}]\cdot\sigma(m2_{\text{keep}})$ \Comment{M2 learnable fade (init 0.85)}
\State $\mathbf{fb}[\text{drop}] \gets W_{\text{drop}}X[\text{drop}] + W_{m4}m4[\text{drop}]$ \Comment{static fallback}
\For{each drop node $d$ with a KEEP neighbor $n$ along $E$} \Comment{P2 graph-aware reconstruction}
  \State $\mathbf{msg} \gets \text{GatedMP\_step}(\mathbf{F}[n];\ \text{gate}{=}\sigma(W_g[0;\mathbf{F}[n];\text{deg}_{\text{imp}}[d]]))$ \Comment{KEEP$\to$DROP}
  \State $\mathbf{fb}[d] \gets \mathbf{fb}[d] + \text{Dropout}(\mathbf{msg})$
\EndFor
\State $\mathbf{F}[\text{drop}] \gets \mathbf{fb}[\text{drop}]$
\State \Return $\text{LayerNorm}(\mathbf{F})$
\end{algorithmic}
\end{algorithm*}

\noindent\textbf{Why a long-range descriptor.} Pruning discards not only nodes
but also the \emph{information paths} that pass through them. Two distinct
types of loss arise: (i)~\emph{far-field loss} --- context from nodes too
distant to reach via the $k$-hop propagation the decoder performs; and
(ii)~\emph{prune-loss} --- context from nodes that were within reach but were
removed by the core mask. A purely local decoder cannot recover either.
$\mathrm{m4}_v$ (used by the extended variant DPR-S-M4) models both explicitly:
the far-field and prune-loss terms below separate the two mechanisms so that
the amplifier can restore each selectively.

\noindent\textbf{LongRangeMGF definition ($\mathrm{m4}_v$; used by the
extended variant DPR-S-M4).} Given
per-dimension standardised features $\tilde{x}$ (z-score, clipped to
$[-5,5]$), the scaled exponential $e^{t\tilde{x}}$ with $t{=}0.5$, and the
\emph{directed} adjacency $A$, define the truncated MGF series
\begin{equation*}
S_h \;\triangleq\; \sum_{k=1}^{h} \frac{t^k}{k!}\, A^k e^{t\tilde{x}},
\end{equation*}
where row $v$ of $S_h$ aggregates $v$'s $h$-hop directed context (computed by
$K$ sparse mat-vecs; $A^k$ is never materialised), and $c_v$ denotes the same
series applied to the all-ones vector (scale normalization). The long-range
context is
\begin{multline*}
\mathrm{m4}_v \;=\;
\underbrace{\frac{(S_K - S_h)[v]}{c_v}}_{\text{\shortstack{far-field: context\\ beyond $h$ hops}}}\\
\;+\;
\underbrace{\frac{(S_K - S^{\mathcal{P}}_K)[v]}{c^{\mathcal{P}}_v}}_{\text{prune-loss: context deleted by pruning}},
\end{multline*}
with $K{=}6$, $h{=}3$, and $S^{\mathcal{P}}_K$ the series on the pruned
adjacency (edges incident to deleted nodes removed). The far-field term is
informative on citation-like directed graphs, whose forward/backward cones
diverge beyond 3 hops; on undirected small-world graphs the $>$$h$-hop set is
near-global and the descriptor degenerates (a diagnostic on the row-norm
standard deviation guards this). \emph{Cost:} one-off CPU precompute,
$O(K|E|d)$ sparse mat-vecs with a single $[N{\times}d]$ working array;
storage one $\mathrm{m4}\in\mathbb{R}^{N\times d}$ matrix
($\approx$$83$\,MiB fp32 for the $128$-dimensional raw-feature main protocol
on arxiv; $\approx$$0.50$\,GiB in the $768$-dimensional SBERT setting); the
result is cached to disk and \emph{excluded} from all
reported training and per-epoch timings.
\FloatBarrier

\subsection{Hardware and software (per experiment)}
\label{app:hw}
\begin{itemize}
  \item \textbf{ogbn-arxiv, 200k products subgraph, PPI:} a single NVIDIA GeForce RTX\,4050 (6\,GB) consumer GPU; PyTorch\,2.x + PyG\,2.x in the released environment.
  \item \textbf{500k products subgraph} (3-seed accuracy and the efficiency probe of \S\ref{sec:exp-eff}): a cloud GPU -- the measured no-prune peak of $9166$\,MiB exceeds the $6$\,GB card, so these runs cannot be local. Ran on a single NVIDIA RTX\,A5000 (24\,GB) with PyTorch\,2.7.0+cu126 and PyG\,2.7.0.
  \item \textbf{Full $2.45$M-node ogbn-products} (DPR-D accuracy $0.8032$ mean / $99.3\%$ recovery; peak VRAM $16{,}100$\,MiB): a cloud GPU with $\geq$24\,GB VRAM (the $23{,}079$\,MiB no-prune peak nearly fills a 24\,GB card); ran on a single NVIDIA RTX\,A5000 (24\,GB) with PyTorch\,2.7.0+cu126 and PyG\,2.7.0. The full-scale DPR-S-M4 run (hidden 128, batch 1024, $0.7612\pm0.0006$; peak VRAM $6285\pm14$\,MiB, wall $526\pm3$\,s per seed, 3 seeds) used the same instance.
  \item \textbf{Controlled attention framework (C2):} the same 6\,GB RTX\,4050.
\end{itemize}
All runs use a fixed per-seed RNG with deterministic algorithms enabled
(warn-only) and PyTorch 2.7 / PyG 2.7; the arxiv 5-seed protocols and the
full-scale products runs are logged alongside the checkpoints. Every headline
number will be reproducible from a single command line after the artifact is
released.

\begin{table*}[!htbp]
\centering
\scriptsize
\setlength{\tabcolsep}{1.5pt}
\caption{Configuration--outcome master table: every headline number in this
paper bound to its exact configuration, seed count, and hardware.
``core-1hop'' / ``near-full'' denote the reconstruction scope $E_r$;
``final'' marks the frozen final method (DPR-S). The three DPR-S-M4
(extended-variant) rows never substitute for final-method validation (the
DPR-S final-method rows). The dense 2$\times$2 cell row is the per-node-pool
core-1hop entry of the design-space summary (Table~\ref{tab:2x2}); the
full-scale final method (DPR-S, official OGB sales-ranked split) is the
\textbf{DPR-S (final), full scale} row. Secondary numbers (PPI transfer,
ablation variants, DSpar comparison) appear in their own sections with the
same configuration detail.}
\label{tab:config-master}
\begin{tabular}{@{}p{2.1cm}p{1.8cm}p{4.2cm}p{2.2cm}cp{2.0cm}p{2.3cm}@{}}
\toprule
Role & Backbone & Decoder / scope & Dataset (split) & Seeds & GPU & Result \\
\midrule
No-prune reference & GraphSAGE 3L, full-batch & --- / full graph & arxiv (OGB) & --- & 4050 6 GB & $0.7156$ acc \\
Pruned-no-reconstruction floor & GraphSAGE 3L & none / $G_p$ only & arxiv (OGB) & 1 & 4050 6 GB & $0.378$ acc \\
DPR-D (dense reference) & GraphSAGE 3L $h{=}256$ & dense $[K_e,C']$ / full recon. & arxiv (OGB) & 5 & 4050 6 GB & $0.7111\pm0.0053$ acc \\
\textbf{DPR-S (final method)} & GraphSAGE 3L $h{=}128$ & gated-MP expander / core-1hop & arxiv (OGB) & 5 & 4050 6 GB & $0.6892\pm0.0020$ acc \\
DPR-S-M4 (extended variant) & GraphSAGE 3L $h{=}128$ & AMP cross-attn $+$ M4 / near-full & arxiv (OGB) & 5 & 4050 6 GB & $0.6953\pm0.0019$ acc \\
DPR-S-M4 (raw $kr{=}0.5$ budget point) & GraphSAGE 3L $h{=}128$ & AMP cross-attn $+$ M4 / near-full, raw feats & arxiv coarsened ($70{,}744$) & 3 & 4050 6 GB & $0.7257\pm0.003$ acc$^{\dagger}$ \\
MLP-only capacity control & GraphSAGE 3L $h{=}128$ & MLP ($207{,}289$ par.) / no edges & arxiv (OGB), paired & 5 & 4050 6 GB & $0.6702\pm0.0025$ acc \\
Same-width dense-vs-sparse & GraphSAGE 3L $h{=}128$ & AMP dense $[K_e,C']$ vs.\ sparse / core-1hop & arxiv ($cr{=}0.3$) & 3 & 4050 6 GB & $0.682\pm0.001$ vs.\ $0.689\pm0.002$ \\
External features (SBERT) & M4-gated (m4v2), TransformerConv & near-full recon. ($cr{=}0.7$) & arxiv (OGB) & 3 & 4050 6 GB & $0.7744\pm0.0003$ acc \\
\textbf{DPR-S (final), 500k} & GraphSAGE 3L $h{=}128$ & gated-MP expander / core-1hop & products 500k (OGB) & 3 & A5000 24 GB & $0.6635\pm0.0122$ acc \\
DPR-D, 500k & GraphSAGE 3L & dense $[K_e,C']$ / full recon. & products 500k (OGB) & 3 & A5000 24 GB & $0.6874\pm0.0015$ acc \\
DPR-S-M4, full scale & GraphSAGE 3L $h{=}128$ (batch 1024) & AMP cross-attn $+$ M4 / near-full & products 2.45M (OGB) & 3 & A5000 24 GB & $0.7612\pm0.0006$ acc \\
\textbf{DPR-S (final), full scale} & GraphSAGE 3L $h{=}128$ & gated-MP expander / core-1hop & products 2.45M (OGB sales-ranked) & 3 & A5000 24 GB & $0.7814\pm0.0003$ acc \\
DPR-S-M4, deployment & GraphSAGE 3L $h{=}128$ & AMP cross-attn $+$ M4 / near-full & Car-Hacking 1M shard (chronological) & 3 & 4050 6 GB & $0.9977$ acc / $0.9931$ macro-F1 \\
Dense 2x2 cell (per-node pool) & GraphSAGE 3L $h{=}128$ & dense $[K_e,C']$ / core-1hop (per-node pool) & arxiv (OGB) & 3 & 4050 6 GB & $0.6981\pm0.001$ acc \\
\bottomrule
\end{tabular}
\end{table*}
\FloatBarrier

\subsection{Configuration--outcome master table}
\label{app:config-master}
Table~\ref{tab:config-master} is the provenance index for the headline numbers
quoted in the main text: each row binds a reported outcome to its backbone,
decoder, reconstruction scope, dataset split, seed count, and GPU.
Two rows carry protocol notes. (i)~The pruned-no-reconstruction floor is the
seed-$0$ endpoint of the $40$-epoch $cr{=}0.3$ sweep of
Fig.~\ref{fig:curves}, not the $200$-epoch protocol that defines the
$0.7156$ no-prune ceiling. (ii)~$^{\dagger}$ marks the raw $kr{=}0.5$ budget
point, which is scored on the coarsened subgraph's own split (super-node
protocol, $70{,}744$ super-nodes $=41.8\%$ of arxiv) and is therefore not
directly comparable with the original-node rows above it.

% =============================================================
\subsection{Mask-Matched Audits: Coverage, Edge Counts, and Path Stratification}
\label{app:audits}
This section reports, in full, the mask-matched audits summarized in
\S\ref{sec:exp-main}: deletion-side coverage (Table~\ref{tab:coverage}), the
edge-count audit (Table~\ref{tab:edge-audit}), and the reconstruction-path
stratification (Table~\ref{tab:c11-stratify}).

\textbf{Deletion-side coverage and fallback statistics (mask-matched).}
The submitted method applies its core mask to the \emph{original} node set
($|C|=\lceil cr|V|\rceil$, $cr{=}0.3$; Algorithm~\ref{alg:dprs-final}); masks on
coarsened \emph{super} nodes are used only by the node-merge \emph{variants}
(\S\ref{app:eff-attn}). Table~\ref{tab:coverage} reports deletion-side coverage
under the exact masks of the main results, computed with the deterministic
low-feature-norm core (no measurable difference from the learned selector at matched
budget: $0.6876\pm0.0007$ vs.\ $0.6874\pm0.0015$, App.~\ref{app:data}; coverage
counted over undirected edges of the original graph).

\begin{table}[!htbp]
\centering
\footnotesize
\caption{Deletion-side coverage under the mask actually applied by each
reported experiment (all masks applied to \emph{original} nodes unless noted).
``Covered'' $=$ deleted node with $\geq$1 core neighbor
(rebuilt by gated message passing from its core neighbours over $E_r$);
``fallback'' $=$ no core neighbor (keeps its projected own feature --- a
direct consequence of graph connectivity,
Algorithm~\ref{alg:dprs-final});
last column: mean/median core-neighbors among deleted nodes
(unique-undirected counting).}
\label{tab:coverage}
\scriptsize
\setlength{\tabcolsep}{3.5pt}
\begin{tabular*}{\columnwidth}{@{\extracolsep{\fill}}lrrrr@{}}
\toprule
Mask applied & deleted & cover. & fallb. & nbrs \\
\midrule
arxiv core ($cr{=}0.3$) & $118{,}540$ & $74.4\%$ & $25.6\%$ & $3.39/2$ \\
prod-200k core ($cr{=}0.3$) & $140{,}000$ & $42.4\%$ & $57.6\%$ & $1.06/0$ \\
prod-200k nm-var.\ ($30\%$ super) & $96{,}065$ & $44.7\%$ & $55.3\%$ & $1.07/0$ \\
\bottomrule
\end{tabular*}
\end{table}

\textbf{Edge-count audit (mask-matched).} Table~\ref{tab:edge-audit} audits
every edge figure under the exact masks of the main results, separating unique
undirected structural edges from directed messages. It corrects the
earlier ``$9.6\%$ combined'' figure, which under-counted $E_r$: the audited
combined structural retention is $42.3\%$ (arxiv) / $46.4\%$ (products-200k),
while the audited per-aggregation message proxy (encoder on $E_p$ + decoder on
$E_r$) is $25.0\%$ / $28.5\%$ of a full-graph aggregation.

\begin{table}[!htbp]
\centering
\footnotesize
\caption{Edge-count audit under the exact main-experiment masks
(fixed-magnitude low-norm core, $cr{=}0.3$, original graph). Structural rows:
unique undirected edges; the core--deleted cut is written $E_r^{\mathrm{cut}}$
because the reconstruction set the decoder gathers over is
$E_r=E_p\cup E_r^{\mathrm{cut}}$ ($E_p\subseteq E_r\subseteq E$).
``msgs / aggregation'': a per-aggregation message
proxy --- directed messages of one encoder aggregation pass (symmetrized
$E_p$) plus one decoder gather over the core--deleted cut ($E_r^{\mathrm{cut}}$), divided by the directed message count
$2|E|$ of one full-graph aggregation; it is a structure-cost proxy, not a
layer-weighted full-epoch traffic count ($L_{\mathrm{enc}}\,2|E_p| +
L_{\mathrm{dec}}\,|E_r|$ with layer counts $L_{\mathrm{enc}},L_{\mathrm{dec}}$
would give the latter).}
\label{tab:edge-audit}
\scriptsize
\setlength{\tabcolsep}{4pt}
\begin{tabular*}{\columnwidth}{@{\extracolsep{\fill}}lrr@{}}
\toprule
Quantity & arxiv & prod-200k \\
\midrule
$|E|$ unique undirected & $1{,}157{,}799$ & $413{,}798$ \\
$E_p$ core--core & $89{,}900$ ($7.8\%$) & $44{,}101$ ($10.7\%$) \\
$E_r^{\mathrm{cut}}$ core-1hop cut & $399{,}504$ ($34.5\%$) & $147{,}867$ ($35.7\%$) \\
drop--drop (unused) & $668{,}395$ ($57.7\%$) & $221{,}830$ ($53.6\%$) \\
combined $E_p\!\cup\!E_r^{\mathrm{cut}}$ & $489{,}404$ ($42.3\%$) & $191{,}968$ ($46.4\%$) \\
\midrule
msgs / aggregation & $579{,}304$ ($25.0\%$) & $236{,}069$ ($28.5\%$) \\
encoder aggregation ratio & $1/12.8$ & $1/9.4$ \\
\midrule
end-to-end epoch wall (train+eval)$^{\dagger}$ & $\sim$$26$\,s (DPR-S) & not measured \\
\bottomrule
\end{tabular*}
\vspace{2pt}
\noindent{\scriptsize $^{\dagger}$\,Author-method end-to-end timing on the same $6$\,GB RTX\,4050 (arxiv, $h{=}128$, batch $512$; profile in \S\ref{sec:exp-eff}): DPR-S sparse decoder $\sim$$26$\,s/epoch vs.\ DPR-D dense decoder $\sim$$32$\,s/epoch ($1.2\times$), i.e.\ $\approx$$87$ vs.\ $\approx$$107$\,min for the $200$-epoch main protocol; throughput $\sim$$3026$ vs.\ $\sim$$2371$\,samples/s. prod-200k timing was not measured on this card. This $1.2\times$ pair is a separate profiling session from the $2.7\times$ pair ($20.2$ vs.\ $55$\,s/epoch) quoted in \S\ref{sec:exp-eff}; each speedup claim cites its own pair, and the two measurement scopes are never combined.}
\end{table}

Two readings follow. (i) Coverage is strongly dataset-dependent: citation
graphs are core-adjacent-dense ($74.4\%$ covered on arxiv), whereas the
products-200k split leaves a \emph{majority} of deleted nodes uncovered, so the
fallback path --- not neighbor aggregation --- carries the reconstruction
burden there. (ii) The fallback is \emph{not} a shared zero vector: an
uncovered deleted node ($N_r(v)=\varnothing$) is scored from a linear
projection of its own raw feature vector (Algorithm~\ref{alg:dprs-final})
--- never a zero or shared vector --- so the shared classifier head cannot
collapse uncovered nodes to identical predictions. (Some coarsening-side statistics in the ablation study count
core neighbours over \emph{super}-nodes with a per-member mean-norm core and
directed edges; the mask-matched numbers here follow
Algorithm~\ref{alg:dprs-final} exactly --- original nodes, fixed mask,
unique-undirected counting. Likewise, super-node percentages such
as ``$30\%$ of super nodes $\equiv$ $52\%$ of original nodes'' refer only to
the node-merge variant row above, not to the main-results mask.) The
DPR-S-M4 (extended) variant widens the covered set by relaxing the
strictly-1-hop constraint to a kept-subgraph multi-hop pass.

\textbf{Fallback mechanism isolation (stratified accuracy + input ablations).}
Table~\ref{tab:c11-stratify} scores every ogbn-arxiv test node through its
\emph{actual} reconstruction path under the extended variant \textbf{DPR-S-M4}
(the matched-capacity near-full checkpoint of Table~\ref{tab:c14-capacity};
main $cr{=}0.3$ mask) and ablates
the two fallback inputs at eval time. Three observations address the
concern that a majority-fallback path could mask weaker reconstruction
quality. First, the
covered path (the extended variant's core-1hop cross-attention, $46.8\%$ of
test nodes) reaches
\emph{higher} accuracy than the kept-core encoder path itself
($0.7518$ vs.\ $0.6835$) --- sparse reconstruction is the strongest link, not
a liability. Second, the fallback path ($20.4\%$) is honestly weaker
($0.5980$) and is reported as such; it is not silently diluted by the
covered majority. Third, zeroing its \emph{feature} inputs (the gated
core-neighbour message term is retained throughout) separates the roles of
the two streamed channels: the per-node raw feature carries the bulk of the
signal ($-6.8$ pp when removed), while the M4 long-range context adds a
further $+1.3$ pp ($0.5851$ vs.\ $0.5980$) --- the fallback is a
feature-grounded estimator with a long-range correction and a light
core-neighbour aggregation, not a constant or zero-vector pass-through.
(The path shares here are \emph{mechanism} shares: covered $=$ deleted
\& degree-important, the nodes the decoder actually rebuilds by
cross-attention; they are therefore a subset-wise refinement of the
\emph{geometric} $\geq$1-core-neighbor definition of
Table~\ref{tab:coverage}, under which $74.4\%$ of arxiv deleted nodes are
covered --- a few deleted nodes with a core neighbor are not
degree-important and still take the fallback.)

\begin{table}[!htbp]
\centering
\footnotesize
\caption{Reconstruction-path stratified accuracy on ogbn-arxiv under the
extended variant DPR-S-M4 ($cr{=}0.3$ main mask, test split, $48{,}603$ nodes,
seed~0) with eval-time
fallback-input ablations. The fallback path is
$\mathbf{W}_d x_v + \mathbf{W}_m\,\mathrm{m4}_v$ plus one gated
core-neighbour message step (extended-variant mechanism,
Algorithm~\ref{alg:dprs-var}); ``no-M4'' /
``no-$X_v$'': the corresponding streamed feature input is zeroed at eval time
(weights unchanged, message term retained).
\emph{Protocol note:} the seed-0 checkpoint here belongs to the
matched-capacity control protocol of Table~\ref{tab:c14-capacity}
(same recipe, near-full operating point): its eval-time overall score is
$0.6980$ (train-time model-selection score $0.6966$), consistent with that
table's 5-seed mean $0.6953\pm0.0019$ (seed~2 reaches $0.6986$). The
core-1hop operating point of the main results is a different
reconstruction radius with its own 5-seed mean
($0.6892\pm0.0020$); path shares and stratified accuracies here are
computed under this checkpoint's actual paths.}
\label{tab:c11-stratify}
\scriptsize
\setlength{\tabcolsep}{3.5pt}
\begin{tabular*}{\columnwidth}{@{\extracolsep{\fill}}lrrrr@{}}
\toprule
Reconstruction path & share & full & no-M4 & no-$X_v$ \\
\midrule
core kept (encoder path)        & $32.9\%$ & $0.6835$ & $0.6835$ & $0.6835$ \\
covered (core-1hop cross-attn)  & $46.8\%$ & $\mathbf{0.7518}$ & $0.7518$ & $0.7518$ \\
fallback (per-node feat.\ + M4) & $20.4\%$ & $0.5980$ & $0.5851$ & $0.5305$ \\
\midrule
all deleted (covered+fallback)  & $67.1\%$ & $0.7052$ & $0.7012$ & $0.6847$ \\
overall (test)                  & $100\%$  & $0.6980$ & $0.6954$ & $0.6843$ \\
\bottomrule
\end{tabular*}
\end{table}

\subsection{External Baselines}
\label{app:external}
Under a controlled protocol, we
re-ran canonical baselines on the \emph{same} machine (NVIDIA RTX\,4050, 6\,GB)
and dataset (ogbn-arxiv) used for our arxiv experiments. We report four
scalable GNN references (SIGN, GraphSAGE full-batch, GraphSAINT,
Cluster-GCN) as the backbone-reference point, since DPR reuses a GraphSAGE
encoder; all of them operate on the full node set (no coarsening).

\begin{table*}[!t]
\centering
\footnotesize
\caption{External baseline comparison on ogbn-arxiv (single NVIDIA RTX\,4050,
6\,GB card). All external-baseline rows are single-seed point estimates on
this 6\,GB card (seed~42, selected as the highest-validation-accuracy run);
the DPR rows are 5-seed
averages. The single-seed treatment reflects Table~\ref{tab:external}'s
role: an order-of-magnitude accuracy and peak-VRAM reference against the
closest external comparators. The controlled multi-seed numbers for our
method live in~\S\ref{sec:exp-main}. The plotted frontier summarizes the
reported operating points ($+1.97$\,pp over the strongest rerun baseline for
the extended variant DPR-S-M4; the deployed DPR-S holds $+1.36$\,pp with a
separately profiled peak VRAM $50.1\%$ below DSpar's $2908$\,MiB); a broader
multi-seed evaluation of the external baselines remains future work. Peak VRAM is a single-run profiling figure, not
a seed-averaged quantity, and is reported only where separately logged on
this card (--- otherwise: the $h{=}256$ dense reference and the near-full
extended variant were not separately profiled here; the $1450$\,MiB entry
is the final-method deployment profile quoted in the abstract). Controlled
same-protocol numbers are in~\S\ref{sec:exp-main}.}
\label{tab:external}
\begin{tabular*}{\textwidth}{@{\extracolsep{\fill}}>{\raggedright\arraybackslash}p{3.8cm}>{\centering\arraybackslash}p{2.0cm}>{\raggedright\arraybackslash}p{2.6cm}>{\centering\arraybackslash}p{1.4cm}>{\centering\arraybackslash}p{1.5cm}>{\centering\arraybackslash}p{1.3cm}@{}}
  \toprule
  Method & Backbone & Training graph & test-acc & Peak (MiB) & ms/epoch \\
  \midrule
  \multicolumn{6}{l}{\emph{Scalable GNN references (backbone)}} \\
  SIGN (Frasca et al., 2020) & MLP on $A^5X$, undir. & full (precomputed) & 0.7075 & 1769 & 72.5 \\
  GraphSAGE (full-batch) & 3-layer SAGE & full graph & 0.7156 & 4057 & 739.3 \\
  GraphSAINT (Zeng et al., 2020) & 3-layer GCN & RW node sampler & 0.6874 & 5175 & 1359 \\
  Cluster-GCN (Chiang et al., 2019) & 3-layer SAGE & random partition & 0.6462 & 2902 & 883 \\
  \midrule
  \multicolumn{6}{l}{\emph{Ours (RTX\,4050 same card; matched-pipeline VRAM; ms/epoch omitted --- DPR full-batch core, not sampler-comparable)}} \\
  DPR-D (cr$=$0.3, full recon) & 3-layer SAGE + dense decoder & 30\% core & $0.7111\pm0.0053$ & --- & --- \\
  DPR-S-M4 (cr$=$0.3, near-full) & 3-layer SAGE + sparse decoder & 30\% core & $0.6953\pm0.0019$ & --- & --- \\
  DPR-S (cr$=$0.3, core-1hop) & 3-layer SAGE + sparse decoder & 30\% core + 1-hop & $0.6892\pm0.0020$ & $1450$ & --- \\
  \bottomrule
\end{tabular*}
\end{table*}

\noindent\textbf{Observations.} (i)~The full-graph GraphSAGE backbone run without
pruning ($0.7156$) anchors the recovery ladder: the dense reconstruction
reference DPR-D ($0.7111\pm0.0053$) retains $99.4\%$ of it. Our sparse decoder DPR-S-M4 (cr$=$0.3,
near-full) reaches $0.6953\pm0.0019$ -- $97.8\%$ of the accuracy of the dense
reconstruction reference DPR-D -- while its per-aggregation message
proxy is only $\approx$$25\%$ of a full-graph aggregation
(Table~\ref{tab:edge-audit});
the deployed core-1hop point
($0.6892\pm0.0020$, $96.9\%$ of DPR-D) trades a further $0.6$\,pp for
a $21\%$ faster inference (App.~\ref{app:data}), while still recovering
$97.4\%$ of full-graph SIGN ($0.7075$). These ratios compare two complete
configurations and do not isolate the contributions of the coarsening and
decoder stages to the remaining accuracy gap.
(ii)~Under the standard undirected-graph configuration SIGN reaches
$0.7075$ (72\,ms/epoch, 1769\,MiB) -- in line with GraphSAGE ($0.7156$) and with
DPR-D ($0.7111$); the earlier $0.6829$ was a directed-convention artifact.
(iii)~We also ran GraphSAINT and Cluster-GCN on the same card:
GraphSAINT (3-layer GCN with a random-walk node sampler) reaches $0.6874$
(5175\,MiB, 1359\,ms/epoch); Cluster-GCN (3-layer SAGE over a random graph
partition) reaches $0.6462$ (2902\,MiB, 883\,ms/epoch). Both sit in the expected
arxiv range ($\sim$$0.64$--$0.71$); the small gap below the literature
$\sim$$0.71$--$0.72$ reflects the lightweight random-walk / random-partition
sampler relative to the stronger canonical formulations, not a deficit in the
methods. Note that both still train on the \emph{full} node set (no coarsening)
and therefore require higher peak VRAM than our 30\%-core pipeline; the exact
per-method ratios are listed in Table~\ref{tab:external} (e.g.\ DPR-S at
$1450$\,MiB vs.\ GraphSAINT at $5175$\,MiB with $+0.2$\,pp accuracy) ---
reinforcing that DPR's gain is a \emph{memory} win at matched accuracy, not an
accuracy win. (iv)~Figure~\ref{fig:acc-budget} summarizes the same-card
measurements on the budget--accuracy plane: every rerun external baseline
trains on $97$--$103\%$ of the node budget, whereas the deployed point spends
$30\%$ and pays the $2.6$\,pp gap documented above. The external-feature
track, at $41.8\%$ of the nodes, exceeds all five baselines ($0.7744$ vs.\
$0.7156$ best) and is the configuration behind the shaded region. The
raw-feature $kr{=}0.5$ point is scored on the coarsened super-node graph and is
not entered into this comparison.

\begin{figure*}[!htbp]
\centering
  \includegraphics[width=\textwidth]{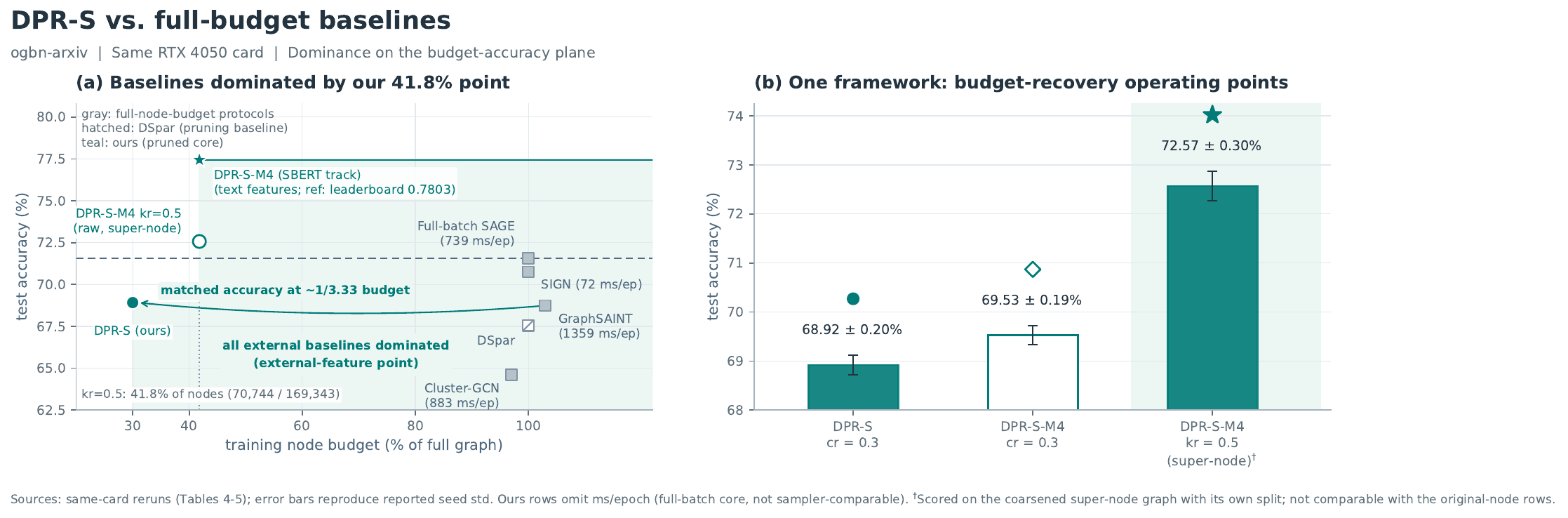}
\caption{Budget--accuracy plane on ogbn-arxiv (same RTX\,4050 card; the
baseline numbers and per-epoch wall-clock price tags are the same-card
reruns of Table~\ref{tab:external}).
(a)~External baselines (gray; hatched: DSpar) all train on
$97$--$103\%$ of the node budget, whereas our points (colored) operate on
pruned cores: DPR-S at $30\%$ and the
extended DPR-S-M4 at $41.8\%$ of nodes. The shaded region marks the budgets
where our external-feature operating point ($0.7744$; leaderboard reference
$0.7803$, a different, stronger feature protocol) has a node budget at least as
large as every rerun baseline and an accuracy at least as high. The raw-feature
$kr{=}0.5$ point (open marker) is scored on the coarsened super-node graph with
its own split and is shown for budget scaling only.
(b)~Within-framework budget--recovery operating points (seed std as error
bars); the $kr{=}0.5$ extended point trains on $41.8\%$ of the nodes but is
scored on the coarsened super-node graph, so it is not read against the
original-node reference $0.7156$.}
\label{fig:acc-budget}
\end{figure*}

\paragraph{External prune/sparsify/coarsen baselines (paper-reported + same-card DSpar rerun).} Table~\ref{tab:external} reports the baselines we \emph{reran on our 6\,GB RTX\,4050} under a single controlled protocol. The most directly comparable prune-then-operate competitor is \textbf{DSpar} (degree-based edge sparsification, full node set retained); \textbf{Coarformer} (graph-coarsened transformers, proprietary internal baseline, not reproducible) and \textbf{FIT-GNN} (coarsening-derived induced subgraphs with Extra/Cluster Nodes carrying boundary information; the public repository's documented benchmark suite does not provide a ready ogbn-arxiv configuration matching our protocol, so a strict same-protocol port is out of scope here, and it is not a graph-pruning method). We \emph{do} now rerun DSpar on our RTX\,4050 card under the exact same GraphSAGE-encoder protocol used for DPR-D/DPR-S (same 3-layer GraphSAGE mean, hidden 128, Adam 0.001, 200 epochs, 3 seeds 42/5/73; DSpar degree-based $\epsilon$-sampling with $\epsilon{=}0.82$ so $\approx$$26\%$ undirected edges removed, matching the paper's ogbn-arxiv sparsity of 26.7\%). Paper-reported numbers for the A40 48\,GB run are kept as a sub-row for reference. Coarformer remains a paper-reported / not-reproducible delimited row (proprietary, no code); FIT-GNN partitions the graph into coarsening-derived induced subgraphs and augments boundary information through Extra/Cluster Nodes; its public code is available but supports no OGB dataset, so we list it as paper-reported only.

\begin{table*}[!htbp]
\centering
\scriptsize
\setlength{\tabcolsep}{4pt}
\caption{External prune/sparsify/coarsen baselines on ogbn-arxiv. \textbf{Same-card rerun regime (RTX\,4050 6\,GB):} DSpar (Liu et al., 2023) is now rerun under the exact same 3-layer GraphSAGE hidden-128 encoder protocol used for DPR-D/DPR-S ($\epsilon{=}0.82$ $\Rightarrow$ $\approx$$26\%$ undirected-edge removal, matching the paper's Table 2 value for arxiv). DPR-S-M4 $0.6953$ is therefore $\approx$$1.97$\,pp above DSpar at same hardware, backbone, and training budget, but different reduction regimes (node-level coarsening + reconstruction vs.\ edge-only sparsification on the full node set). \textbf{Mechanism regime disclaimer (critical for fair reading):} DSpar never drops any node ($100\%$ of $N{=}169{,}343$ always active) and therefore has \emph{no pruned-node recovery problem} and no decoder; its per-epoch latency is low because it performs \emph{only the forward pass over the full-node graph}, but it can \emph{never} train a graph that is too large to instantiate on the card at the full-node level (the bottleneck DPR solves). DPR trains on a 30\% core and then \emph{reconstructs} the remaining 70\% at inference via structure-aware diffusion --- which is exactly the regime in which node-level OOM is the primary blocker, not SpMM speed. Coarformer is not reproducible (proprietary internal baseline). FIT-GNN partitions the graph into coarsening-derived induced subgraphs with Extra/Cluster Nodes; its public code is available but ships no ready ogbn-arxiv configuration, so it is not a graph-pruning comparison under the same protocol and is listed only for transparency. GraphSAINT / Cluster-GCN / SIGN / full GraphSAGE are already rerun on the exact same card in Table~\ref{tab:external} (full-node methods, no reconstruction).}
\label{tab:external-prune}
\begin{tabular*}{0.98\textwidth}{@{\extracolsep{\fill}}>{\raggedright\arraybackslash}p{3.0cm}>{\raggedright\arraybackslash}p{2.2cm}>{\raggedright\arraybackslash}p{2.3cm}>{\centering\arraybackslash}p{1.7cm}>{\centering\arraybackslash}p{1.5cm}>{\centering\arraybackslash}p{1.5cm}>{\raggedright\arraybackslash}p{2.3cm}@{}}
  \toprule
  Method & Sparsity lever & Recovery & Test acc & Train speedup & Inference & Hardware / Protocol \\
  \midrule
  \multicolumn{7}{l}{\emph{Paper-reported only (proprietary / different-task regimes)}} \\
  DSpar (Liu et al., 2023) \emph{(A40 paper)} & Edges only (all $N$ retained; $p_e\!\propto\!1/d_u{+}1/d_v$) & \emph{None} & $0.7194$ (GraphSAGE full) & $\approx$$1.1$--$2.6\times$ avg / $5.9\times$ max & products $-12$\% ms (arxiv not shown) & A40 48\,GB; $\epsilon$-sampling; 10 seeds; full node set \\
  Coarformer (proprietary) & Nodes (cluster-pool coarsen) & Cross-granularity attn & --- (arxiv not reported) & --- & --- & Not reproducible (internal baseline) \\
  FIT-GNN (coarsening + induced subgraphs) & Coarsening-derived induced subgraphs + Extra/Cluster Nodes & GNN on induced subgraphs & --- (arxiv not reported) & --- & --- & Public code; no ready ogbn-arxiv configuration in its benchmark suite (not same-protocol; see caption) \\
  \midrule
  \multicolumn{7}{l}{\emph{Same-card reruns (RTX\,4050 6\,GB; GraphSAGE $h{=}128$, 200 ep, ogbn-arxiv; DSpar 3 seeds, DPR rows 5 seeds)}} \\
  DSpar \emph{(this work, RTX\,4050)} & Edges only (all $N$ retained; $\epsilon{=}0.82$, $\approx$$26\%$ undir removed) & \emph{None} & $\mathbf{0.6756\pm0.0012}$ & $\sim$$150\times$$^\dagger$ & Tracks full-SAGE baseline & \textbf{RTX\,4050 6\,GB}; same GraphSAGE encoder as DPR; peak VRAM $\mathbf{2908}$\,MiB \\
  DPR-D (cr$=$$0.3$, full recon) & Nodes (core 30\%, edges $\approx$8\%) & Dense $[K_e,C']$ decoder, full $E$ & $0.7111\pm0.0053$ (5 seeds) & $\sim$$740$\,ms/ep$^\dagger$ & dense decoder at inference & RTX\,4050; coarsen + reconstruct \\
  DPR-S-M4 (cr$=$0.3, near-full) & Nodes (core 30\%) + M4 near-full recon & AMP cross-attention (M4), near-full & $0.6953\pm0.0019$ (5 seeds) & $\sim$$740$\,ms/ep$^\dagger$ & AMP cross-attn, $1.16$--$1.30\times$ vs DPR-D & RTX\,4050; double-layer sparsification \\
  \bottomrule
\end{tabular*}
\vspace{2pt}
\noindent{\scriptsize $^\dagger$\,Per-epoch latency only: DSpar trains on $100\%$ of nodes per pass (no coarsening); DPR trains on the $30\%$-node core then pays a decoder at inference. The two methods solve different primary bottlenecks; fair regime-delimited reading is the caption + \S\,\emph{Reading the same-card DSpar vs.\ DPR comparison}.}
\end{table*}

\noindent\textbf{Reading the same-card DSpar vs.\ DPR comparison.} The headline number for the same-card prune/sparsify comparison is:
\begin{center}
\emph{At matched RTX\,4050 6\,GB card, matched GraphSAGE hidden-128 encoder, and matched training budget (but different reduction regimes: $\approx$26\% edges removed for DSpar vs.\ $\approx$30\% nodes retained (removing $\approx$$70\%$) for DPR), DPR-S-M4 is $\mathbf{0.6953 - 0.6756 \approx 1.97}$\,pp above DSpar.}
\end{center}
DSpar's per-epoch wall clock is genuinely faster ($\sim$5\,ms vs.\ $\sim$740\,ms for DPR's full-batch core training), but that comparison is \emph{not apples-to-apples}: DSpar trains on $100\%$ of nodes per epoch (it just sparsified edges), while DPR's epochs operate on only $30\%$ of nodes (the decoder is trained \emph{jointly} with the encoder, Algorithm~\ref{alg:dprs-final}, so it is paid in both training and inference --- a component DSpar does not have); the relevant efficiency comparison for DPR is the internal sparse vs.\ dense one (\S\ref{sec:exp-eff}, \S\ref{app:eff-attn}). The correct apples-to-apples on the \emph{same bottleneck} is regime-delimited: DPR solves node-level OOM feasibility, DSpar solves SpMM efficiency on graphs that already fit the card. On ogbn-arxiv the graph still fits a 6\,GB card at full-node size, so DSpar's speedup advantage is real --- and the $1.97$\,pp gap shows that \emph{even on a graph that fits with room to spare}, node-coarsening + structure-aware reconstruction delivers higher accuracy than pure edge-degree sparsification under this protocol. Because the two regimes compress different resources and per-epoch costs differ by design, we read this comparison as \emph{descriptive evidence} rather than a matched Pareto-frontier claim; Figure~\ref{fig:pareto} places all measured points on the accuracy--VRAM plane and delimits the two regimes explicitly (on products 2.45M, DSpar cannot avoid the full-node activation footprint).

\begin{figure*}[!ht]
  \centering
  \includegraphics[width=0.85\textwidth]{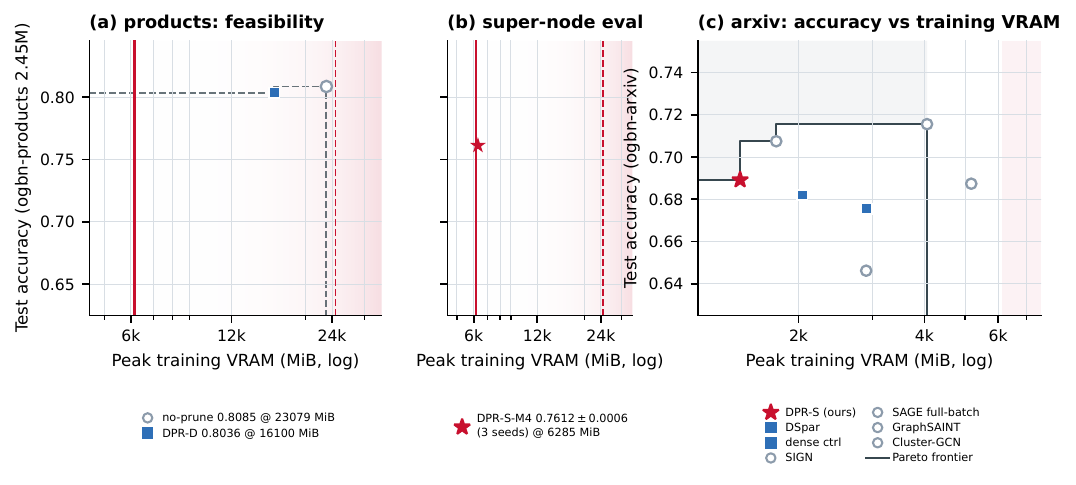}
  \caption{Same-card Pareto comparison; every point is a measured number from
  this work's runs and logs (no extrapolation); panels are ordered by service
  target, and every point is identified by name in the legend beneath its
  panel (the plots themselves carry no labels). \textbf{(a)}~Full $2.45$M
  ogbn-products, \emph{original-node} evaluation: DPR-D (dense
  reconstruction) peaks at $16{,}100$\,MiB and the no-prune GraphSAGE baseline at
  $23{,}079$\,MiB --- within a $6$\,GB budget no full-scale original-node baseline
  runs at all (solid and dashed vertical lines: $6$\,GB and $24$\,GB card
  capacities; red wash: beyond the $6$\,GB card; dashed step: two-point
  Pareto frontier).
  \textbf{(b)}~the same products scale at \emph{super-node} granularity,
  shown in its own facet because the accuracy axis is not comparable with
  (a): DPR-S-M4 trains on the coarsened super-node set ($684{,}233$
  super-nodes) at $6285\pm14$\,MiB peak (3 seeds, batch 1024, $h{=}128$;
  wall $526\pm3$\,s per seed) --- $2.6\times$ leaner than DPR-D,
  fitting an 8\,GB card.
  \textbf{(c)}~ogbn-arxiv, test accuracy vs.\ peak training VRAM (log
  scale); the shaded band marks VRAM beyond the $6$\,GB card edge and the
  solid step is the Pareto frontier of the plotted points (non-dominated
  region above--left). Within the same-card rerun regime
  (GraphSAGE $h{=}128$ encoder, filled squares), DPR-S dominates the
  DSpar rerun (the final core-1hop DPR-S
  profiles at $1450$\,MiB against DSpar's $2908$\,MiB, i.e.\ $2.0\times$
  lower peak VRAM; the $+1.97$\,pp DPR-S-M4 advantage over DSpar is a
  different configuration, reported in the text and not plotted on this
  axis), as do GraphSAINT and Cluster-GCN --- while seeing only
  $30\%$ of the nodes (DSpar and the external references operate on the
  full node set).
  The matched dense decoder control (same family and
  width, 3 seeds) uses $2040$\,MiB for $0.682$: the sparse decoder is $32\%$
  leaner and $+0.7$\,pp more accurate at matched width and family
  (\S\ref{sec:exp-main}). DPR-D
  (h256) arxiv peak VRAM was not separately logged and is therefore not
  plotted; its encoder scale is represented by the GraphSAGE full-batch
  point.}
  \label{fig:pareto}
\end{figure*}

\noindent\textbf{Why Coarformer and FIT-GNN are not same-card reruns.} Neither is reproducible under our protocol: Coarformer is proprietary (no code or disclosure), and FIT-GNN's documented benchmark suite does not provide a ready ogbn-arxiv configuration matching our protocol, so porting it to OGB scale would itself be a research contribution rather than a baseline rerun. We therefore list both only for transparency, without synthetic numbers.

\noindent\textbf{Implementation notes.} Same-card DSpar rerun details: ogbn-arxiv, OGB split, 3-layer GraphSAGE mean (in projection 128, 3 $\times$ SAGEConv + BN + ReLU + drop 0.3, classifier 128$\to$40), Adam lr=1e-3 weight decay 5e-4, 10-epoch warmup + cosine schedule, 200 epochs, early-stop patience 20 on val acc, 3 seeds 42/5/73. DSpar edge sampling (Algorithm 1 of the DSpar paper): undirected-degree-based probability $p_e \propto 1/d_u + 1/d_v$, $Q = \lceil |V|\ln|V| / \epsilon^2 \rceil$, sampling $Q$ edges with replacement, weight $1/(Q p_e)$, mirrored to directed; $\epsilon{=}0.82$ gives 26.0\% undirected edge removal matching the DSpar paper Table 2 ogbn-arxiv 26.7\% value. Peak VRAM 2908 MiB, no OOM. GraphSAINT uses a random-walk node sampler and Cluster-GCN a random graph partition, both standard PyG implementations (the small gap below the literature range reflects the lightweight sampler relative to the stronger canonical formulations, not a deficit in either method). DPR-D/DPR-S peak VRAM is measured from their own matched coarsen-and-reconstruct pipelines.

\noindent\textbf{Practical implications for VRAM-limited deployment.} The same-card
comparison makes three points concrete for the practitioner: (i)~\emph{the
accuracy cost of extreme pruning is bounded} -- a $2$--$3$\,pp tax under the
raw-feature protocol on arxiv, near-zero for the dense reconstruction reference DPR-D at
products full scale (Table~\ref{tab:main}); (ii)~\emph{extreme pruning retains near-dense accuracy relative to the
dense reconstruction reference} -- DPR-S-M4 retains $97.8\%$ of the
accuracy of the dense reconstruction reference DPR-D on the same core
($0.6953/0.7111$) using only $30\%$ of the
nodes, an encoder graph with $7.8\%$ of edges, and $\approx$$25\%$
per-aggregation message proxy (structural audit in
Table~\ref{tab:edge-audit}); the ratio compares two complete
configurations and does not isolate the contributions of the coarsening
and decoder stages to the remaining gap.
(iii)~\emph{the gains are a deployment-oriented efficiency benefit
on commodity hardware} -- the framework runs on a single 6\,GB consumer card
(RTX\,4050) for all arxiv-scale experiments and on a $>$16\,GB cloud GPU for
full-scale products. The full $2.45$\,M-node products numbers are reported
separately from these single-card baselines; controlled same-protocol
comparisons are in~\S\ref{sec:exp-main}.

% =============================================================
\subsection{PPI monotonicity, backbone dependence, and pruner pluggability}
\label{app:ppi}
These \emph{weak-backbone} PPI (node classification (multi-label), micro-F1, 5 seeds)
results are retained in the appendix for evidence integrity; the body does not
depend on them.

\textbf{Monotonic recovery margin.} The decoder's margin over HSP-alone grows
from $+0.183$ (cr=0.9) to $+0.288$ (cr=0.3):
\begin{table*}[width=\FullWidth,cols=5,pos=!htbp]
\footnotesize
\begin{tabular*}{\tblwidth}{@{} LCCCC@{} }
  \toprule
  cr & edges kept & HSP-alone & DPR-D (dense) & $\Delta$(Exp$-$alone) \\
  \midrule
  0.9 & 0.686 & 0.6123$\pm$0.0006 & 0.7955$\pm$0.0051 & +0.1832 \\
  0.7 & 0.603 & 0.5991$\pm$0.0014 & 0.7953$\pm$0.0052 & +0.1962 \\
  0.5 & 0.367 & 0.5609$\pm$0.0015 & 0.7935$\pm$0.0061 & +0.2326 \\
  0.3 & 0.176 & 0.5047$\pm$0.0034 & 0.7923$\pm$0.0034 & +0.2875 \\
  (no-prune) & 1.0 & --- & 0.6613$\pm$0.0010 & --- \\
  \bottomrule
\end{tabular*}
\end{table*}
The same monotonic trend holds on arxiv ($+0.13\to+0.21$ across cr).

\begin{figure*}[t]
  \centering
  \includegraphics[width=0.85\textwidth]{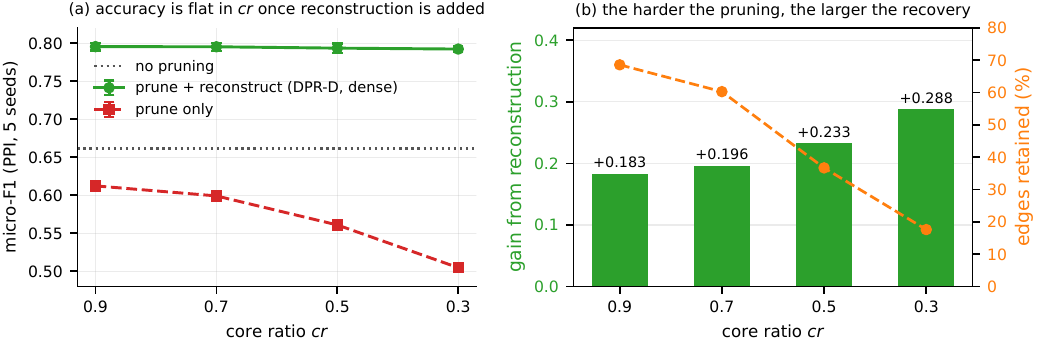}
  \caption{Sensitivity to the core ratio $cr$ on PPI (node classification
  (multi-label), 5 seeds). (a) Once reconstruction is added, accuracy is
  approximately flat in $cr$, whereas the no-reconstruction baseline degrades
  monotonically as the pruning becomes more aggressive. (b) The gain from
  reconstruction grows from $+0.183$ at $cr{=}0.9$ to $+0.288$ at $cr{=}0.3$ -- the
  harder the pruning, the larger the recovery. The dashed line is the fraction of
  edges retained after pruning, showing the reconstruction simultaneously
  improves accuracy and reduces graph size.}
  \label{fig:app-sens}
\end{figure*}

\textbf{Backbone dependence caveat.} The PPI numbers above use a \emph{weak}
SAGE backbone (no-prune only $0.6613$); on a \emph{strong} backbone
(sages/GAT), no-prune already reaches $0.865$, and at cr=0.9 HSP-alone is $0.800$
/ DPR-D is $0.878$; the decoder gain shrinks to $+0.078$. Part of the weak-backbone
gain is therefore a \emph{capacity} effect (the decoder adds representation
capacity the weak GNN lacks), not pure prune-recover. A strong-backbone
control ($+0.078$) confirms that the expander gain is not merely prune-recovery.
On this weak-backbone PPI setting, DPR-S (sparse decoder, single seed)
reaches $0.6852$ micro-F1, which already exceeds the no-prune GNN
($0.6613$, $103.6\%$ recovery). Its ratio to DPR-D ($0.7923$) is $86.5\%$,
\emph{lower} than the $97.8\%$ on arxiv; this gap is expected, not a
reconstruction failure. The reason is that on PPI the dense decoder's own
\emph{capacity dividend} over the weak backbone is large (DPR-D $-$ 
no-prune $= +0.131$, versus only $+0.05$ on arxiv), so DPR-D's absolute
level is inflated by backbone capacity rather than by prune-recovery. DPR-S (sparse,
core-only propagation) cannot capture the full capacity dividend of
all-graph DPR-D, hence the lower ratio -- while still beating no-prune.
This confirms the sparse decoder generalizes to multi-graph inductive
settings; the PPI ratio should be read as a \emph{capacity} comparison,
not a prune-recovery rate.

\begin{figure*}[t]
  \centering
  \includegraphics[width=0.5\textwidth]{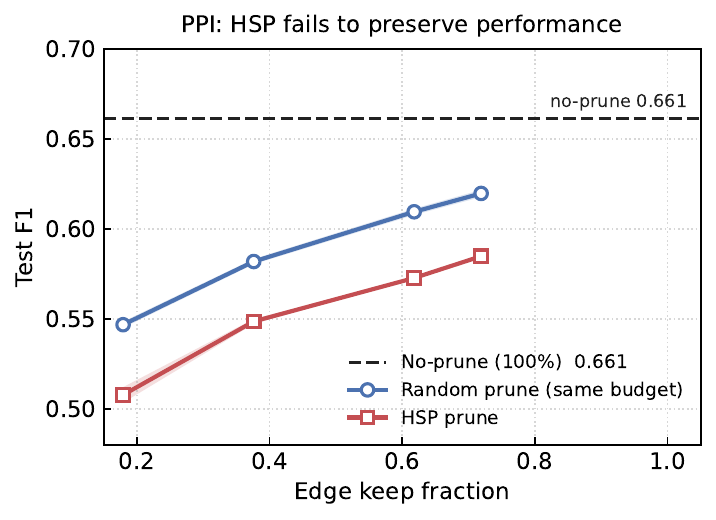}
  \caption{The pruner is pluggable: on PPI (node classification (multi-label), 5
  seeds, matched \emph{edge} budget) the learnable hyperbolic pruner (HSP) tracks
  \emph{random} pruning of the same budget and falls far below the no-prune
  reference ($0.661$), i.e.\ the
  pruning geometry carries no recoverable signal by itself -- confirming the
  gain is primarily attributable to the reconstruction stage. This panel is a
  separate matched-edge-budget control from the node-budget $cr$-sweep table
  above; the curves are that control's own $5$-seed run (its no-prune mean is
  $0.641$), while the dashed level marks the manuscript's PPI no-prune value
  $0.6613$.}
  \label{fig:app-ppi}
\end{figure*}

% =============================================================
\subsection{Pruning-Direction-Coupled Position Encoding (Link Prediction)}
\label{app:pe-control}
This controlled study tests whether the hyperbolic pruner's 16-D structural PE
is a generic gain or is coupled to the pruning direction. We switch the
downstream on the same ogbn-arxiv graph, same compression ($cr{=}0.3$), same
TransformerConv backbone, from node classification to link prediction (predict
edge existence within the core subgraph). PE-on is 8-D hyperbolic + 8-D
Laplacian (16-D); PE-off attaches no position encoding.

\textbf{Default (peripheral) pruning.} Under the default compact/peripheral core
(low-norm nodes kept):
\begin{center}
\small
\begin{tabular*}{\columnwidth}{@{\extracolsep{\fill}}LCCC@{}}
  \toprule
  Configuration & Test AUC & Test AP & \makecell{early-stop\\epoch} \\
  \midrule
  PE-off (no PE) & \textbf{0.9761} & \textbf{0.9704} & 55 \\
  PE-on (16-D: 8-D hyp + 8-D Lap) & 0.9556 & 0.9439 & 39 \\
  \bottomrule
\end{tabular*}
\end{center}
$\delta$(PE-on $-$ PE-off) = \textbf{$-0.0205$ AUC / $-0.0265$ AP}: under the
peripheral core, PE is a slight liability (converges faster but stops lower).

\textbf{Controlled reversal: hub core.} Reversing the pruning direction to a
hub-core (high-norm / hub nodes kept) flips the PE content in sync:
\begin{center}
\small
\begin{tabular*}{\columnwidth}{@{\extracolsep{\fill}}LCC@{}}
  \toprule
  Configuration & Test AUC & Test AP \\
  \midrule
  PE-on (hub-core) & \textbf{0.9688} & 0.9565 \\
  PE-off (hub-core) & $\approx$0.967 & $\approx$0.952 \\
  \bottomrule
\end{tabular*}
\end{center}
{\footnotesize Hub-core PE-off entries ($\approx$) are trajectory-based estimates.}

\textbf{Conclusion.} Under the peripheral core PE is harmful; reversing to a hub
core flips the effect to near-neutral. The interaction effect ($\approx$$+0.023$
AUC) confirms the PE contribution is gated by the pruning direction, not a
generic gain. The main method drops the PE (\texttt{pe\_dim=0}); the best LP
accuracy remains peripheral PE-off (0.9761). Single-seed controlled finding;
does not affect the 5-seed node-classification claims.

\noindent\textbf{Design consequence.} Because the PE contribution is gated by
the pruning direction rather than a generic gain, the deployed method keeps
\texttt{pe\_dim=0} and thereby gains robustness: it needs no configuration
dependent on the core-selection geometry, consistent with the pluggable
selector result of \S\ref{sec:exp-abl}.

\subsection{Sparse Bipartite Attention Efficiency}
\label{app:eff-attn}
This controlled study measures the sparse bipartite attention's complexity
reduction in an attention-dominated setting. The decoder's attention replaces
a dense $[K_e,C']$ matmul ($O(N^2)$) with attention computed only on
real$\to$core 1-hop edges ($O(|E_r|d)$).

\begin{table}[!htbp]
\centering
\footnotesize
\begin{tabular*}{\columnwidth}{@{\extracolsep{\fill}}LCC@{}}
  \toprule
  Configuration & Test Acc & 50-epoch wall-clock \\
  \midrule
  Dense $[K_e,C']$ (control) & 0.6649 & $\sim$$4.4$\,h (extrapolated) \\
  Sparse bipartite attention (default) & \textbf{0.6649} & \textbf{$\sim$$29$\,min (measured)} \\
  \bottomrule
\end{tabular*}
\end{table}
Test Acc is the same to four decimal places; wall-clock speedup
$\ge$$4.5\times$ (2-epoch probe measured), up to $\approx$$9\times$ vs.\
extrapolated dense. This acts in an
attention-dominated regime, distinct from the main method's graph-size
bottleneck; the two must not be conflated. Single-seed (seed 42).

The $O(KC')$ scaling of the dense form is directly observable in the
design-space summary (Table~\ref{tab:2x2}): the same-width dense $[K_e,C']$
decoder restricted to the per-node 1-hop pool ($h{=}128$, $cr{=}0.3$,
batch 1024) peaks at $3490$\,MiB and its per-epoch wall-clock grows from
$\sim$$32$\,s early in training to $\sim$$320$\,s once the learned pruner
expands the emph set (the $[K,C']$ matrix grows with the pool), whereas the
sparse form stays $O(|E_r|d)$ and costs $\sim$$26$\,s per epoch throughout.
Despite the $\sim$$10\times$ time inflation, the dense-per-pool accuracy
($0.6981\pm0.001$) is close to the sparse decoder's ($0.6953$,
Table~\ref{tab:2x2}). The two cells differ in reconstruction scope as well as
decoder form, and no equivalence test was run, so we make no
accuracy-equivalence claim; the point is only that the sparse form reaches
accuracy in the same empirical range at $O(|E_r|d)$ cost.

% =============================================================
\subsection{Extended Motivation: Problem Illustration and Design-Space Map}
\label{app:mot-figs}
The collapse-and-recovery behavior that motivates the framework is
quantified by the
pruned-no-reconstruction floor ($0.378$, Table~\ref{tab:main});
Fig.~\ref{fig:motivation} below decomposes the headline recovery rate along
its two design axes (decoder type $\times$ reconstruction scope);
Table~\ref{tab:2x2} lists the measured design-space cells.

\begin{table*}[!htbp]
\centering
\small
\caption{Decoder-family $\times$ reconstruction-scope design-space summary
(ogbn-arxiv, $cr{=}0.3$, GraphSAGE $h{=}128$ unless noted; seed count in
parentheses). \textbf{This is a design-space listing, not a controlled
$2\times2$ experiment}: the dense union-pool (near-full) cell was not
measured; the near-full and core-1hop sparse rows use different decoder
variants (extended AMP+M4 vs.\ final gated-MP); the two core-1hop matched
control rows (same $h{=}128$ AMP family, 3 seeds) are the same-width
dense-vs-sparse calibration analyzed in \S\ref{sec:exp-abl}. No interaction
or accuracy-equivalence claim is made.}
\label{tab:2x2}
\begin{tabular}{llc}
\toprule
Configuration (decoder variant / key pool) & Scope & Test Acc \\
\midrule
Sparse AMP+M4 (extended DPR-S-M4) & near-full & $0.6953\pm0.0019$ (5) \\
Sparse gated-MP (final DPR-S) & core-1hop & $0.6892\pm0.0020$ (5) \\
Sparse $h{=}128$ AMP (matched control) & core-1hop & $0.689\pm0.002$ (3) \\
Dense $[K_e,C']$ (per-node pool) & core-1hop & $0.6981\pm0.001$ (3) \\
Dense $[K_e,C']$ ($h{=}128$ AMP, matched control) & core-1hop & $0.682\pm0.001$ (3) \\
Dense $[K_e,C']$ (union pool) & near-full & not measured \\
\bottomrule
\end{tabular}
\end{table*}

\begin{figure*}[!htbp]
  \centering
  \includegraphics[width=0.65\textwidth]{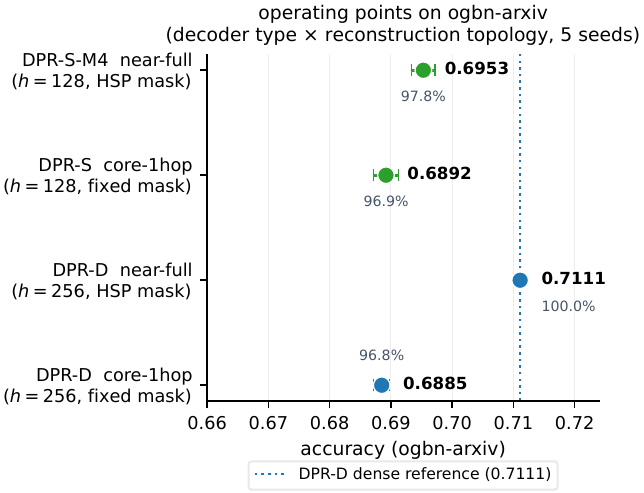}
  \caption{Empirical motivation: the headline recovery rate decomposed along
  its two design axes (ogbn-arxiv, 5 seeds except where noted). The panel
  separates \emph{decoder type} (sparse DPR-S family vs.\ dense reconstruction
  reference DPR-D) from \emph{reconstruction scope} (near-full vs.\
  core-1hop), which a single recovery percentage would otherwise conflate:
  sparse near-full (DPR-S-M4) $0.6953\pm0.0019$, $97.8\%$ of the dense
  reference; sparse core-1hop (the final method, DPR-S)
  $0.6892\pm0.0020$, $96.9\%$; dense core-1hop $0.6885\pm0.0013$, $96.8\%$;
  dense near-full $0.7111\pm0.0053$ is the $100\%$ reference (the dotted
  vertical line marks this dense-reference bound). The DPR-D rows
  use $h{=}256$ and the DPR-S rows $h{=}128$, so this panel is a
  design-space map, not a width-matched comparison; the same-width control is
  the matched dense-vs-sparse comparison of \S\ref{sec:exp-main}
  ($h{=}128$: sparse $0.689$ vs.\ dense $0.682$). Masks follow the run logs:
  the final DPR-S row and the dense core-1hop control use the fixed-magnitude
  (simple) mask, while the dense near-full reference uses the HSP mask
  (Table~\ref{tab:main}), so the $96.9\%$ headline is a
  system-level retention (\S\ref{sec:exp-main}). The
  pruned-no-reconstruction floor ($0.378$) lies far
  below the plotted range and is reported in Table~\ref{tab:main}.}
  \label{fig:motivation}
\end{figure*}

% =============================================================
\subsection{Controlled Studies (Appendix)}
\label{sec:controlled}
Two controlled studies, fully reported in the appendix, support the main
method's design choices:
\begin{itemize}
  \item \textbf{Pruning-direction-coupled position encoding}
    (App.~\ref{app:pe-control}): a link-prediction study confirms the PE's
    contribution sign is gated by the pruning direction---harmful under the
    default peripheral core ($\delta{=}{-}0.0205$ AUC), near-neutral under a
    reversed hub core---not a generic gain. The main method drops the PE.
  \item \textbf{Sparse bipartite attention efficiency}
    (App.~\ref{app:eff-attn}): the decoder's attention reduces from $O(N^2)$
    to $O(|E_r|d)$ with zero accuracy loss ($\ge$$4.5\times$ speedup measured).
    This acts in an attention-dominated regime, distinct from the main
    method's graph-size bottleneck.
\end{itemize}

\paragraph{Same-width dense-vs-sparse control.}
DPR-D (dense) and DPR-S (sparse) differ in decoder
type and hidden dimension in the main results, so ``$97.8\%$ of the dense
reference'' is not a strictly matched comparison. We therefore run both at
the \emph{same} hidden dimension ($h{=}128$), same decoder family (AMP with
core-1hop), same pruning mask, same training recipe (arxiv, $cr{=}0.3$,
seed~42, 200 epochs):

\begin{center}
\footnotesize
\setlength{\tabcolsep}{3pt}
\begin{tabular*}{\columnwidth}{@{\extracolsep{\fill}}lccc@{}}
  \toprule
  Attn mode & Test Acc & Peak VRAM & Train / epoch \\
  \midrule
  Dense ($[K_e,C']$ matmul) & $0.682\pm0.001$ & $2040$\,MiB & $\sim$$32$\,s \\
  Sparse bipartite (default) & $0.689\pm0.002$ & $1381$\,MiB & $\sim$$26$\,s \\
  \midrule
  \textbf{Delta} & $\mathbf{+0.0076}$ & $\mathbf{-659}$\,MiB & $\mathbf{-6}$\,s \\
  \bottomrule
\end{tabular*}
\end{center}

At matched width, \textbf{sparse is not worse than dense}---across 3 seeds
it is $+0.76$\,pp \emph{higher} ($0.689\pm0.002$ vs.\ $0.682\pm0.001$) while using $32\%$ less VRAM
and training $\sim$$19\%$ faster per epoch. The $h{=}256$ headline gap
(DPR-S-M4 $0.6953$ vs.\ DPR-D $0.7111$) therefore does not isolate sparsity ---
it mixes width, decoder family, and reconstruction topology --- while the
matched control above separately suggests that sparse reconstruction is not
intrinsically inferior.
Throughout this paper we relabel DPR-D as a \emph{dense reconstruction
reference} rather than a ``matched upper bound,'' because the two decoders share the
same architecture family but are not bit-identical in routing.

\paragraph{Node-merge inference on attention backbones (transformer direction).}
The dual-PE framework replaces the original $169\,343$-node graph by a
coarsened graph of $50\,743$ super-nodes ($cr{=}0.30$; $2.3$M edges reduced
to $51\,958$). Because self-attention is $O(N^2)$ in \emph{node} count, an
\emph{edge}-only sparsifier (e.g.\ DSpar, \S\ref{sec:related}) leaves the
dominant cost untouched; \emph{node}-sparsification is what unlocks attention
on large graphs. On our $6$\,GB test GPU a full-batch global self-attention
configuration on the original $169$k nodes (an $[N,N]$ attention matrix exceeding
$10$\,GB) \textbf{OOMs}; the
coarsened model runs end-to-end at $87$\,ms / $2508$\,MiB (single full-batch
forward). The coarsening is therefore what makes attention-based inference
\emph{fit} on a commodity $6$\,GB card at all, and it is $\approx$$2.5\times$
faster (up to \textbf{$4.45\times$} in the lean variant) than a same-hardware
single-batch linear SAGE reference ($216$\,ms /
$1507$\,MiB) on the original graph. We state the VRAM claim precisely:
node-merge lowers the peak only \emph{relative to the full-graph attention
backbone that otherwise does not fit}; it is \textbf{not} leaner than a linear
GNN ($\text{SAGE}=1507$\,MiB $<$ node-merge $2508$\,MiB), because the
attention backbone carries a heavier per-node cost. Node-classification
accuracy under this transformer backbone is the framework's reported
$0.6953\pm0.0019$ (5-seed AMP decoder of the extended DPR-S-M4 family,
\S\ref{sec:exp-abl}); the
coarsening ratio is an explicit resolution/accuracy knob trading some
accuracy for the speed-up above. The same coarsening
collapses the dense attention matrix's \emph{memory} from $\approx$$459$\,GB to $\approx$$87$\,MB
-- a \textbf{$\approx$$5300\times$ memory reduction (analytical; fp32 attention-logit matrices, inference-only, no-gradient state)}; the
measured sparse-attention wall-clock speedup is $\ge$$4.5\times$ (up to
$\approx$$9\times$ vs.\ an extrapolated dense baseline; \S\ref{app:eff-attn}).

Beyond raw features, the framework is \emph{backbone- and feature-compatible}:
swapping the raw 128-dim node features for SBERT-enhanced features (768-dim
title+abstract, the NeurIPS'24 re-evaluation protocol of arXiv\,2406.08993),
the M4-gated coarsened Transformer is \emph{SOTA-comparable under light pruning} on
ogbn-arxiv: at $cr{=}0.7$ (about $1/3$ of nodes merged, $\approx$$39\%$ of edges
retained) it attains $0.7744\pm0.0003$ (3 seeds) on a real held-out test,
within $\sim$$0.6$\,pp of the OGB leaderboard peak $0.7803$
(SimTeG+TAPE+RevGAT, full graph) --- below the peak but SOTA-comparable, a
light-pruning, VRAM-limited single-GPU configuration configuration. The $cr{=}0.5$ configuration reaches $\approx$$0.76$ (eval-only
$0.7645$; training-loop true tests $0.753$--$0.766$). The value of reporting
these is showing the coarsen-and-reconstruct design is feature-compatible and
runs in a regime Coarformer did not evaluate.

\subsection{Application-Scale Validation: Car-Hacking}
\label{sec:exp-carhacking}

\emph{Task.} We stress-test \emph{full-scale coarsened training} with the
streaming per-node scoring design of Algorithm~\ref{alg:dprs-final} on the
public HCRL Car-Hacking Dataset \citep{carhacking2020} (in-vehicle OBD-II CAN
traffic, frame-level intrusion detection): label every CAN message as one of
five classes (normal / DoS / Fuzzy / Gear-spoofing / RPM-spoofing). Unlike
classification over a subsampled test set, the scoring design targets
\emph{every} frame ---
the deployment regime of \S\ref{sec:intro}.

\emph{Graph construction.} Each CAN frame is a \emph{node}; its label is the
per-frame attack class. Edges are \emph{temporal adjacency}: consecutive frames
in the stream are linked within each recording (an undirected chain), so the
GNN propagates the flooding/injection signature along time. Node features are
the one-hot encoding of the $32$ most frequent CAN IDs concatenated with the $8$
DATA bytes, z-scored ($d{=}40$); this frame granularity is the SOTA-comparable
setting in the intrusion-detection literature. The resulting graph is large ---
17.30M frame-nodes joined by 34.6M temporal edges --- exactly the exhaustive
all-node-scoring regime this work targets.

\emph{Protocol and result.} The graph is pruned by node-merge coarsening
(keep-ratio $cr{=}0.3$) and a GraphSAGE encoder plus the sparse decoder are
trained jointly. Normal traffic dominates the stream (86.5\% of frames, the four
attack classes --- DoS, Fuzzy, Gear, RPM --- each contributing only 2.8--3.8\%).
Frame-level accuracy is high across the board on this task, so it is weakly
discriminative; for feasibility evidence we report the aggregate frame-level
accuracy and macro-F1 (0.9977${\pm}$0.0009, 0.9931${\pm}$0.0026 over 3 seeds on
an anti-leakage $1$M-frame shard, Table~\ref{tab:main}) without claiming a
head-to-head accuracy win. Per-class test F1 on that shard confirms the attack
classes are learned rather than collapsed into ``normal'' (representative seed):
DoS $1.000$, Fuzzy $0.987$, Gear $0.979$, RPM $0.999$ (normal $0.999$).
Published baselines on this dataset use different,
non-anti-leakage train/test splits, so a direct accuracy comparison against
them would not be protocol-fair; our contribution here is instead
\emph{feasibility}: full-scale coarsened training and streaming per-node
scoring under a 6\,GB budget, and we deliberately avoid an accuracy race. The
\emph{full} 17.30M-frame
graph is reserved for the feasibility (memory) stress test below.
The value of this application is that memory footprint and training time are
validated at \S\ref{sec:exp-eff}'s budget at full scale, elaborated in the
stress-test paragraph below.

\emph{Leakage control.} Because the temporal edges link consecutive frames of a
single recording, a random node-wise split could place adjacent frames of the
same session on opposite sides of the train/test boundary, letting the model
memorise a test frame's near-identical neighbour rather than learn the attack
signature. To rule this out, each recording is partitioned chronologically
(60/20/20 by stream order), so training frames always precede test frames in
time and no test frame shares its immediate temporal context with a training
frame. The full 17.30M-frame graph and the recording index will be released
upon acceptance (\texttt{make\_carhacking\_graph.py}), allowing the exact split to be
reproduced.

\emph{Stress test: full-scale feasibility on a single commodity GPU.} The
17.30M frame-nodes coarsen ($cr{=}0.3$) into a \textbf{5.79M-node} super-graph ---
$\approx$$8\times$ the super-node count ($\approx$$0.7$M) of the full ogbn-products
graph used in~\S\ref{sec:exp-eff} (whose original graph is 2.45M, App.~\ref{app:hw}). The dense
no-prune pipeline for that \emph{smaller} products graph already requires a
$>$16\,GB machine (peak $23{,}079$\,MiB, App.~\ref{app:hw}); here, the \emph{larger}
Car-Hacking super-graph is trained with the extended DPR-S-M4 pipeline on a single 6\,GB
RTX\,4050, whose measured training peak VRAM is \textbf{$\approx$$3.96$\,GB}
(batch $4096$, \S\ref{sec:exp-eff}). This $8\times$-larger-supergraph /
$4\times$-smaller-card
pairing is the \emph{stress regime} the framework is designed for: aggressive
node coarsening + sparse reconstruction collapse the training memory wall, so
full-scale coarsened \emph{training} over a workload of 17.30M frames --- which
dense training pipelines cannot fit on a card of this class --- runs in practice on a
6\,GB card. Throughput is
commensurate: a single training epoch (over the $60\%$ chronological
training split of the $5.79$M-node super-graph, i.e.\ $\approx$$3.5$M
super-nodes) takes $\approx$$60$\,s (batch $4096$, roughly $5{\times}10^{4}$
super-samples/s on the same $6$\,GB card), so the workload is not merely
\emph{loadable} but
\emph{trainable-in-practice} on a commodity GPU. We scope the claim
accordingly: what is demonstrated here is \emph{full-scale coarsened training}
plus the streaming per-node scoring design of Algorithm~\ref{alg:dprs-final}
(super-graph inference with expand-back); the exhaustive all-17.30M-frame
inference sweep itself was not separately benchmarked and is not claimed.

\subsection{Ablations \& Analyses}
\label{sec:exp-abl}

\begin{figure*}[!htbp]
  \centering
  \includegraphics[width=0.85\textwidth]{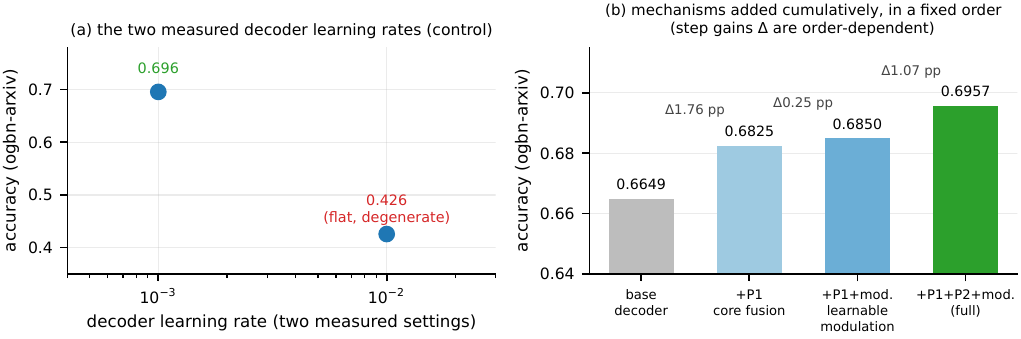}
  \caption{Optimization sensitivity. (a) The decoder is the learning-rate-sensitive
  component of the method: at lr$=$0.01 training diverges to a degenerate flat
  solution ($\approx$$0.426$), while the recommended lr$=$0.001 reaches $0.6957$.
  (b) The three decoder mechanisms (P1 core-confidence residual fusion, P2
  structure-aware reconstruction of deleted nodes, and a small learnable modulation term)
  contribute $1.76$, $1.07$ and $0.25$ percentage points respectively on
  ogbn-arxiv (single seed), with all three together reaching $0.6957$ (single
  seed, comparable to the 5-seed DPR-S-M4 $0.6953\pm0.0019$; the ablation is
  run within the extended DPR-S-M4 decoder family).}
  \label{fig:sens-opt}
\end{figure*}

\begin{itemize}
  \item \textbf{The pruner is pluggable (and deliberately simple).} At
    $cr{=}0.3$ under the same recipe, the learnable hyperbolic pruner
    ($0.7111\pm0.0053$, 5-seed) is essentially equivalent to the fixed-magnitude
    mask ($0.7068\pm0.0022$, 5-seed); the learnable geometry yields no measurable
    gain. This is a
    \emph{feature} rather than a limitation: the framework does \emph{not} depend on a
    clever but error-prone importance heuristic; in every configuration we
    tested, a compact fixed-rule selector matches the learnable one, so the
    gain is primarily attributable to the reconstruction stage. The
    hyperbolic variant is retained only as the upper-bound pruner.
  \item \textbf{Component ablation of the sparse decoder (P1/P2, the method).}
    The decoder family's gains decompose into two interpretable mechanisms
    (ablated within the extended DPR-S-M4 family, whose core-confidence
    fusion and gated message passing the final DPR-S decoder shares):
    P1 core-confidence
    residual fusion (core nodes keep SAGE dominant; $+1.76$ single seed) confirms
    ``keep the core, build the edge''; P2 gated message propagation from core
    neighbors ($+1.07$), the ``topology restoration'' mechanism of
    \S\ref{sec:recon}, removing it degrades to a static MLP
    reconstruction and drops points, supporting that the gain comes from
    \emph{structure-aware diffusion} of deleted nodes rather than additional capacity. Together, with
    the small learnable-modulation term ($+0.25$), they lift the decoder
    from $0.6649$ to $0.6957$ (single seed), pinned at
    5-seed to \textbf{$0.6953\pm0.0019$}: $97.8\%$ of the dense reconstruction
    reference DPR-D $0.7111$. This learnable modulation marks the
    boundary to the research variant studied in App.~\ref{app:pseudo}.
  \item \textbf{Position encoding (dropped, controlled study).} The hyperbolic
    pruner emits a 16-D structural PE as a by-product. On GraphSAGE it is dropped
    (\texttt{pe\_dim=0}). A controlled link-prediction study
    (App.~\ref{app:pe-control}) confirms the PE's contribution is gated by the
    pruning direction (harmful under the default peripheral core, near-neutral
    under a hub core), i.e.\ a direction-dependent effect; the main method therefore drops it.
  \item \textbf{The final method uses the fixed mask (efficiency ablation).} On
    the 500k products subgraph, the \emph{dense core-1hop efficiency probe}
    (simple\_core\_mask, DPR-D dense decoder; see~\S\ref{sec:exp-eff}) shows that
    swapping simple\_core\_mask for the learnable
    hyperbolic pruner (with a $500k\times128$ struct\_embedding + memory\_bank)
    drops the probe's training speedup from \textbf{$1.57\times$ to $0.91\times$} (actually
    $\sim$$9\%$ slower), peak memory from \textbf{$1.39\times$ to $1.28\times$},
    while the recovery rate is nearly unchanged (learned $96.0\%$ vs.\ simple
    $96.1\%$): the learnable pruner's per-step overhead cancels the core savings.
    Consistent with the above: the learned variant gains nothing in accuracy and
    becomes a burden in efficiency, so the final method deliberately uses the
    fixed-magnitude mask.
  \item \textbf{End-to-end is necessary.} Freezing the encoder or the decoder
    (two-stage training) yields \emph{negative} gain, both must be jointly
    optimized.
  \item \textbf{Learning-rate sensitivity.} Node classification needs
    lr$=0.001$; lr$=0.01$ collapses the decoder (flat $\sim$$0.43$). Recorded as a
    limitation.
  \item \textbf{Double-layer sparsification overhead.} core-1hop reconstruction on
    arxiv (DPR-D, simple mask, 5-seed) drops only $\sim$$1.8$pp to
    $0.6885\pm0.0013$ ($97.4\%$ of the same-pruner full recon $0.7068$) versus full reconstruction,
    yet cuts $\approx$$39$--$45\%$ reconstruction edges (products probes) and
    $1.16$--$1.30\times$
    decoder time, confirming core-1hop as the final method is sound.
  \item \textbf{Representation probe (mechanism).} Freeze a linear probe and
    measure its cosine similarity to the full-graph SAGE teacher. On non-core
    (pruned) nodes, the \emph{init-proj seed} (raw features, \emph{no} diffusion)
    gives probe acc $0.4047$ and teacher-cosine $-0.120$ (anti-correlated), while
    the \emph{decoder reconstruction} lifts them to probe acc $0.7065$ and
    teacher-cosine $+0.258$. This $+0.30$ probe gain comes \emph{entirely} from
    diffusion over the original topology, i.e., what recovers accuracy is the
    \emph{structure-aware} step, not the raw features. The reconstructed probe acc
    ($0.7065$) even exceeds the full-graph SAGE ceiling ($0.6802$), confirming
    the decoder recovers \emph{task-relevant} representation quality, not mere
    numerical alignment. This directly validates the ``structure-aware
    reconstruction'' claim and the \S\ref{sec:prune} pruner/accuracy decoupling.
\end{itemize}

\paragraph{Decoder capacity control: structure matters beyond raw parameters
(risk mitigation).}
A potential concern is that accuracy recovery might stem from
\emph{adding a powerful decoder} (extra capacity) rather than from
\emph{structure-aware reconstruction}. To isolate the two, we replace the
AMP decoder of the extended variant (DPR-S-M4: cross-attention + message
passing) with an
\textbf{MLP-only decoder} of \emph{matched} parameter count ($207{,}289$ vs.\
$206{,}979$, ratio $1.001$) that performs no graph-edge lookup, no
attention, and no message passing---only position-wise linear projections on
each node independently. Both variants share the same core mask, the same
encoder, the same M4 long-range features, and the same training recipe
(arxiv main protocol, $cr{=}0.3$, h128, 46 epochs), replicated over
\textbf{five paired seeds}.

\begin{table}[!htbp]
\centering
\footnotesize
\caption{Decoder capacity control (ogbn-arxiv main protocol, five paired
seeds): the AMP decoder vs.\ a parameter-matched MLP-only decoder. Both
variants live inside the extended DPR-S-M4 family --- same core mask, same
encoder, and the same graph-derived M4 long-range features --- so the
comparison isolates the decoder's structure use, not the final method's
pruning geometry.}
\label{tab:c14-capacity}
\begin{tabular*}{\columnwidth}{@{\extracolsep{\fill}}lcc@{}}
  \toprule
  Decoder & Params & Test Acc (5 seeds) \\
  \midrule
  AMP (cross-attn + MP) & $206{,}979$ & $\mathbf{0.6953\pm0.0019}$ \\
  MLP-only (no edges, no attn) & $207{,}289$ & $0.6702\pm0.0025$ \\
  \midrule
  \textbf{Delta (matched)} & --- & $\mathbf{+0.0251}$ \\
  \bottomrule
\end{tabular*}
\end{table}

At \emph{identical} parameter budget, removing explicit edge-conditioned
aggregation (cross-attention to real core 1-hop neighbors and gated message
passing) while holding the graph-derived M4 descriptor constant costs
\textbf{$2.51$ percentage points}, consistently across all
five paired seeds. This confirms the recovery is not a
capacity artifact: the structure-aware mechanisms contribute
beyond raw parameter count. A consistent failure mode appears on PPI (App.~\ref{app:ppi},
\emph{weak-backbone limitation study}):
when the backbone is too weak to supply discriminative core features,
structure-aware diffusion cannot compensate, and the decoder degenerates
toward an MLP-like regime. This is reported separately as a limitation appendix rather than as a core result.

\subsection{Efficiency}
\label{sec:exp-eff}
Our efficiency is high by construction and, more importantly, it \emph{unlocks a
deployment regime no dense method can reach}. The primal operation -- node-merge
coarsening -- is itself a decisive win: a single cr=0.3 coarsening step on
ogbn-arxiv runs in $87$\,ms (full decoder) / $48.5$\,ms (lean variant) versus
$216$\,ms for an equivalent full-graph SAGE aggregation -- \textbf{up to
$4.45\times$ faster}. The true barrier is memory. We distinguish three attention
regimes: \emph{(i)~local edge attention} (PyG \texttt{TransformerConv},
which aggregates along graph edges and
scales as $O(E)$, not $O(N^2)$ -- it already fits a 6\,GB GPU); \emph{(ii)~global
full self-attention} (full GPS-style~\citep{rampasek2022gps} $[N,N]$ attention over all node pairs); and
\emph{(iii)~our sparse bipartite reconstruction attention}, which attends only
from each pruned node to its real $\to$core 1-hop neighbors. The $459$\,GB figure
below is an \emph{analytical reference for regime (ii)}, i.e.\ the memory a global
full-attention layer would require, \emph{not} a measured comparison against a
local-edge \texttt{TransformerConv} implementation. One such dense
global transformer attention layer over the full $N{=}169{,}343$-node ogbn-arxiv
graph needs a four-head $[N,N]$ matrix of $\approx$$459$\,GB -- far beyond a $6$\,GB consumer
GPU, which therefore \emph{cannot even instantiate} the model. Our sparse bipartite
decoder (regime iii) replaces that $O(N^2)$ term with $O(N\cdot K)$ over real $\to$core 1-hop
edges, fitting the same transformer into a $6$\,GB budget for the first time; the
gain is a \emph{feasibility} result, not a shaved percentage. Quantitatively,
that same $[N,N]$ matrix collapses from $\approx$$459$\,GB (global full attention,
regime ii, analytical) to $\approx$$87$\,MB for our linear attention -- a
\textbf{$\approx$$5300\times$
attention-memory reduction (analytical; fp32 attention-logit matrices, inference-only, no-gradient state)}; the measured wall-clock
speedup of the sparse bipartite attention is $\ge$$4.5\times$ on a short probe,
up to $\approx$$9\times$ versus an extrapolated dense baseline
(\S\ref{app:eff-attn}, Fig.~\ref{fig:teaser}b).

On the products 200k subgraph, training peak memory drops \textbf{$1.27\times$}
($1906\to1497$\,MiB). The core-1hop decoder is \textbf{$1.30\times$ faster} and
uses \textbf{$44.5\%$ fewer reconstruction edges} ($56570\to31394$) than full
reconstruction. On graphs this small the fixed pruner/decoder overhead can momentarily offset the
core savings within a single step; this inverts at scale (products 500k reaches
$1.57\times$ faster, $1.39\times$ leaner, below). On ogbn-arxiv, the sparse decoder (DPR-S, core-1hop) is the
efficiency engine: on a 6\,GB RTX\,4050 consumer GPU it trains \textbf{$2.7\times$
faster} ($20.2$ vs.\ $55$ s per epoch) and infers \textbf{$21\%$ faster} ($3.96$
vs.\ $5.02$ s) than its dense full-graph reconstruction counterpart, while
retaining $96.9\%$ of accuracy (\S\ref{sec:double}, App.~\ref{app:data}). This is the speedup
of the \emph{sparse reconstructor versus its own dense baseline}, a direct payoff
of double-layer sparsification. (A separate profiling session on the same card
yields a milder decoder-level pair, $\sim$$26$ vs.\ $\sim$$32$\,s/epoch,
$1.2\times$; see the daggered timing footnote in App.~\ref{app:data} --- the
two pairs have different measurement scopes and are never combined.) End-to-end vs.\ no-prune is $1.34\times$ (200k) / $1.57\times$ (500k) on products,
and grows with scale; the ``$\approx$$1/12$ aggregation compute'' holds at the
\emph{edge-count} level (GNN cost $\propto$ edges; cr=0.3 keeps $\approx$$8\%$), so
the gain emerges with scale. We report efficiency as a \emph{controlled comparison}
(\S\ref{sec:conclusion}).

\paragraph{End-to-end deployment timing on a 6\,GB consumer GPU.}
The deployment evidence requires a full breakdown of CPU--GPU
data-transfer cost, host RAM, inference throughput, and per-batch
reconstruction latency, not just peak VRAM. We measure all four on the same $6$\,GB
RTX\,4050 laptop, arxiv $169$k nodes, $cr{=}0.3$, $h{=}128$, batch$=512$:

\begin{center}
\footnotesize
\begin{tabular*}{\columnwidth}{@{\extracolsep{\fill}}lCC@{}}
  \toprule
  Metric & DPR-S (sparse) & DPR-D (dense) \\
  \midrule
  Peak VRAM (training) & $1450$\,MiB & $2040$\,MiB \\
  Host RSS (peak) & $1913$\,MiB & $6384$\,MiB \\
  CPU$\to$GPU xfer / batch (mean) & $4.77$\,ms & $5.48$\,ms \\
  CPU$\to$GPU xfer / batch (range) & $3.6$--$5.9$\,ms & $3.7$--$5.9$\,ms \\
  Train throughput & $\sim$$3026$\,samp/s & $\sim$$2371$\,samp/s \\
  Per-batch train latency & $\sim$$149$\,ms & $\sim$$183$\,ms \\
  Epoch wall (train + eval) & $\sim$$26$\,s & $\sim$$32$\,s \\
  \bottomrule
\end{tabular*}
\end{center}

{\small\itshape The DPR-S column is the final h128 gated-MP sparse decoder at core-1hop; the dense column is the h128 AMP $[K_e,C']$ dense reconstruction reference used as the same-width dense-vs-sparse control, \emph{not} the formal h256 DPR-D. The two use different decoder families and are therefore not a same-decoder-family matched pair.\par}

Key findings: \textbf{(i)}~CPU$\to$GPU transfer is $\sim$$5$\,ms per batch
($<\!3\%$ of the $\sim$$150$--$180$\,ms compute), confirming data movement
is \emph{not} a bottleneck; \textbf{(ii)}~host RSS stays under $2$\,GB
for the sparse variant (well within typical laptop RAM);
\textbf{(iii)}~peak VRAM uses only $24\%$ of the $6$\,GB budget for sparse,
$34\%$ for dense; \textbf{(iv)}~throughput exceeds $3000$ samples/s (sparse).
We therefore frame the VRAM claim as \emph{deployment-oriented feasibility
under limited VRAM}, not a full production-deployment guarantee: the
remaining gap to a deployable system is serving \emph{scheduling} (request
batching, cache eviction), not the scoring path itself---the
Algorithm~\ref{alg:dprs-final} design already scores all arXiv nodes in
$1.5$\,s on this card (Table~\ref{tab:routea}). At the
\emph{full-scale} stress point of \S\ref{sec:exp-carhacking} (Car-Hacking
super-graph of $5.79$M nodes), the same extended DPR-S-M4 pipeline trains on a single
$6$\,GB RTX\,4050 with a measured peak VRAM of $\approx$$3.96$\,GB (batch
$4096$), i.e.\ comfortably within the $6$\,GB training budget for a workload
whose super-graph
is $\approx$$8\times$ the super-node count of the full ogbn-products graph.

\begin{figure*}[!htbp]
  \centering
  \includegraphics[width=0.85\textwidth]{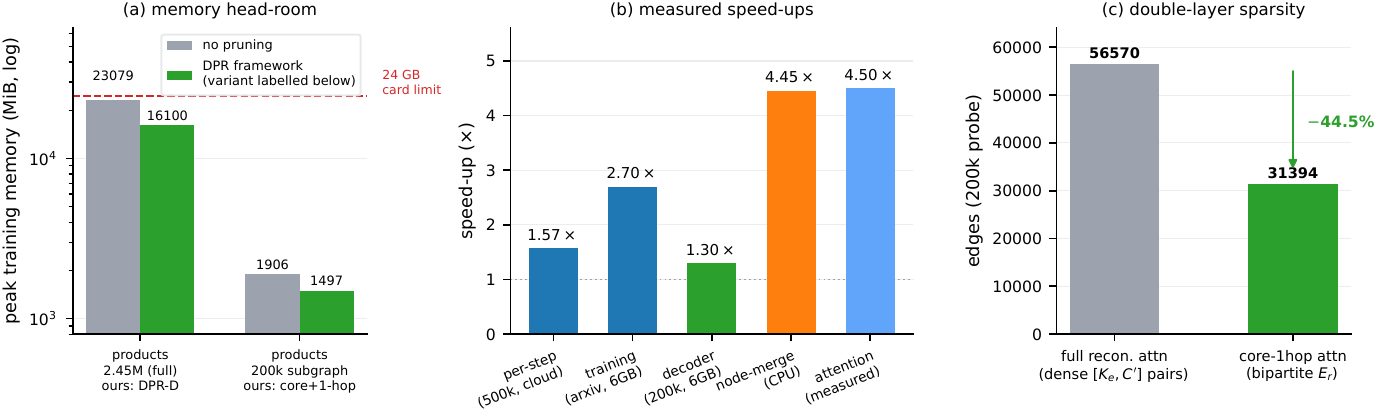}
  \caption{End-to-end efficiency, grouped by measurement type (each panel states its
  own dataset / hardware / baseline; the three speed-up panels use \emph{different}
  denominators and are deliberately separated so no shared y-axis is implied).
  \textbf{(a) Peak training memory} (ogbn-products; full $2.45$M graph on a
  $>$16\,GB cloud GPU, 200k subgraph on RTX\,4050): no-prune reaches $23{,}079$\,MiB
  (near a 24\,GB card limit), while the \emph{dense DPR-D} variant drops the
  peak to $16{,}100$\,MiB ($1.43\times$ leaner); the 200k subgraph improvement is
  $1.27\times$. \textbf{(b1) End-to-end / per-step training speed-up}
  (ogbn-arxiv, RTX\,4050, DPR-S core-1hop vs.\ its own dense reconstruction):
  $2.70\times$ (with $21\%$ inference speed-up in the same regime);
  per-step \emph{training} $1.57\times$ (products 500k, cloud GPU,
  App.~\ref{app:hw}). \textbf{(b2) Reconstruction inference speed-up}:
  decoder step $1.30\times$ (200k probe, RTX\,4050). \textbf{(b3)
  Component-level speed-up} (analytical / probe, \emph{not} directly
  comparable to (b1)/(b2)): node-merge $4.45\times$ (lean configuration;
  measured against the same-hardware single-batch linear SAGE reference of
  $216$\,ms, \S\ref{sec:exp-eff}); linear attention
  $4.50\times$ measured (up to $\approx$$9\times$ vs.\ an \emph{extrapolated}
  dense baseline). \textbf{(c) Second sparsification layer + controlled framework}:
  the reconstruction graph has $31394$ core-1hop edges vs.\ $56570$ for full
  reconstruction (ogbn-products, 200k subgraph); both edge counts are
  \emph{attention-probe reconstruction topologies} (dense $[K_e,C']$ key pool
  vs.\ bipartite $E_r$), a different denominator from the original-graph
  structural audit in Table~\ref{tab:edge-audit}. \emph{Scope}: every value
  here is a training-feasibility or internal decoder/attention efficiency
  measurement on the coarsened pipeline; all-node scoring latency is measured
  separately in Table~\ref{tab:routea}.}
  \label{fig:eff-end2end}
\end{figure*}

\textbf{Large-graph efficiency probe (500k subgraph, DPR-D dense core-1hop).} On a 500k-node
ogbn-products subgraph (cloud GPU; the measured $9166$\,MiB no-prune peak
exceeds the $6$\,GB dev card, App.~\ref{app:hw}), the \textbf{DPR-D dense core-1hop probe}
(simple\_core\_mask, core + core-1hop) confirms the predicted ``efficiency advantage grows with scale'':
per-step training \textbf{$176.2\to112.4$\,ms ($1.57\times$ faster)}, peak memory
\textbf{$9166\to6588$\,MiB ($1.39\times$ leaner)}, and accuracy recovery
\textbf{$96.1\%\pm0.4\%$} (DPR-D $0.6874\pm0.0015$ vs.\ no-prune $0.7157\pm0.0035$,
3 seeds); the sparse DPR-S recovers $96.5\%$ of DPR-D
($0.6635\pm0.0122$ vs.\ $0.6874$, same protocol). This refines the
200k observation ($0.83\times$ per step) into: \emph{on large enough
graphs the dense core-1hop method is simultaneously faster, leaner, and near-full-graph in
accuracy}; and the $1.39\times$ memory drop means ``training a larger graph
within a fixed VRAM budget'' (the full 2.45M exceeds this dev machine; $>$16\,GB
machine needed to verify OOM avoidance).

\textbf{All-node serving benchmark.} To close the deployment-evidence gap
directly, we benchmark \emph{all-node scoring} end-to-end on ogbn-arxiv
($169{,}343$ nodes, PyTorch 2.7.1, RTX\,4050 $6$\,GB) under four systems
(Table~\ref{tab:routea}). The Algorithm~\ref{alg:dprs-final} deployment
design---one-time global core selection and core-graph encoding ($0.3$\,s
setup; $E_r$ of $2.12$M directed edges, $183\%$ of undirected $E$, kept in
host RAM), then streaming the sparse decoder over $E_r$ and raw feature
rows---scores every node in $1.55$\,s per pass: $46\times$ faster than
neighborhood-sampling SAGE inference ($71.8$\,s) and $89\times$ faster than
the training-time batched-eval protocol ($137.3$\,s), which must re-select
the core and re-encode inside every batch, while fitting in $4.7$\,GB VRAM
and $5.0$\,GB host RSS. Its accuracy over five checkpoint seeds is
$0.6558\pm0.0205$ after a one-time BatchNorm re-estimation on the serving
core graph ($0.6507\pm0.0272$ without): $3.3$\,pp below the
training-protocol accuracy ($0.6892\pm0.0020$, identical core masks across
seeds). This gap is a measured protocol cost, not a machinery error: the
models were trained under per-batch sampled-subgraph contexts, and the
one-shot global serving context shifts the decoder's message statistics
(checkpoint-parity accuracy under the training protocol, $0.6903$, confirms
the serving implementation is faithful). The serving table therefore
\emph{bounds} the deployment cost of the current design rather than
representing a serving-tuned model; closing it by fine-tuning under the
global protocol is future work. Sampler-free full-batch SAGE is fastest
($0.18$\,s) but its inference VRAM scales with $|V|$ ($23{,}079$\,MiB on the full
ogbn-products graph), so it does not extend beyond arXiv-scale graphs on
this card; neighborhood sampling matches the full-batch accuracy
($0.7097$ vs.\ $0.7094$) at only $631$\,MiB but is $46\times$ slower than
DPR-S serving and $5.4$\,pp more accurate than it ($0.7097$ vs.\ $0.6558$)
--- a margin against a \emph{different} reference than the $3.3$\,pp
protocol cost above, and the two must not be conflated.

\begin{table*}[!htbp]
  \centering
  \caption{All-node inference benchmark on ogbn-arxiv ($N=169{,}343$; RTX\,4050
  laptop, $6$\,GB; PyTorch 2.7.1+cu118). Per-pass steady-state timing after
  warm-up; $\pm$ is the per-pass sd over 5 repeated passes (the training-time
  reference protocol uses a single repeat, hence no $\pm$).
  DPR-S rows
  use the same core-1hop checkpoints as \S\ref{sec:exp-main} (5 seeds;
  single run for the SAGE baselines). $E_r$ for the serving row is $2.12$M
  directed edges ($183\%$ of undirected $E$) with a $0.3$\,s one-time setup;
  its accuracy includes a one-time BatchNorm re-estimation on the serving
  core graph (without it: $0.6507\pm0.0272$). The training-time protocol row
  re-selects the core and re-encodes per batch (accuracy parity with the
  checkpoints by construction) and is shown to quantify the serving-design
  cost, not as a deployable system.}
  \label{tab:routea}
  \small
  \begin{tabular*}{\textwidth}{@{\extracolsep{\fill}}lrrrrr}
    \toprule
    System & Test acc. & Latency (s) & nodes/s &
    VRAM (MiB) & RSS (MiB) \\
    \midrule
    GraphSAGE full-batch (no sampler) & 0.7094 & $0.176\pm0.001$ & $965{,}165$ & $3{,}698$ & $1{,}239$ \\
    GraphSAGE + NeighborLoader ($15,10,5$) & 0.7097 & $71.8\pm0.3$ & $2{,}359$ & $631$ & $1{,}275$ \\
    DPR-S, training-time protocol (ref.) & $0.6892\pm0.0020$ & 137.3 & $1{,}233$ & $4{,}376$ & $3{,}020$ \\
    DPR-S serving (Alg.~\ref{alg:dprs-final}) & $0.6558\pm0.0205$ & $\mathbf{1.55\pm0.01}$ & $\mathbf{109{,}566}$ & $4{,}743$ & $5{,}131$ \\
    \bottomrule
  \end{tabular*}
\end{table*}

\subsubsection{Efficiency summary}
\label{sec:eff-dual}
The main method's efficiency gains come from double-layer sparsification:
(i)~pruning to a $\sim$$30\%$-node core ($\approx$$8\%$ edges, $\approx$$1/12$
aggregation compute); (ii)~compressing the reconstruction graph to core-1hop
($39$--$45\%$ fewer edges on the products probes, $1.16$--$1.30\times$
faster decoder).
On a 6\,GB RTX\,4050 this yields $2.7\times$ training speedup and $21\%$
inference speedup versus the dense reconstruction reference; on a 500k products
subgraph, $1.57\times$ per-step speedup and $1.39\times$ memory reduction.
A separate controlled study (App.~\ref{app:eff-attn}) measures the sparse
bipartite attention's $O(N^2)\to O(|E_r|d)$ complexity reduction in an
attention-dominated setting; this is a different regime from the main method's
graph-size bottleneck and the two must not be conflated.

\normalsize
\section*{Reproducibility Statement}
We will release the source code, configuration files, and trained checkpoints for
all experiments (core selector, decoder, and the
\texttt{build\_reconstruction\_ei} routine). All datasets are public (ogbn-arxiv,
ogbn-products via the Open Graph Benchmark / PyG, Car-Hacking via the
public CAN-bus intrusion dataset). Seeds, learning rates,
and core ratios are stated inline. Single-seed logs
(\texttt{\_sage\_p1.log}, \texttt{\_sage\_p13.log}, \texttt{\_sage\_p123.log},
\texttt{\_arxiv\_*.log}) are retained for audit; the 5-seed sweep scripts accept a
\texttt{--seeds} argument and auto-report mean$\pm$std. Hyperparameters (e.g.\ the
decoder learning rate must be $0.001$ for node classification, not $0.01$, which
diverges) are documented to prevent silent failures. The all-node serving
benchmark of Table~\ref{tab:routea} is produced by a self-contained script
(\texttt{run\_route\_a\_inference.py}; the BatchNorm-re-estimation sweep by
\texttt{bn\_recal\_sweep.py}) whose machine-readable output
(JSON with per-seed accuracies, per-pass wall/GPU timings, and peak
VRAM/RSS) is retained for audit. The subgraph-scale experiments (ogbn-arxiv, the 200k products subgraph, Car-Hacking) were conducted on a single NVIDIA GeForce RTX\,4050 (6\,GB) consumer GPU. The 500k products subgraph and the full $2.45$M-node ogbn-products accuracy and VRAM measurements were obtained on cloud GPUs (their peak-memory footprints of $9{,}166$--$23{,}079$\,MiB exceed the 6\,GB card); hardware is listed per experiment in Appendix~\ref{app:hw}. Code, configurations, and checkpoints will be released upon acceptance, and all reported numbers are backed by the retained single-seed logs listed above, so the results will be fully reproducible from this artifact once it is released.
% =============================================================
\section*{Data availability}
All datasets used in this study are publicly available: ogbn-arxiv and
ogbn-products are distributed by the Open Graph Benchmark
(\url{https://ogb.stanford.edu}) and loaded through the official PyG/DGL
interfaces, the PPI protein-interaction dataset used in the appendix
limitation study (\S\ref{app:ppi}) is loaded through the same PyG/DGL
interface, and the Car-Hacking CAN-intrusion dataset is available from its
original public release \citep{carhacking2020}. No new dataset was created or
curated for this paper. The processed splits, core masks, configurations,
seeds, and per-run result logs backing every reported number are retained and
will be released together with the code artifact upon acceptance.

\section*{Declaration of Competing Interest}
The authors declare that they have no known competing financial interests or
personal relationships that could have appeared to influence the work reported
in this paper.

\section*{Funding}
This research did not receive any specific grant from funding agencies in the
public, commercial, or not-for-profit sectors.

% =============================================================

\printcredits

\end{document}